\tikzset{%
mynode/.style={circle,minimum width=.5ex, fill=none,draw}, % no filling
myfillnode/.style={circle,minimum width=.5ex, fill=lightgray,draw}, % fill with black
}
\newcommand{\indep}{\perp \!\!\! \perp}
\newtheorem{theorem}{Theorem}[section]
\newtheorem{definition}{Definition}[section]
\newtheorem{assumption}{Assumption}[section]
\newtheorem{lemma}{Lemma}[section]
\newtheorem{corollary}{Corollary}[section]
\newtheorem{example}{Example}[section]
\newcommand{\jin}[1]{\textcolor{blue}{#1}}
\newcommand{\yuta}[1]{\textcolor{red}{#1}}
\def\defeq{\mathrel{\ensurestackMath{\stackon[1pt]{=}{\scriptscriptstyle\Delta}}}}
\title{Probabilities of Causation for Continuous and Vector Variables}
\author[1,2]{Yuta Kawakami}
\author[1]{Manabu Kuroki}
\author[2]{Jin Tian}
\affil[1]{%
Department of Mathematics, Physics, Electrical Engineering and Computer Science, Yokohama National University, Yokohama, Kanagawa, JAPAN
}
\affil[2]{%
Department of Computer Science, Iowa State University, Ames, Iowa, USA
}
\begin{document}
  
\maketitle

\begin{abstract}
{\it Probabilities of causation} (PoC) are valuable concepts for explainable artificial intelligence and practical decision-making. PoC are originally defined for scalar binary variables.   
In this paper, we extend the concept of PoC to continuous treatment and outcome variables, and further generalize PoC  to capture causal effects between multiple treatments and multiple outcomes. 
In addition, we consider PoC for a sub-population and PoC with multi-hypothetical terms to capture more sophisticated counterfactual information useful for decision-making. 
We provide a nonparametric identification theorem for each type of PoC we introduce. 
Finally, we illustrate the application of our results on a real-world dataset about education.
\end{abstract}

%Keywords: Probabilities of causation, Continuous variables, Vector variales
%TL;DR In this paper, we extend the concept of the probabilities of causation to multiple continuous treatments and outcomes. 

\section{Introduction}

{\it Probabilities of causation} (PoC) are a family of probabilities quantifying whether one event was the real cause of another in a given scenario \citep{Robins1989,Pearl1999,Tian2000,Pearl09,Kuroki2011,Dawid2014,Dawid2016,Dawid2017,Murtas2017,Hannart2018,Shingaki2021,Kawakami2023b}.
PoC are valuable quantities for decision-making \citep{Li2019,Li2022b} and  for explainable artificial intelligence (XAI) that aims to reduce the opaqueness of AI-based decision-making systems \citep{Galhotra2021,Watson2021}.
%PoC is important not only in artificial intelligence but also in meteorology \citep{Hannart2018} and in law \citep{Dawid2017b}.
\citet{Pearl1999} introduced three types of PoC over binary events, namely 
%There are three types of PoC regarding treatment ($X$) and outcome ($Y$), named 
the probability of necessity and sufficiency (PNS), the probability of necessity (PN), and the probability of sufficiency (PS). 
They are defined based on the joint probability distribution of two potential outcomes. 
\citet{Tian2000} provided the bounds of PNS, PN, and PS in terms of observational and experimental data and showed that PNS, PN, and PS are  identifiable under the assumptions of exogeneity and monotonicity.
The problem of bounding PoC was further extended in \citep{Li2019,Li2022b,Li2023,MuellerLiPearl}. However, all these works are restricted to binary treatment and outcome. 
More recently, \citet{Li2022,Li2022c} extended the problem of bounding PoC to multi-valued discrete treatment and outcome and provided  bounds for various variants of PoC.

In this paper, we aim to extend the concept of PoC to continuous treatment and outcome. 
There is considerable interest in continuous treatment and outcome  
%Recently, there has been a growing interest in a continuous treatment beyond a binary treatment 
in causal inference \citep{Imbens2004,Kennedy2017,Bahadori2022}, e.g., dose-response studies \citep{Wong1996,Emilien2000,Ivanova2008} and policy evaluations with continuous actions \citep{Kallus2018,Krishnamurthy2019,Majzoubi2020}.
%\citep{Tian2000} showed bounds of PNS, PN, and PS and gave the identification theorem of PNS, PN, and PS under assumptions of exogeneity and monotonicity. \citep{Li2022} extended their results to a multi-valued discrete treatment.
For instance, doctors want to know the dose-response relationship between the amount of insulin and the blood sugar level.

We provide a nonparametric identification theorem for each type of PoC we introduced. The identification of binary PoC relies on a monotonicity assumption \citep{Tian2000}. 
We generalize the monotonicity assumption over binary treatment and outcome to continuous settings.
We discuss the relationship of our proposed monotonicity assumption with another commonly used assumption in the causal inference literature - monotonicity over structural functions \citep{Heckman1999,Vytlacil2002,Heckman2005,Chernozhukov2005,Chernozhukov2007,Imbens2009}.
%, and rank preservation assumption \citep{Robins1989,Robins1991,Have2007,Vansteelandt2014,Bothmann2023,Hernan2023}. 

%\yuta{[Comment: I have deleted all rank preservation assumptions.]}

We further extend the concept of PoC to capture causal effects between multiple treatments and multiple outcomes, which are drawing growing interests \citep{Kang1990,Zhang1998,Sammel1999,Segal2011,Lee2012,Kennedy2019b,Rimal2019}.
For instance, \citet{Hannart2018} investigated causal links between anthropogenic forcings, e.g., greenhouse gases (carbon dioxide, methane, nitrous oxide, halocarbons) emission and deforestation, and the observed climate changes, e.g., spatial–temporal vector of  Earth surface temperature. 
They used a multivariate linear regression model with Gaussian noise to evaluate PoC. 

%In addition, \citep{Hannart2018} considered continuous and multiple treatments and outcomes under the parametric Gaussian setting. We discuss the identification assumptions of PoC applicable to both discrete and continuous cases in this paper.

We also introduce more complicated variants of PoC and provide identification theorems for them.
They include PoC for a sub-population with specific covariates information considered by \citep{Li2022b} and PoC with multi-hypothetical terms  studied by \citet{Li2022} for discrete treatment and outcome. These variants of PoC capture more sophisticated counterfactual information useful for decision-making.

Finally, we show an application of our results to a real-world dataset on education.

\section{Background and Notation}

%In this section, we show the backgrounds and notations in this paper.
We represent each variable with a capital letter $(X)$ and its realized value with a small letter $(x)$.
Let $\mathbb{I}(x)$ be an indicator function that takes $1$ if $x$ is true; and $0$ if $x$ is false.
Denote $\Omega_Y$ be the domain of $Y$,
$\mathbb{E}[Y]$ be the expectation of $Y$, $\mathbb{P}(Y\leq y)$ be the cumulative distribution function (CDF) of continuous variable $Y$, and $\mathbb{P}(Y)$ be the probability of discrete variable $Y$.
%In addition, let $\mathbb{P}(Y\leq y|X=x)$ and $\mathbb{P}(Y= y|X=x)$ be the conditional CDF and probability given $X=x$.
We denote $X \indep Y|C$ if $X$ and $Y$ are conditionally independent given $C$.

{\bf Total order over vector space.} 
We denote a total order on vectors of variables by $\prec$. 
For example, according to the lexicographical order \citep{Harzheim2005}, we order two dimensional vectors $(y_1, y_2) \prec_{\text{lexi}} (y'_1, y'_2)$ if ``$y_1 < y'_1$'', or ``$y_1 = y'_1$ and $y_2 < y'_2$''. 
A formal definition of the lexicographical order is given in Appendix~\ref{app0}.

{\bf Structural Causal Models (SCM).} We use the language of SCMs as our basic semantic and inferential framework \citep{Pearl09}.
An SCM ${\cal M}$ is a tuple $\left<{\boldsymbol V},{\boldsymbol U}, {\cal F}, \mathbb{P}_{\boldsymbol U} \right>$, where ${\boldsymbol U}$ is a set of exogenous (unobserved) variables following a  distribution $\mathbb{P}_{\boldsymbol U}$, and ${\boldsymbol V}$ is a set of endogenous (observable) variables whose values are determined by structural functions ${\cal F}=\{f_{V_i}\}_{V_i \in {\boldsymbol V}}$ such that $v_i:= f_{V_i}({\mathbf{pa}}_{V_i},{\boldsymbol u}_{V_i})$ where ${\mathbf{PA}}_{V_i} \subseteq {\boldsymbol V}$ and $U_{V_i} \subseteq {\boldsymbol U}$. 
Each SCM ${\cal M}$ induces an observational distribution $\mathbb{P}_{\boldsymbol V}$ over ${\boldsymbol V}$, and a causal graph $G({\cal M})$ over ${\boldsymbol V}$ in which there exists a directed edge from every variable in ${\mathbf{PA}}_{V_i}$ to $V_i$.
An intervention of setting a set of endogenous variables ${\boldsymbol X}$ to constants ${\boldsymbol x}$, denoted by $do({\boldsymbol x})$, replaces the original equations of ${\boldsymbol X}$
 by the constants ${\boldsymbol x}$ and induces a \textit{sub-model}  ${\cal M}_{{\boldsymbol x}}$.
We denote the potential outcome $Y$ under intervention $do({\boldsymbol x})$ by $Y_{{\boldsymbol x}}({\boldsymbol u})$, which is the solution of $Y$ in the sub-model ${\cal M}_{{\boldsymbol x}}$ given ${\boldsymbol U}={\boldsymbol u}$.

{\bf Probabilities of Causation (PoC).} 
%We show the definitions and an identification theorem of probabilities of causation with binary treatment and outcome.
Let $X$ be a binary treatment taking values $x_0$ and $x_1$, and $Y$ be a binary outcome taking values $y_0$ and $y_1$. PoC are defined as follows:
\begin{definition}[PoC]
\label{def1}
Probability of necessity and sufficiency (PNS), probability of necessity (PN), and probability of sufficiency (PS) are defined by   \citep{Pearl1999}:
    \begin{equation}
    \begin{aligned}
    &\text{PNS}\defeq\mathbb{P}(Y_{x_0}=y_0,Y_{x_1}=y_1),\\
    &\text{PN}\defeq\mathbb{P}(Y_{x_0}=y_0|Y=y_1,X=x_1),\\
   &\text{PS}\defeq\mathbb{P}(Y_{x_1}=y_1|Y=y_0,X=x_0).
        \end{aligned}
    \end{equation}
\end{definition}
\citet{Tian2000} show that these PoC are identified under the following assumptions.
%\yuta{They give two assumptions for identifying PNS, PN, and PS.
\begin{assumption}[Exogeneity]
\label{BEXO}
    $Y_{x_0} \indep X$ and $Y_{x_1} \indep X$.
\end{assumption}
\begin{assumption}[Monotonicity]
\label{BMONO}
    $\mathbb{P}(Y_{x_0}=y_1,Y_{x_1}=y_0)=0$.
\end{assumption}
%\citet{Tian2000} gave the following identification results:
%\begin{theorem}[Identification of PoC]
    %If $X$ is exogenous
    %, i.e., $Y_{x_0} \indep X$ and $Y_{x_1} \indep X$, 
    %$X \indep U$,
     %and $Y$ is monotonic relative to $X$,
     %, i.e., $\mathbb{P}(Y_{x_0}=y_1,Y_{x_1}=y_0)=0$,
     Under Assumptions \ref{BEXO} and \ref{BMONO},
     the PoC are identifiable by \citep{Tian2000}
    \begin{equation}
        \begin{aligned}
        &\text{PNS}=\mathbb{P}(Y=y_1|X=x_1)-\mathbb{P}(Y=y_1|X=x_0),\\
        &\text{PN}=\frac{\mathbb{P}(Y=y_1|X=x_1)-\mathbb{P}(Y=y_1|X=x_0)}{\mathbb{P}(Y=y_1|X=x_1)},\\
        &\text{PS}=\frac{\mathbb{P}(Y=y_1|X=x_1)-\mathbb{P}(Y=y_1|X=x_0)}{\mathbb{P}(Y=y_0|X=x_0)}.
        \end{aligned}
    \end{equation}
%\end{theorem}

%\jin{Move the whole Orders part to the appendix.}\\

%\section{Scalar Structural Causal Model}
\section{PoC for Scalar Continuous Variables}
\label{sec3}
For ease of understanding, we will start with a single treatment variable $X$ and a single outcome $Y$. 
We extend binary PoC  for continuous variables, extend the monotonicity Assumption~\ref{BMONO} to continuous settings, and provide an identification theorem.

%In this section, we set up the problem of PoC with scalar $Y$ and $X$ using a scalar structural causal model and give an identification theorem of PNS, PN and PS.

%\subsection{Problem Setup}
\subsection{PoC definition}

%{\bf Scalar Structural Causal Model.}

%\jin{I wonder whether it's necessary to define such an SCM. It implies strong Exogeneity, maybe you just need weak Exogeneity? The original PNS is identified either under strong Exogeneity or weak Exogeneity + Monotonicity. Is the newly defined PNS identifiable under strong Exogeneity alone? }

Let $X$ be a continuous or discrete treatment variable, and $Y$ be a continuous or discrete outcome variable.
We assume the following SCM ${\cal M}_{S}$: 
\begin{eqnarray}
    Y:=f_Y(X,U),\  X:=f_X(\epsilon_X), 
    %\ \text{where}\ \epsilon_X \indep U.
\end{eqnarray}
%The functions $f_Y$ and $f_X$ are  real-valued functions. Potential outcome $Y_x(U)$ is defined by $f_Y(x,U)$ for any $x \in \Omega_X$.
where $U$ and $\epsilon_X$ are latent exogenous variables. %, and $\epsilon_X$ is irrelevant to potential outcome $Y_x(U)$.
%We name ${\cal M}_{S}$ scalar SCM.
%$U$ satisfies the following condition:
%\begin{assumption}[Absolute Completeness]
%\label{TOT}
%The probability distribution of $U$ is absolutely continuous.
    %$\Omega_U$ is totally ordered set, and 
%    $\sup_{u \in \Omega_U}\mathfrak{p}(u)<0$.
%\end{assumption}
%Assumption \ref{TOT} means CDF of $U$ is continuous and guarantees the existence of PDF of $U$.
%Assumption \ref{TOT} also means the probability of two subject have the same value of $U$ is zero, i.e., $\mathbb{P}(u_0=u_1)=0$ for any $u_0, u_1 \in \Omega_U$.
%$\mathfrak{p}(u)$ means the probability density function of $U$.
%\citet{Heckman1999,Heckman2005} also make this assumption, and consider $U$ is a uniform distribution on $[0,1]$.
%
%The domains of $X,Y,U$, $\Omega_X, \Omega_Y, \Omega_U$, are subsets of $\mathbb{N}, \mathbb{Z}$ or $\mathbb{R}$.

%Under SCM ${\cal M}_S$, 
%the potential outcome $Y_x$ is given as $f_{Y}(x,U)$, and 
%we assume
%\begin{assumption}[Exogeneity]
%\label{EXO}
%    $X$ is independent of $U$.
%\end{assumption}

We make the following assumption.
\begin{assumption}[Exogeneity]
\label{ASEXO}
%   Under SCM ${\cal M}_S$, 
%$X$ is independent of $U$.
$Y_x\indep X$ for all $x \in \Omega_X$.
\end{assumption}
We note that if $\epsilon_X \indep U$ then the exogeneity holds,   and randomized controlled trials (RCT) on $X$ ensure exogeneity. Exogeneity implies $\mathbb{P}(Y_x < y)=\mathbb{P}(Y < y|X=x)$. 

%\yuta{[Comment: I fixed the assumption slightly.]}

We define PoC for continuous or discrete $X$ and $Y$ as a generalization of Definition \ref{def1}.
\begin{definition}[Probabilities of causation]
\label{def2}
For any $x_0,x_1 \in \Omega_X$ and $y \in \Omega_Y$, we define three types of PoC  as below:\footnote{We can equally define PNS as $\text{PNS}(y;x_0,x_1)\defeq\mathbb{P}(Y_{x_0} \leq y < Y_{x_1})$. We will stay with Definition~\ref{def2} in this paper.   
%However, we only discuss the definition of $\text{PNS}(y;x_0,x_1)\defeq\mathbb{P}(Y_{x_0} < y \leq Y_{x_1})$ in this paper. \yuta{I will change the definition $\text{PNS}(y;x_0,x_1)\defeq\mathbb{P}(Y_{x_0} < y \leq Y_{x_1})$ to $\text{PNS}(y;x_0,x_1)\defeq\mathbb{P}(Y_{x_0} < y \leq Y_{x_1}).$}}
}
\begin{equation}
\begin{aligned}
    &\text{PNS}(y;x_0,x_1)\defeq\mathbb{P}(Y_{x_0} < y \leq Y_{x_1}),\\
    &\text{PN}(y;x_0,x_1)\defeq\mathbb{P}(Y_{x_0} < y |y \leq Y, X=x_1),\\
    &\text{PS}(y;x_0,x_1)\defeq\mathbb{P}(y \leq Y_{x_1} |Y < y,X=x_0).
\end{aligned}
\end{equation}
\end{definition}
%\begin{comment}
%\jin{Can PNS be expressed in terms of PN and PS as in the binary case?} 
PNS, PN, and PS are connected  in the special case of binary $X$: 
\begin{lemma} 
\label{LEM41}
If $X$ is binary, we have
\begin{equation}
\begin{aligned}
 \text{PNS}(y;x_0,x_1)
    &=\text{PN}(y;x_0,x_1)\mathbb{P}(y \leq Y, X=x_1)\\
    &\ \ +\text{PS}(y;x_0,x_1)\mathbb{P}(Y < y,X=x_0).
    \end{aligned}
\end{equation}
\end{lemma}

%\end{comment}
\paragraph{Remark on the connection of Def. \ref{def2} with the binary PoC in Def. \ref{def1}:} 
Suppose $Y$ is binary with values $y_0 < y_1$, then Def. \ref{def2} with $y=y_1$ reduces to Def. \ref{def1}. 
In general, the value of $y$ in Def.~\ref{def2} can be interpreted as an outcome threshold, such as the  passing score for a test or a diagnostic threshold for blood pressure or blood glucose levels. 
%\citet{Hannart2018} determined the value of $y$ by maximizing $\text{PNS}(y;x_0,x_1)$, i.e., $y^*= \argmax_{y \in \Omega_Y} \text{PNS}(y;x_0,x_1)$.
Def. \ref{def2} focuses on the necessity/sufficiency of treatment $x_1$ w.r.t. $x_0$ to produce the event $Y\geq y$. 
We may introduce a binary outcome variable $O=\mathbb{I}(Y \geq y)$. Then PNS$(y;x_0,x_1) = \mathbb{P}(O_{x_0}=0,O_{x_1}=1)$, PN$(y;x_0,x_1) = \mathbb{P}(O_{x_0}=0|O=1, X=x_1)$, and PS$(y;x_0,x_1) = \mathbb{P}(O_{x_1}=1| O=0, X=x_0)$. Therefore, Def. \ref{def2} reduces to the standard definition of binary PoC over $X$ and $O$. We note that this interpretation of PNS matches the use of PNS in \citep{Hannart2018}.

{Although Def. \ref{def2} can be interpreted in terms of a binarized outcome $O=\mathbb{I}(Y \geq y)$. It is more natural and consistent to have a formulation in terms of the original variable $Y$ rather than in terms of $O$. A major benefit of the proposed formulation is that it can be naturally extended to study more complex variants of PoC in Section~\ref{sec-variant} that are difficult to formulate in terms of a binarized outcome.}

%The definition of PNS in Def. \ref{def2} appeared in \citep{Hannart2018}. \jin{It doesn't look to me that \citep{Hannart2018} gave the same definition. They essentially introduced a binary variable by defining an event as $E= Y\geq y$. Could you use this idea to provide a justification/interpretation to your definition?}
%\yuta{They define PNS $\mathbb{P}(E_{x_0}=0,E_{x_1}=1)$ through binarized outcome $E=\mathbb{I}(Y \geq y)$, and $E_{x_1}$ and $E_{x_0}$ mean $\mathbb{I}(Y_{x_1} \geq y)$ and $\mathbb{I}(Y_{x_0} \geq y)$. Thus, $\mathbb{P}(E_{x_0}=0,E_{x_1}=1)=\mathbb{P}(\mathbb{I}(Y_{x_0} \geq y)=0,\mathbb{I}(Y_{x_1} \geq y)=1)=\mathbb{P}(\mathbb{I}(Y_{x_0} < y)=1,\mathbb{I}(Y_{x_1} \geq y)=1)=\mathbb{P}(Y_{x_1} \geq y,Y_{x_0} < y)=\mathbb{P}(Y_{x_0} < y\leq Y_{x_1})$.}

When $X$ and $Y$ are discrete variables taking values $\{x_1,\ldots,x_P\}$ and $\{y_1,\ldots,y_Q\}$,
\citet{Li2022} defined PNS by $\mathbb{P}(Y_{x_{i_1}}=y_{j_1},Y_{x_{i_2}}=y_{j_2})$ ($1\leq i_1, i_2 \leq P$, $1\leq j_1, j_2 \leq Q$, $i_1 \ne i_2$ and $j_1 \ne j_2$).
However, their definition is  %always takes $0$ if $Y$ is a continuous variable with bounded PDF and 
not suitable for a continuous outcome $Y$.

\begin{example} \label{ex-1}
Consider the dose-response relationship between the blood sugar level ($Y$) and the amount of insulin ($X$). 
Let $y$ be a blood sugar threshold, and  $x_0,x_1$ be two insulin amount ($x_0>x_1$). 
A doctor may want to know the probability (PNS) that the patient's  blood sugar level would be greater than or equal to the  threshold $y$ had they taken $x_1$ amount of insulin, and would be less than $y$ had they taken $x_0$ insulin.
PN represents the probability that the patient's  blood sugar level would be less than $y$ had they taken $x_0$ insulin when the patient took $x_1$ insulin with sugar level greater than or equal to $y$.
PS represents the probability that the patient's  blood sugar level would be greater than or equal to $y$ had they taken $x_1$ insulin when the patient took $x_0$ insulin with sugar level less than $y$.
\end{example}

\subsection{Identification Assumptions}

\citet{Tian2000} used montonicity Assumption~\ref{BMONO} for binary treatment and outcome to identify binary PoC. Here we generalize this assumption to continuous and discrete cases, and discuss connections with several commonly used assumptions in the literature.

%Next, we explain four nonparametric assumptions for identifying three types of PoC.

%($\mathrm{\bf I}$). {\bf Monotonicity w.r.t.  $X$.}
($\mathrm{\bf I}$). {\bf Monotonicity over  $Y_x$.}
%First, monotonicity on $X$ for a binary treatment, i.e., %$\mathbb{P}(Y_{x_0}=0, Y_{x_1}=1)=0$  or 
%$\mathbb{P}(Y_{x_0}=1, Y_{x_1}=0)=0$,
%$\mathbb{P}(Y_{x_0}\leq Y_{x_1})=0$  or $\mathbb{P}(Y_{x_0}\geq Y_{x_1})=0$,
% have  appeared in the studies \citep{Balke1997,Tian2000,Imbens1994,Angrist1996}.
We first propose the following assumption:
\begin{assumption}[Strong Monotonicity over $Y_x$]
\label{mono-S}
    The potential outcomes $Y_x$ satisfy,  for any $x_0,x_1 \in \Omega_X$,  either "$Y_{x_0}(u)\leq Y_{x_1}(u)$ $\mathbb{P}_U$-almost surely for every $u \in \Omega_U$" or "$Y_{x_1}(u)\leq Y_{x_0}(u)$ $\mathbb{P}_U$-almost surely for  every $u \in \Omega_U$".
\end{assumption}
Note that we allow both monotonic increasing and decreasing cases. It turns out that the PoC in Def.~\ref{def2} can be identified under a weaker assumption:\footnote{This assumption is not ''$\mathbb{P}(Y_{x_0}< y \leq Y_{x_1})=0$ for any $x_0,x_1 \in \Omega_X$ and $y \in \Omega_Y$'' or ''$\mathbb{P}(Y_{x_1}< y \leq Y_{x_0})=0$ for any $x_0,x_1 \in \Omega_X$ and $y \in \Omega_Y$.''
}
\begin{assumption}[Monotonicity over $Y_x$]
\label{MONO_A}
    The potential outcomes $Y_x$ satisfy,  for any $x_0,x_1 \in \Omega_X$ and $y \in \Omega_Y$,  ``either $\mathbb{P}(Y_{x_0}< y \leq Y_{x_1})=0$ or $\mathbb{P}(Y_{x_1}< y \leq Y_{x_0})=0$''.
%    \begin{equation}
%     \text{either } \  \mathbb{P}(Y_{x_0}< y \leq Y_{x_1})=0\text{ or }\mathbb{P}(Y_{x_1}< y \leq Y_{x_0})=0
%    \end{equation}   
\end{assumption}
Introducing a binarized outcome $O=\mathbb{I}(Y \geq y)$, Assumption~\ref{MONO_A} becomes ``$\mathbb{P}(O_{x_0}=0,O_{x_1}=1)=0$ or $\mathbb{P}(O_{x_0}=1,O_{x_1}=0)=0$''. Assumption~\ref{MONO_A} is weaker than \ref{mono-S} since $\mathbb{P}(Y_{x_0}< Y_{x_1})=0$  implies $\mathbb{P}(Y_{x_0}< y \leq Y_{x_1})=0$ but not vice versa.

Next, we discuss several related assumptions used in the literature for various identification purposes.

($\mathrm{\bf I}\hspace{-1.2pt}\mathrm{\bf I}$). 
%{\bf Monotonicity w.r.t. $U$.}
{\bf Monotonicity over $f_Y$.}
Monotonicity on $U$ over structural function $f_Y(x,U)$ has appeared in the instrumental variable (IV) literature, e.g. 
\citep{Vytlacil2002,Heckman1999,Heckman2005}.
%\yuta{We denote the structural function from $X$ to $Y$ by $f_Y(X,U)$.}
%They defined monotonicity on $U$ as below.
\begin{assumption}[Monotonicity over $f_Y$]
\label{AS1}
The function $f_Y(x,U)$ is either monotonic increasing on $U$ for all $x \in \Omega_X$ or monotonic decreasing on $U$ for all $x \in \Omega_X$ almost surely w.r.t. $\mathbb{P}_U$.
%The function $f_Y(x,U)$ is either monotonic increasing or monotonic decreasing on $U$ for all $x \in \Omega_X$ almost surely w.r.t. $\mathbb{P}_U$. 
%\jin{Do you mean: The function $f_Y(x,U)$ is either monotonic increasing on $U$ for all $x \in \Omega_X$ or monotonic decreasing on $U$ for all $x \in \Omega_X$? Otherwise, your proof argument doesn't look right. }
\end{assumption}
%\yuta{"Almost surely w.r.t. $\mathbb{P}_U$" means the probability of the violation of the above conditions is $0$ with respect to $\mathbb{P}_U$.}
%\jin{Does this mean $f_Y(x,U)$ may not be monotonic over some measure zero number of U values? } 
%\yuta{Yes, I do not requires $f_Y(x,U)$ is monotonic for all U values.}
%
%This assumption implies that two trajectories of potential outcome, i.e., $\{(x,Y_x(u_0)) \in \Omega_X \times \Omega_Y;\forall x \in \Omega_X\}$ and $\{(x,Y_x(u_1))\in \Omega_X \times \Omega_Y;\forall x \in \Omega_X\}$, do not cross over each other for $\mathbb{P}_U$-almost every $u_0, u_1 \in \Omega_U$.
%
For example, \citet{Heckman2005} introduced the latent index model  $Y:=\mathbb{I}[f_Y(X) \geq U]$ for a binary outcome,  
which satisfies the above assumption.

%\jin{I found the names of 'Monotonicity on Potential Outcomes' and  'monotonicity on SCM' misleading, because (I) is about monotonicity with respect to $X$ while (II) is about monotonicity with respect to $U$. Both kinds of monotonicity could be described in terms of potential outcomes or SCMs. In other words, the difference between these two kinds of monotonicity is NOT about potential outcomes or SCMs.   }

($\mathrm{\bf I}\hspace{-1.2pt}\mathrm{\bf I}\hspace{-1.2pt}\mathrm{\bf I}$). 
{\bf Strict monotonicity over $f_Y$.}
%Third, strict monotonicity on $U$ or structural function also have been appeared in the IV literature, such as
The following stronger monotonicity assumption have also been used 
\citep{Chesher2003,Chernozhukov2005,Chernozhukov2007,Imbens2009}.
%They defined strict monotonicity on $U$ as below.
\begin{assumption}[Strict monotonicity over $f_Y$]
\label{SAS1}
The function $f_Y(x,U)$ is either strictly monotonic increasing on $U$ for all $x \in \Omega_X$ or strictly monotonic decreasing on $U$ for all $x \in \Omega_X$ almost surely w.r.t. $\mathbb{P}_U$ with $\sup_{u \in \Omega_U}\mathfrak{p}(u)<\infty$. 
\end{assumption}
The condition $\sup_{u \in \Omega_U}\mathfrak{p}(u)<\infty$ means $U$ is continuous distribution. %\jin{Do you mean "$U$ must be continuous"? What if $U$ is discrete?} \yuta{If $U$ is the discrete variable, PDF of $U$ can be represented by delta functions, and take $\infty$ for some $u \in \Omega_U$.} \jin{If $U$ is discrete, then the above Assumption can not be defined? Why? } \yuta{If $U$ is discrete, we use Assumption \ref{AS1}.}
%This assumption is stronger than monotonicity on $U$ because strong monotonicity implies monotonicity, but not the other way around.
For example, the widely used additive noise model $Y=f_Y(X)+U$ \citep{Whitney2003,Singh2019,Hartford2017,Xu2021,Kawakami2023} satisfies this assumption.

{\bf Relationship between the three assumptions.}
%Then, we explain the relationship of the above assumptions. 
We obtain that our proposed monotonicity Assumption~\ref{MONO_A} for continuous and discrete cases is equivalent to the monotonicity Assumption~\ref{AS1} over structural function $f_Y(x,U)$ 
under the following assumption: 
\begin{assumption}
\label{SUP1}
Potential outcome $Y_x$ has PDF $p_{Y_x}$ for each $x \in \Omega_{X}$, and its support $\{y \in \Omega_Y: p_{Y_x}(y) \ne0 \}$ is
the same
%$[-\infty,\infty]$ 
for each $x \in \Omega_{X}$.
\end{assumption}
This assumption is reasonable for continuous variables. 
For example, the 
%multivariate 
linear regression model with Gaussian noise in \citep{Hannart2018} satisfies this assumption.

%, and our proposed rank preservation Assumption~\ref{RP1} for continuous and discrete cases is equivalent to the strict monotonicity Assumption~\ref{SAS1}.  
%\jin{Is the condition "Under SCM ${\cal M}_S$" required for the theorem?} 
%\yuta{[I think it is needed since we use function $f_Y$ in the definition of Asuumption \ref{AS1} and \ref{SAS1}.]}
%\yuta{[Deleted]}
%First, we show the equivalence of the first and second assumptions. 
%We have the following lemma.
%\begin{lemma}
%\label{LEM31}
  %  Under SCM ${\cal M}_{S}$ and Assumption \ref{TOT}, Assumption \ref{MONO} implies Assumption \ref{AS1}.
%\end{lemma}
%Next, we have the following lemma.
%\begin{lemma}
 %   Under SCM ${\cal M}_{S}$ and Assumption \ref{TOT}, Assumption \ref{AS1} implies Assumption \ref{MONO}.
%\end{lemma}
%We have the following theorem.
\begin{theorem}%[Equivalence of three assumptions]
\label{E12}
    Under SCM ${\cal M}_S$ and Assumption \ref{SUP1}, 
    Assumptions \ref{MONO_A} and \ref{AS1} are equivalent, and
    %and Assumptions \ref{SAS1} and \ref{RP1} are equivalent. 
    {Assumption \ref{SAS1} is a strictly stronger requirement than \ref{AS1}.}
\end{theorem}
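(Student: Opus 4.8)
The plan is to split the asserted equivalence into its two implications, $\ref{AS1}\Rightarrow\ref{MONO_A}$ and $\ref{MONO_A}\Rightarrow\ref{AS1}$, and to handle the comparison between \ref{SAS1} and \ref{AS1} separately. For the first implication I would argue that Assumption~\ref{SUP1} is not even needed. Suppose \ref{AS1} holds and, without loss of generality, that $u\mapsto f_Y(x,u)$ is nondecreasing for every $x$ (the nonincreasing case is symmetric). Fix $x_0,x_1\in\Omega_X$ and $y\in\Omega_Y$ and assume, for contradiction, that both $\mathbb{P}(Y_{x_0}<y\le Y_{x_1})>0$ and $\mathbb{P}(Y_{x_1}<y\le Y_{x_0})>0$. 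Then, off a $\mathbb{P}_U$-null set, there exist $u,u'\in\Omega_U$ with $f_Y(x_0,u)<y\le f_Y(x_1,u)$ and $f_Y(x_1,u')<y\le f_Y(x_0,u')$. Comparing $u$ and $u'$ in the total order on $\Omega_U$ and using that both $f_Y(x_0,\cdot)$ and $f_Y(x_1,\cdot)$ are nondecreasing gives a contradiction in either case: e.g.\ if $u\le u'$ then $y\le f_Y(x_1,u)\le f_Y(x_1,u')<y$. Hence at least one of the two probabilities vanishes for every $y$, which is exactly \ref{MONO_A}.

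The converse is the substantive direction and the place where Assumption~\ref{SUP1} does the work. The plan is to pass to the rank/quantile representation of the potential outcomes. Under \ref{SUP1} each $Y_x$ has a density $p_{Y_x}$ on a common support, so $F_x(y)=\mathbb{P}(Y_x<y)$ is continuous and strictly increasing there; I would set $R_x=F_x(Y_x)$ (uniform on $(0,1)$) and $Q_x=F_x^{-1}$, so that $Y_x=Q_x(R_x)$ with $Q_x$ nondecreasing. The goal is to use \ref{MONO_A} to show the ranks can be taken to coincide across treatments, i.e.\ that there is a single latent $U$ with $Y_x=Q_x(U)$ for all $x$; granting this, $f_Y(x,u)=Q_x(u)$ is nondecreasing in $u$ and \ref{AS1} follows. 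Concretely, \ref{MONO_A} says that for each threshold the upper level sets $\{Y_{x_0}\ge y\}$ and $\{Y_{x_1}\ge y\}$ are nested up to a null set; translating this through the continuous CDFs and letting $y$ sweep the common support so that $(F_{x_0}(y),F_{x_1}(y))$ traces the whole graph of $F_{x_1}\circ F_{x_0}^{-1}$ is what I would use to align the two ranks.

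I expect the main obstacle to be precisely this last step: upgrading the per-threshold nesting furnished by \ref{MONO_A} into a single, simultaneous monotone coupling of all the $Y_x$. The reason \ref{SUP1} is indispensable is that the common support together with the continuity and strict monotonicity of the $F_x$ guarantees that, for any putative ``crossing'' of the potential outcomes, one can actually find a threshold $y$ strictly inside the support at which both $\{Y_{x_0}<y\le Y_{x_1}\}$ and $\{Y_{x_1}<y\le Y_{x_0}\}$ carry positive mass; its contrapositive is what excludes non-monotone behaviour of $f_Y$ and forces the common rank. The technical care will be in treating ties and the boundary of the support, and in promoting the pairwise alignment to the whole family indexed by $x$.

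Finally, that \ref{SAS1} is strictly stronger than \ref{AS1} is the easy part. The implication $\ref{SAS1}\Rightarrow\ref{AS1}$ is immediate, since strict monotonicity is a special case of monotonicity and \ref{SAS1} only adds the extra requirement $\sup_{u\in\Omega_U}\mathfrak{p}(u)<\infty$. To see that the implication is strict I would exhibit an SCM satisfying \ref{AS1} but not \ref{SAS1}: for instance, take $f_Y(x,\cdot)$ nondecreasing but with a flat piece (monotone yet not strictly monotone), or take $U$ discrete so that the density bound in \ref{SAS1} fails to hold. Either example satisfies \ref{AS1} while violating \ref{SAS1}, which establishes that \ref{SAS1} is a strictly stronger requirement.
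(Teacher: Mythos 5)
Your easy direction (\ref{AS1} $\Rightarrow$ \ref{MONO_A}) is correct, and in fact cleaner than the paper's own Lemma \ref{LEM32}, which runs a two-case analysis around $u_{sup}=\sup\{u: f_Y(x_0,u)<y\}$; you are also right that Assumption \ref{SUP1} plays no role in that direction. Your handling of the \ref{SAS1}-versus-\ref{AS1} claim is likewise fine (the paper's proof of Theorem \ref{E12} does not even spell that part out, citing only Lemmas \ref{LEM31} and \ref{LEM32}). The genuine gap is the substantive direction \ref{MONO_A} $\Rightarrow$ \ref{AS1}: you do not prove it. Your quantile plan stops exactly at the step you yourself flag as ``the main obstacle'' --- upgrading the per-threshold nesting of level sets supplied by \ref{MONO_A} into a single monotone coupling valid simultaneously for all thresholds and all pairs of treatments --- and that step \emph{is} the content of the theorem; everything you do carry out (defining $F_x$, $R_x=F_x(Y_x)$, $Q_x=F_x^{-1}$) is routine.

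Moreover, even if you completed that step, your route would prove a different statement from \ref{AS1}. A common rank $R$ with $Y_x=Q_x(R)$ and $Q_x$ nondecreasing is a monotone \emph{re-parameterization} of the potential outcomes by a new latent variable, whereas \ref{AS1} asserts monotonicity in $u$ of the \emph{given} structural function $f_Y$ of ${\cal M}_S$. These are not the same: take $f_Y(x,u)=h(u)$ for every $x$, with $h$ a tent-shaped (non-monotone) transformation of a uniform $U$. Then $Y_{x_0}=Y_{x_1}$ pointwise, so \ref{MONO_A} holds trivially, and \ref{SUP1} holds, yet $f_Y$ is not a.s.\ monotone in $u$; hence the literal implication your plan targets fails, and a coupling construction can at best deliver the representation version. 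The paper's Lemma \ref{LEM31} proceeds entirely differently: by contraposition, it \emph{formalizes} the failure of \ref{AS1} as a positive-probability set ${\cal U}$ on which $f_Y(x_0,\cdot)$ crosses some level $y$ downward while $f_Y(x_1,\cdot)$ crosses it upward (using \ref{SUP1} only to guarantee such an overlapping level exists), and then reads off that both $\mathbb{P}(Y_{x_0}<y\le Y_{x_1})>0$ and $\mathbb{P}(Y_{x_1}<y\le Y_{x_0})>0$, contradicting \ref{MONO_A} --- a formalization of the negation that silently excludes violations like the example above. To repair your write-up you must either carry out the comonotonicity step in full and restate the conclusion as a representation result, or adopt the paper's crossing-configuration contrapositive.
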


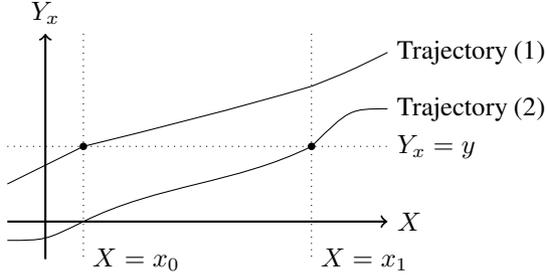
\begin{figure}
    \centering
    \scalebox{1}{
    \begin{tikzpicture}

  \draw (0,0) .. controls (1,0.25) and (2,0.5) .. (3,0.8);
 \draw (-1,-0.5) .. controls (-0.5,-0.25) .. (0,0);
  \draw (3,0.8) .. controls (3.5,1) .. (4,1.25)
    node[anchor=west] 
    {Trajectory (1)};
    %{$Y_x(u_{\rho(y;x_0)})$};

  \draw (0,-1) .. controls (1,-0.5) and (2,-0.5) .. (3,0);
 \draw (-1,-1.25) .. controls (-0.5,-1.25) .. (0,-1);
  \draw (3,0) .. controls (3.5,0.5) .. (4,0.5)
  node[anchor=west] 
    {Trajectory (2)};
    %{$Y_x(u_{\rho(y;x_1)})$};

    \draw[black,dotted] (-1, 0) -- (4, 0)
      node[anchor=west] {$Y_x=y$};

        \draw[black,dotted] (0, 1.5) -- (0, -1.5)
      node[anchor=west] {$X=x_0$};
      
   \draw[black,dotted] (3, 1.5) -- (3, -1.5)
      node[anchor=west] {$X=x_1$};

    % y-axis
    \draw[thick, black, ->] (-0.5, -1.5) -- (-0.5, 1.5)
      node[anchor=south] {$Y_x$};

      % x-axis
    \draw[thick, black, ->] (-1, -1) -- (4, -1)
      node[anchor=west] {$X$};

\node at (0,0)[circle,fill,inner sep=1pt]{};
\node at (3,0)[circle,fill,inner sep=1pt]{};
      
\end{tikzpicture}
}
    \caption{Trajectories for (1) $Y_x(u_{\rho(y;x_0)})$ and (2) $Y_x(u_{\rho(y;x_1)})$. }
    %$Y_x(u_{\rho(y;x_0)})$ and $Y_x(u_{\rho(y;x_1)})$.}
    \label{fig:1}
\end{figure}

\subsection{Identification Theorem}
Next, we present an identification theorem.
We denote the conditional CDF 
\begin{equation}
    \rho(y;x)\defeq\mathbb{P}(Y< y|X=x).%=\mathbb{P}(Y\leq y|X=x)
\end{equation}
%for any $y \in \Omega_Y$ and $x \in \Omega_X$.
%, and we have $\rho(y;x)=\mathbb{P}(Y < y|X=x)$ for any $y \in \Omega_Y$  and $x \in \Omega_X$. 
%\jin{This only holds assuming Exogeneity or under Under SCM ${\cal M}_S$.  Do you want $\rho(y;x)$ to denote $\mathbb{P}(Y_x\leq y)$ or $\mathbb{P}(Y\leq y|X=x)$? can't be both! This is confusing. Given its application in the following theorem, why not just define $\rho(y;x)=\mathbb{P}(Y\leq y|X=x)$?}\yuta{[Fixed]}
%We assume
%\begin{assumption}[Positivity]
%\label{POS}
%    $\rho(y;x_1)<1$ and $0<\rho(y;x_0)$ hold for any $x_0, x_1 \in \Omega_X$ and $y \in \Omega_Y$.
%\end{assumption}
%We do not assume the positivity of the boundary $\Omega_Y \setminus \Omega_Y$, e.g., the minimum or maximum values of $\Omega_Y$.
%Then, we have the following theorem. %\jin{Is the condition "Under SCM ${\cal M}_S$" really required for the theorem? Can it be replaced by Lemma 3.1 - that is, just assuming Exogeneity?}\yuta{[Fixed]}
\begin{theorem}[Identification of PoC]
\label{THEO1}
Under SCM ${\cal M}_{S}$ and Assumptions \ref{ASEXO}, \ref{MONO_A} (or \ref{AS1}, \ref{SAS1}), and \ref{SUP1}, 
PNS, PN, and PS are identifiable by
\begin{equation}
    \begin{aligned}
    &\text{PNS}(y;x_0,x_1)=\max\{\rho(y;x_0)-\rho(y;x_1),0\},\\
    &\text{PN}(y;x_0,x_1)=\max\left\{\frac{\rho(y;x_0)-\rho(y;x_1)}{1-\rho(y;x_1)},0\right\},\\
    &\text{PS}(y;x_0,x_1)=\max\left\{\frac{\rho(y;x_0)-\rho(y;x_1)}{\rho(y;x_0)},0\right\}
    \end{aligned}
\end{equation}
for any  $x_0,x_1 \in \Omega_X$ and $y \in \Omega_Y$ such that $\rho(y;x_1)<1$ and $\rho(y;x_0)>0$.
\end{theorem}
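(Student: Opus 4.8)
The plan is to reduce the continuous statement to the binary Tian--Pearl identification (the displayed formulas following Assumption~\ref{BMONO}) by binarizing the outcome at the threshold $y$. Concretely, fix $x_0,x_1 \in \Omega_X$ and $y \in \Omega_Y$ and set $O := \mathbb{I}(Y \geq y)$, so that $O_x = \mathbb{I}(Y_x \geq y)$ for each $x$. With this encoding the three quantities in Def.~\ref{def2} become exactly the binary PoC over $(X,O)$, namely $\text{PNS}(y;x_0,x_1) = \mathbb{P}(O_{x_0}=0,O_{x_1}=1)$, $\text{PN}(y;x_0,x_1) = \mathbb{P}(O_{x_0}=0 \mid O=1,X=x_1)$, and $\text{PS}(y;x_0,x_1)=\mathbb{P}(O_{x_1}=1 \mid O=0,X=x_0)$, where I use consistency $O = O_X$ to identify the observed $O$ on the event $X=x_i$ with $O_{x_i}$.

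First I would translate the identifying assumptions into statements about $O$. By Assumption~\ref{ASEXO}, $Y_x \indep X$, hence $O_x \indep X$ and $\mathbb{P}(O_x = 1) = \mathbb{P}(Y_x \geq y) = 1 - \mathbb{P}(Y < y \mid X=x) = 1 - \rho(y;x)$, equivalently $\mathbb{P}(O_x=0)=\rho(y;x)$. For PN and PS I additionally need the \emph{joint} independence $(O_{x_0},O_{x_1}) \indep X$ so that the $X$-conditioning can be removed; this follows from the structure of $\mathcal{M}_S$, in which every potential outcome $Y_x = f_Y(x,U)$ is a function of $U$ alone while $X = f_X(\epsilon_X)$, so that exogeneity renders $(O_{x_0},O_{x_1})$ independent of $X$. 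Monotonicity Assumption~\ref{MONO_A} says precisely that either $\mathbb{P}(O_{x_0}=0,O_{x_1}=1)=0$ or $\mathbb{P}(O_{x_0}=1,O_{x_1}=0)=0$; by Theorem~\ref{E12} the same dichotomy holds when one instead assumes \ref{AS1} or \ref{SAS1} together with \ref{SUP1}, so it suffices to argue under \ref{MONO_A}.

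The core of the argument is then a two-case split on the direction of monotonicity. In the first case $\mathbb{P}(O_{x_0}=1,O_{x_1}=0)=0$, the standard binary decomposition gives $\mathbb{P}(O_{x_0}=1) = \mathbb{P}(O_{x_0}=1,O_{x_1}=1)$, whence $\text{PNS} = \mathbb{P}(O_{x_1}=1)-\mathbb{P}(O_{x_0}=1) = \rho(y;x_0)-\rho(y;x_1) \geq 0$, and dividing the joint probability $\mathbb{P}(O_{x_0}=0,O_{x_1}=1)$ by $\mathbb{P}(O_{x_1}=1)=1-\rho(y;x_1)$ and by $\mathbb{P}(O_{x_0}=0)=\rho(y;x_0)$ yields the claimed PN and PS expressions, both nonnegative. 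In the complementary case $\mathbb{P}(O_{x_0}=0,O_{x_1}=1)=0$, all three quantities equal $0$, while the same decomposition gives $\mathbb{P}(O_{x_1}=1) \leq \mathbb{P}(O_{x_0}=1)$, i.e.\ $\rho(y;x_0)-\rho(y;x_1) \leq 0$. Hence in both cases the value coincides with $\max\{\rho(y;x_0)-\rho(y;x_1),0\}$ for PNS and with the corresponding truncated ratios for PN and PS; the conditions $\rho(y;x_1)<1$ and $\rho(y;x_0)>0$ guarantee that the conditioning events have positive probability, so that PN and PS are well defined.

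The step I expect to require the most care is the passage from the marginal exogeneity literally stated in Assumption~\ref{ASEXO} to the joint independence $(O_{x_0},O_{x_1})\indep X$ needed to drop the $X$-conditioning in PN and PS; I would justify it explicitly from the $\mathcal{M}_S$ representation $Y_x=f_Y(x,U)$ rather than from the marginal statement alone. A secondary technical point is that Assumption~\ref{MONO_A} only controls the relevant events up to $\mathbb{P}_U$-null sets, so the two cases and the decompositions above should be read as holding almost surely, which does not affect the probabilities computed.
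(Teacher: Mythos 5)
Your proof is correct, but it follows a genuinely different route from the paper's. The paper argues in the latent space under Assumption~\ref{AS1}: it introduces the threshold points $u_{\rho(y;x_0)}=\sup\{u:f_Y(x_0,u)<y\}$ and $u_{\rho(y;x_1)}=\sup\{u:f_Y(x_1,u)\leq y\}$, rewrites the joint event $\{Y_{x_0}<y\leq Y_{x_1}\}$ as the event that $u$ lies in the interval $[u_{\rho(y;x_1)},u_{\rho(y;x_0)})$, concludes $\text{PNS}=\max\{\rho(y;x_0)-\rho(y;x_1),0\}$, and then obtains PN and PS by dividing PNS by $\mathbb{P}(y\leq Y|X=x_1)=1-\rho(y;x_1)$ and $\mathbb{P}(Y<y|X=x_0)=\rho(y;x_0)$; the Assumption~\ref{MONO_A} variant of the theorem is then inherited through Theorem~\ref{E12}. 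You go the opposite way: you binarize via $O=\mathbb{I}(Y\geq y)$, take Assumption~\ref{MONO_A} as primary (covering \ref{AS1}/\ref{SAS1} through Theorem~\ref{E12}), and reproduce the Tian--Pearl binary argument with a two-case split on the direction of monotonicity; I checked both cases and they yield exactly the claimed truncated expressions. What each approach buys: yours is more elementary and self-contained, avoids the $\sup$-based manipulations in $\Omega_U$ (which require some care about strict versus weak inequalities), and makes the logical dependence on Theorem~\ref{E12} cleaner; the paper's latent-interval argument is the one that scales to the rest of the paper, since the conditional, evidence-based, and multi-hypothetical variants (Theorems~\ref{THEO41}, \ref{THE51}, \ref{THE52}, \ref{THE53}) all hinge on representing counterfactual events as intersections of intervals in $\Omega_{\boldsymbol U}$, a structure the binarization trick does not expose (the paper itself remarks that those variants are difficult to formulate through a binarized outcome). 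Finally, your explicit treatment of the passage from the marginal statement $Y_x\indep X$ in Assumption~\ref{ASEXO} to the joint independence $(Y_{x_0},Y_{x_1})\indep X$ needed for PN and PS is a point the paper's proof uses silently; your justification from the SCM ${\cal M}_S$ (potential outcomes being functions of $U$ alone, with $U\indep\epsilon_X$ as the paper notes) is at least as careful as what the paper does, so this is a feature of your write-up, not a gap.
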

%This theorem consists of conditional CDF.
%If $Y$ is a binary outcome, Theorem \ref{THEO1} coincide with theorem 4 in \citep{Tian2000}. % or Proposition 4.2. \citep{Galhotra2021}. 
%Note that our theorem does not require the following positivity assumption, $\mathbb{P}(X=x_0)>0$ and $\mathbb{P}(X=x_1)>0$, since we do not use linear programming formulation as \citep{Tian2000,Galhotra2021}.
%
{We can use the trajectories of potential outcomes to visualize and explain the above identification result for PNS. 
The trajectory $\{(x,Y_x(u)) \in \Omega_X \times \Omega_Y;\forall x \in \Omega_X\}$ represents potential outcome $Y_x(u)$ vs. $X$ for the subject $U=u$.
\emph{Under Assumptions \ref{MONO_A} (or \ref{AS1}, \ref{SAS1}), the subjects' trajectories do not cross over each other} (they may overlap). 
We denote $u_{\rho(y;x)}=\sup\{u: f_Y(x,u) < y\}$ for any $x \in \Omega_X$ and $y \in \Omega_Y$, and $Y_x(u_{\rho(y;x)})$ is the potential outcome for subject $u_{\rho(y;x)}$. 
Consider the two trajectories shown in Figure \ref{fig:1}. Trajectory (1) $\{(x,Y_x(u_{\rho(y;x_0)})) \in \Omega_X \times \Omega_Y;\forall x \in \Omega_X\}$ goes through the point $(x_0,y)$, and Trajectory (2) $\{(x,Y_x(u_{\rho(y;x_1)})) \in \Omega_X \times \Omega_Y;\forall x \in \Omega_X\}$ goes through the point $(x_1,y)$. 
The trajectory of subject $u$ lies in the region between Trajectories (1) and (2) if and only if $Y_{x_0}(u) < y \leq Y_{x_1}(u)$.
Thus, we have $\text{PNS}(y;x_0,x_1)=\mathbb{P}(Y_{x_0}< y \leq Y_{x_1})=\mathbb{P}(Y_{x_0}< y)-\mathbb{P}(Y_{x_1}< y)$, where $\mathbb{P}(Y_{x_0}< y)$ represents the probability of a subject's trajectory being below Trajectory (1) and $\mathbb{P}(Y_{x_1}< y)$ represents the probability of a subject's trajectory being below Trajectory (2).
}

\section{PoC for Vector Continuous Variables}
\label{sec4}
In this section, we extend PoC to vectors of continuous or discrete variables $\boldsymbol{Y}$ and $\boldsymbol{X}$,  % under a totally ordered vector structural causal model.
and we consider PoC for a sub-population with specific covariates information. 
%We additionally consider the subject's covariates in this section, and there are two merits 
The benefits of considering the subject's covariates include (i) they reveal the heterogeneity of causal effects; and (ii) they weaken identification assumptions.

%\yuta{There exists many datasets with vector treatment and outcome as the example of  \citep{Hannart2018}.}

\begin{comment}
\begin{figure}[tb]
%\vspace{-0.5cm}
   % \hspace{0.3cm}
    \centering
    \scalebox{1}{
\begin{tikzpicture}
    % x node set with absolute coordinates
    \node[mynode] (x) at (0,0) {$\bf{X}$};
    \node[mynode] (y) at (3,0) {$\bf{Y}$};
    \node[mynode] (u) at (1.5,1) {$\bf{C}$};

    % Directed edge
    \path (x) edge[->] (y);
    \path (x) edge[dotted,<->,bend right] (y);
%    \path (z) edge[->] (x);
    \path (u) edge[->] (y);
    \path (u) edge[dotted,<->,bend left] (y);
    \path (u) edge[->]  (x);
\path (x) edge[dotted,<->,bend left] (u);
\end{tikzpicture}
}
\vspace{-0cm}
    \caption{A causal graph representing ${\cal M}_{T}$.}% Causal graph and two types of non-separability in the IV setting, ${\cal M}_{Z}^{IV}$.}
    \label{DAG1}
    \end{figure}
    \end{comment}

\subsection{Problem Setup}

\begin{comment}
{\bf Total Order.} First, we introduce total order as below.
\begin{definition}[Total Order]
   We call the partial order $\leq_{\text{p}}$ which satisfies the following property total order
    %\begin{center}
%    (Total.) 
    \begin{equation}
        \mathbb{P}({\boldsymbol A}\leq_{\text{p}} {\boldsymbol a}\vee{\boldsymbol a}\leq_{\text{p}} {\boldsymbol A})=1,
    \end{equation}
    for a random variable ${\boldsymbol A}$ and any value ${\boldsymbol a} \in \Omega$,
   % \end{center}
     and denote it simply ``$\preceq$'',and denote totally ordered set $(\Omega,\preceq)$.
\end{definition}
totally ordered set means $\Omega$ is totally ordered set almost surely w.r.t. ${\boldsymbol A}$.
Total orders are also total orders, and the converse does not hold.
\begin{lemma}
    $\mathbb{P}({\boldsymbol A}\preceq {\boldsymbol a})=1-\mathbb{P}({\boldsymbol a}\prec {\boldsymbol A})$
\end{lemma}
\begin{proof}
    Since $1=\mathbb{P}({\boldsymbol A}\preceq{\boldsymbol a}\vee{\boldsymbol a}\preceq{\boldsymbol A})=\mathbb{P}({\boldsymbol A}\preceq{\boldsymbol a})+\mathbb{P}({\boldsymbol a}\preceq{\boldsymbol A})-\mathbb{P}({\boldsymbol a}= {\boldsymbol A})=\mathbb{P}({\boldsymbol A}\preceq{\boldsymbol a})+\mathbb{P}({\boldsymbol a}\prec {\boldsymbol A})$, we have $1-\mathbb{P}({\boldsymbol a}\prec {\boldsymbol A})=\mathbb{P}({\boldsymbol A}\preceq{\boldsymbol a})$.
\end{proof}
\end{comment}

\subsection{Conditional PoC Definition} 
%{\bf Totally ordered vector structural causal model.}  
Let ${\boldsymbol X}$, ${\boldsymbol Y}$, and ${\boldsymbol C}$ be a set of continuous or discrete  treatment variables, %${\boldsymbol Y}$ be a set of %totally ordered
%continuous or discrete  
outcome variables, 
%with $\preceq$
 and %${\boldsymbol C}$ be a set of continuous or discrete  %subject's 
covariates, respectively.  
We assume the following SCM ${\cal M}_{T}$:
%represented by the causal graph in Fig \ref{DAG1}: 
%\jin{I don't think some of the bidirected edges are allowed, maybe we don't draw the causal graph? Or draw a graph satisfying Assumption 5.1 after Assumption 5.1.}\yuta{[Deleted]}
\begin{eqnarray}
\begin{aligned}
    {\boldsymbol Y}:= f_{\boldsymbol Y}({\boldsymbol X},{\boldsymbol C},{\boldsymbol U}), {\boldsymbol X}:= f_{\boldsymbol X}({\boldsymbol C},{\boldsymbol \epsilon}_{\boldsymbol X}), {\boldsymbol C}:= f_{\boldsymbol C}({\boldsymbol \epsilon}_{\boldsymbol C})
    %& {\text{where $\boldsymbol{U}$, $\boldsymbol{\epsilon}_{\boldsymbol{X}}$, and $\boldsymbol{\epsilon}_{\boldsymbol{X}}$ are mutually independent}.}
    \end{aligned}
\end{eqnarray}
The functions $f_{\boldsymbol Y}$, $f_{\boldsymbol X}$, and $f_{\boldsymbol C}$ are vector-valued functions. 
${\boldsymbol \epsilon}_{\boldsymbol X}$, ${\boldsymbol \epsilon}_{\boldsymbol C}$, and ${\boldsymbol U}$ are latent exogenous variables.
%, and ${\boldsymbol U}$ are totally ordered with $\preceq$. 
We assume that the domains $\Omega_{\boldsymbol Y}$ and $\Omega_{\boldsymbol U}$ are totally ordered sets with $\preceq$. %, or any two elements of $\Omega_{\boldsymbol Y}$ and $\Omega_{\boldsymbol U}$ are comparable with $\preceq$, respectively.}
Let the dimensions of ${\boldsymbol X}, {\boldsymbol Y}, {\boldsymbol C}, {\boldsymbol U}$ be $d_X$, $d_Y$, $d_C$, $d_U$.
%We also assume:
%\begin{assumption}[Conditional exogeneity]
%\label{EXO2}
%    ${\boldsymbol X}$ are independent of ${\boldsymbol U}$ given ${\boldsymbol C}={\boldsymbol c}$ for all ${\boldsymbol c} \in \Omega_{\boldsymbol C}$.
%\end{assumption}

We make the following assumption.
\begin{assumption}[Conditional exogeneity]
\label{ASEXO2}
   %Under SCM ${\cal M}_{T}$, 
   %${\boldsymbol X}$ are independent of ${\boldsymbol U}$ given ${\boldsymbol C}$.
${\boldsymbol Y}_{\boldsymbol x}\indep {\boldsymbol X} | {\boldsymbol C}$ for all $\boldsymbol{x} \in \Omega_{\boldsymbol X}$.   
%   ${\boldsymbol Y}_{\boldsymbol X}$ are independent of ${\boldsymbol X}$ given ${\boldsymbol C}$.
\end{assumption}
Conditional exogeneity implies $\mathbb{P}({\boldsymbol Y}_{\boldsymbol x} \prec {\boldsymbol y}|{\boldsymbol C}={\boldsymbol c})=\mathbb{P}({\boldsymbol Y} \prec {\boldsymbol y}|{\boldsymbol X}={\boldsymbol x},{\boldsymbol C}={\boldsymbol c})$ for any ${\boldsymbol c} \in \Omega_{\boldsymbol C}$.

%All proofs in this section can be given by substituting $\leq$ in the previous section to $\preceq$.
We define the multivariate  conditional PoC %for the totally ordered vector SCM 
as below:
\begin{definition}[Conditional PoC]
\label{def41}
For any ${\boldsymbol x}_0,{\boldsymbol x}_1 \in \Omega_{\boldsymbol X}$, ${\boldsymbol y} \in \Omega_{\boldsymbol Y}$, and ${\boldsymbol c} \in \Omega_{\boldsymbol C}$, we define conditional PoC by 
\begin{equation}
\begin{aligned}
    &\text{PNS}({\boldsymbol y};{\boldsymbol x}_0,{\boldsymbol x}_1,{\boldsymbol c})\defeq\mathbb{P}({\boldsymbol Y}_{{\boldsymbol x}_0} \prec {\boldsymbol y} \preceq {\boldsymbol Y}_{{\boldsymbol x}_1}|{\boldsymbol C}={\boldsymbol c}),\\
    &\text{PN}({\boldsymbol y};{\boldsymbol x}_0,{\boldsymbol x}_1,{\boldsymbol c})\defeq\mathbb{P}({\boldsymbol Y}_{{\boldsymbol x}_0} \prec {\boldsymbol y} |{\boldsymbol y} \preceq {\boldsymbol Y},{\boldsymbol X}={\boldsymbol x}_1,{\boldsymbol C}={\boldsymbol c}),\\
    &\text{PS}({\boldsymbol y};{\boldsymbol x}_0,{\boldsymbol x}_1,{\boldsymbol c})\defeq\mathbb{P}({\boldsymbol y} \preceq {\boldsymbol Y}_{{\boldsymbol x}_1} |{\boldsymbol Y} \prec {\boldsymbol y},{\boldsymbol X}={\boldsymbol x}_0,{\boldsymbol C}={\boldsymbol c}).
\end{aligned}
\end{equation}
\end{definition}
$\text{PNS}({\boldsymbol y};{\boldsymbol x}_0,{\boldsymbol x}_1,{\boldsymbol c})$ provides a measure of the sufficiency and necessity of ${\boldsymbol x}_1$ w.r.t. ${\boldsymbol x}_0$ to produce ${\boldsymbol Y}\succeq {\boldsymbol y}$ given ${\boldsymbol C}={\boldsymbol c}$.
$\text{PN}({\boldsymbol y};{\boldsymbol x}_0,{\boldsymbol x}_1,{\boldsymbol c})$ provides a measure of the necessity of ${\boldsymbol x}_1$ w.r.t. ${\boldsymbol x}_0$ to produce ${\boldsymbol Y}\succeq {\boldsymbol y}$ given ${\boldsymbol C}={\boldsymbol c}$.
$\text{PS}({\boldsymbol y};{\boldsymbol x}_0,{\boldsymbol x}_1,{\boldsymbol c})$ provides a measure of the sufficiency of ${\boldsymbol x}_1$ w.r.t. ${\boldsymbol x}_0$ to produce ${\boldsymbol Y}\succeq {\boldsymbol y}$ given ${\boldsymbol C}={\boldsymbol c}$.

%We note that the above definition of PoC depends on the choice of total order. 
%\jin{Again, any prior work on defining multivariate or conditional PoC?}
%\yuta{
\citet{Hannart2018} studied multivariate PNS where the outcomes are the space-time vectorial random variables of the Earth's surface temperatures. 
\citet{Li2019,Li2022,Li2022b} considered conditional PNS over discrete variables in their benefit function and called it z-specific PNS, but their definition of PNS is different from ours and is not suitable for continuous variables.   %in their objective function, called benefit function, and \citet{Li2019} call it z-specific PNS. They give bound and identification theorem of conditional PNS for binary or discrete treatment and outcome.

\subsection{Identification Assumptions and Theorem}

We generalize Assumptions \ref{MONO_A}, \ref{AS1}, and \ref{SAS1}
%and \ref{RP1}  
to multivariate outcomes and treatments with covariates as below, respectively.
%\yuta{We denote the structural function from ${\boldsymbol X}$ to ${\boldsymbol Y}$ by $f_{\boldsymbol Y}({\boldsymbol X},{\boldsymbol U})$.}
\begin{assumption}[Conditional monotonicity over ${\boldsymbol Y}_{{\boldsymbol x}}$]
\label{MONO2}
    The potential outcomes ${\boldsymbol Y}_{{\boldsymbol x}}$ satisfy:  for any ${\boldsymbol x}_0,{\boldsymbol x}_1 \in \Omega_{\boldsymbol X}$, ${\boldsymbol y} \in \Omega_{\boldsymbol Y}$, and ${\boldsymbol c} \in \Omega_{\boldsymbol C}$, either $\mathbb{P}({\boldsymbol Y}_{{\boldsymbol x}_0}\prec {\boldsymbol y} \preceq {\boldsymbol Y}_{{\boldsymbol x}_1}|{\boldsymbol C}={\boldsymbol c})=0$ or $\mathbb{P}({\boldsymbol Y}_{{\boldsymbol x}_1}\prec {\boldsymbol y} \preceq {\boldsymbol Y}_{{\boldsymbol x}_0}|{\boldsymbol C}={\boldsymbol c})=0$.
\end{assumption}
This assumption extends Assumptions \ref{MONO_A} to totally ordered vector variables.
%\yuta{This assumption represents the monotonicity of vector variables with the total order $\preceq$ on $\Omega_{\boldsymbol Y}$ given ${\boldsymbol C}={\boldsymbol c}$.}

\begin{assumption}[Conditional monotonicity over  $f_{\boldsymbol Y}$]
\label{AS2}
{The function $f_{\boldsymbol Y}({\boldsymbol x},{\boldsymbol c},{\boldsymbol U})$ is either (i) monotonic increasing on ${\boldsymbol U}$ with $\preceq$ for all ${\boldsymbol x} \in \Omega_{\boldsymbol X}$ and ${\boldsymbol c} \in \Omega_{\boldsymbol C}$ almost surely w.r.t. $\mathbb{P}_{\boldsymbol U}$, or (ii) monotonic decreasing on ${\boldsymbol U}$ with $\preceq$ for all ${\boldsymbol x} \in \Omega_{\boldsymbol X}$ and ${\boldsymbol c} \in \Omega_{\boldsymbol C}$ 
almost surely w.r.t. $\mathbb{P}_{\boldsymbol U}$.} 
\end{assumption}
{This assumption says that the function $f_{\boldsymbol Y}$ preserves the total order from $\Omega_{\boldsymbol U}$ to $\Omega_{\boldsymbol Y}$ given ${\boldsymbol X}={\boldsymbol x}, {\boldsymbol C}={\boldsymbol c}$.}

\begin{assumption}[Strict conditional  monotonicity over $f_{\boldsymbol Y}$]
\label{SAS2}
{The function $f_{\boldsymbol Y}({\boldsymbol x},{\boldsymbol c},{\boldsymbol U})$ is either (i) strictly monotonic increasing on ${\boldsymbol U}$ with $\preceq$ for all ${\boldsymbol x} \in \Omega_{\boldsymbol X}$ and ${\boldsymbol c} \in \Omega_{\boldsymbol C}$
almost surely w.r.t. $\mathbb{P}_{\boldsymbol U}$ with $\sup_{{\boldsymbol u} \in \Omega_{\boldsymbol U}}\mathfrak{p}({\boldsymbol u}|{\boldsymbol C}={\boldsymbol c})<\infty$ for all ${\boldsymbol c} \in \Omega_{\boldsymbol C}$, or (ii) strictly monotonic decreasing on ${\boldsymbol U}$ with $\preceq$ for all ${\boldsymbol x} \in \Omega_{\boldsymbol X}$ and ${\boldsymbol c} \in \Omega_{\boldsymbol C}$
almost surely w.r.t. $\mathbb{P}_{\boldsymbol U}$ with $\sup_{{\boldsymbol u} \in \Omega_{\boldsymbol U}}\mathfrak{p}({\boldsymbol u}|{\boldsymbol C}={\boldsymbol c})<\infty$ for all ${\boldsymbol c} \in \Omega_{\boldsymbol C}$.} 
\end{assumption}
This assumption implies that there exists a one-to-one mapping from $\Omega_{\boldsymbol U}$ to $\Omega_{\boldsymbol Y}$ given ${\boldsymbol X}={\boldsymbol x}, {\boldsymbol C}={\boldsymbol c}$.

Assumptions \ref{MONO2}, \ref{AS2}, and \ref{SAS2} reduce to Assumptions \ref{MONO_A}, \ref{AS1}, and \ref{SAS1} under SCM ${\cal M}_{S}$, respectively.
\begin{comment}
\begin{assumption}[Conditional rank preservation]
\label{RP2}
The potential outcomes ${\boldsymbol Y}_{{\boldsymbol x}}$ satisfy (i) the uniqueness of ranking, i.e, ${\boldsymbol Y}_{\boldsymbol x}({\boldsymbol c},{\boldsymbol u}_0) \ne {\boldsymbol Y}_{\boldsymbol x}({\boldsymbol c},{\boldsymbol u}_1)$ for all ${\boldsymbol x} \in \Omega_{\boldsymbol X}$, ${\boldsymbol c}\in \Omega_{\boldsymbol C}$, and $\mathbb{P}_{\boldsymbol U}$-almost every ${\boldsymbol u}_0,{\boldsymbol u}_1 \in \Omega_{\boldsymbol U}$ such that ${\boldsymbol u}_0 \ne {\boldsymbol u}_1$, and (ii) the rank preservation, i.e,  ${\boldsymbol Y}_{{\boldsymbol x}_0}({\boldsymbol c},{\boldsymbol u}_0) \prec {\boldsymbol Y}_{{\boldsymbol x}_0}({\boldsymbol c},{\boldsymbol u}_1) \Rightarrow  {\boldsymbol Y}_{{\boldsymbol x}_1}({\boldsymbol c},{\boldsymbol u}_0) \prec {\boldsymbol Y}_{{\boldsymbol x}_1}({\boldsymbol c},{\boldsymbol u}_1)$
and
${\boldsymbol Y}_{{\boldsymbol x}_0}({\boldsymbol c},{\boldsymbol u}_0) \succ {\boldsymbol Y}_{{\boldsymbol x}_0}({\boldsymbol c},{\boldsymbol u}_1) \Rightarrow  {\boldsymbol Y}_{{\boldsymbol x}_1}({\boldsymbol c},{\boldsymbol u}_0) \succ {\boldsymbol Y}_{{\boldsymbol x}_1}({\boldsymbol c},{\boldsymbol u}_1)$
hold for all ${\boldsymbol x}_0,{\boldsymbol x}_1 \in \Omega_{\boldsymbol X}$, ${\boldsymbol c}\in \Omega_{\boldsymbol C}$, and $\mathbb{P}_{\boldsymbol U}$-almost every ${\boldsymbol u}_0,{\boldsymbol u}_1 \in \Omega_{\boldsymbol U}$ such that ${\boldsymbol u}_0 \ne {\boldsymbol u}_1$. %\jin{"either or" means you just need one of the conditions holds. I'd think you need both to hold? These are two independent conditions. Same question for Assumption 4.6.} \yuta{I require both, and fixed it.}
\end{assumption}
\end{comment}
We establish the relationships between Assumptions \ref{MONO2}, \ref{AS2}, and \ref{SAS2} under the following assumption: 
\begin{assumption}
\label{SUP2}
Potential outcome ${\boldsymbol Y}_{\boldsymbol x}$ has conditional PDF $p_{{\boldsymbol Y}_{\boldsymbol x}|{\boldsymbol C}={\boldsymbol c}}$ given ${\boldsymbol C}={\boldsymbol c}$ for each ${\boldsymbol x} \in \Omega_{\boldsymbol X}$ and ${\boldsymbol c} \in \Omega_{\boldsymbol C}$, and its support $\{{\boldsymbol y} \in \Omega_{\boldsymbol Y}: p_{{\boldsymbol Y}_{\boldsymbol x}|{\boldsymbol C}={\boldsymbol c}}({\boldsymbol y}) \ne0 \}$ is the same
%$[-\infty,\infty]^{d_Y}$ 
for each ${\boldsymbol x} \in \Omega_{\boldsymbol X}$ and ${\boldsymbol c} \in \Omega_{\boldsymbol C}$.
\end{assumption}
This assumption is similar to Assumption \ref{SUP1} and reasonable for continuous variables.
For example, the multivariate linear regression model with Gaussian noise in \citep{Hannart2018} satisfies this assumption.
%We have the following results: 
\begin{theorem}
\label{prop1}
Under SCM ${\cal M}_{T}$ and Assumption \ref{SUP2}, 
Assumptions \ref{MONO2} and \ref{AS2} are equivalent, and 
%Assumptions \ref{SAS2} and \ref{RP2} are equivalent. 
{Assumption \ref{SAS2} is a strictly stronger requirement than \ref{AS2}.}
\end{theorem}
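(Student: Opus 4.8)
The plan is to prove the statement in three parts: (a) $\ref{AS2}\Rightarrow\ref{MONO2}$; (b) $\ref{MONO2}\Rightarrow\ref{AS2}$ (this is where Assumption~\ref{SUP2} is needed); and (c) that \ref{SAS2} implies \ref{AS2} but is strictly stronger. The whole argument mirrors the scalar Theorem~\ref{E12}, with the linear order on $\mathbb{R}$ replaced throughout by the total order $\preceq$ on $\Omega_{\boldsymbol{U}}$ and $\Omega_{\boldsymbol{Y}}$. Since under ${\cal M}_T$ we have $\boldsymbol{Y}_{\boldsymbol{x}}=f_{\boldsymbol{Y}}(\boldsymbol{x},\boldsymbol{C},\boldsymbol{U})$, all the conditional probabilities in \ref{MONO2} are computed, after fixing $\boldsymbol{C}=\boldsymbol{c}$, under the conditional law $\mathbb{P}_{\boldsymbol{U}\mid\boldsymbol{C}=\boldsymbol{c}}$, and I would work with this law throughout.

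For part (a), I would fix $\boldsymbol{c}$ and assume without loss of generality the increasing branch of \ref{AS2} (the decreasing branch being symmetric). The key observation is that for each $\boldsymbol{x}$ the sublevel set $A_{\boldsymbol{x}}\defeq\{\boldsymbol{u}\in\Omega_{\boldsymbol{U}}: f_{\boldsymbol{Y}}(\boldsymbol{x},\boldsymbol{c},\boldsymbol{u})\prec\boldsymbol{y}\}$ is a down-set for $\preceq$: if $\boldsymbol{u}\in A_{\boldsymbol{x}}$ and $\boldsymbol{u}'\preceq\boldsymbol{u}$, monotonicity gives $f_{\boldsymbol{Y}}(\boldsymbol{x},\boldsymbol{c},\boldsymbol{u}')\preceq f_{\boldsymbol{Y}}(\boldsymbol{x},\boldsymbol{c},\boldsymbol{u})\prec\boldsymbol{y}$. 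Because $\preceq$ is a total order, any two down-sets are nested, so (up to the $\mathbb{P}_{\boldsymbol{U}}$-null set on which monotonicity may fail) either $A_{\boldsymbol{x}_0}\subseteq A_{\boldsymbol{x}_1}$ or $A_{\boldsymbol{x}_1}\subseteq A_{\boldsymbol{x}_0}$. Using totality to write $\{\boldsymbol{y}\preceq\boldsymbol{Y}_{\boldsymbol{x}}\}=A_{\boldsymbol{x}}^{c}$, the two events of \ref{MONO2} are exactly $A_{\boldsymbol{x}_0}\setminus A_{\boldsymbol{x}_1}$ and $A_{\boldsymbol{x}_1}\setminus A_{\boldsymbol{x}_0}$; nestedness makes one of them empty, hence null, which is precisely \ref{MONO2}. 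This direction does not require \ref{SUP2}; only a brief note that a.s.-$\mathbb{P}_{\boldsymbol{U}}$ monotonicity transfers to a.s.-$\mathbb{P}_{\boldsymbol{U}\mid\boldsymbol{C}=\boldsymbol{c}}$ for a.e. $\boldsymbol{c}$ is needed.

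For part (b), I would argue the contrapositive. If \ref{AS2} fails, then $f_{\boldsymbol{Y}}$ is neither increasing for all $(\boldsymbol{x},\boldsymbol{c})$ nor decreasing for all $(\boldsymbol{x},\boldsymbol{c})$ on a $\mathbb{P}_{\boldsymbol{U}}$-positive set, so there exist $\boldsymbol{c}$, treatments $\boldsymbol{x}_0,\boldsymbol{x}_1$, and a positive-measure set of pairs $\boldsymbol{u}_0\prec\boldsymbol{u}_1$ exhibiting an order reversal at $\boldsymbol{x}_0$ alongside order preservation at $\boldsymbol{x}_1$, i.e. $f_{\boldsymbol{Y}}(\boldsymbol{x}_0,\boldsymbol{c},\boldsymbol{u}_0)\succeq f_{\boldsymbol{Y}}(\boldsymbol{x}_0,\boldsymbol{c},\boldsymbol{u}_1)$ while $f_{\boldsymbol{Y}}(\boldsymbol{x}_1,\boldsymbol{c},\boldsymbol{u}_0)\prec f_{\boldsymbol{Y}}(\boldsymbol{x}_1,\boldsymbol{c},\boldsymbol{u}_1)$. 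The role of the common-support Assumption~\ref{SUP2} is to supply a single threshold $\boldsymbol{y}$ lying strictly between the relevant outcome values for both $\boldsymbol{x}_0$ and $\boldsymbol{x}_1$, so that the same reversal forces $\mathbb{P}(\boldsymbol{Y}_{\boldsymbol{x}_0}\prec\boldsymbol{y}\preceq\boldsymbol{Y}_{\boldsymbol{x}_1}\mid\boldsymbol{c})>0$ and $\mathbb{P}(\boldsymbol{Y}_{\boldsymbol{x}_1}\prec\boldsymbol{y}\preceq\boldsymbol{Y}_{\boldsymbol{x}_0}\mid\boldsymbol{c})>0$ simultaneously, contradicting \ref{MONO2}. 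I expect this to be the main obstacle: from the bare failure of global monotonicity one must extract a \emph{single} $\boldsymbol{c}$, a \emph{single} threshold $\boldsymbol{y}$ in the shared support, and a positive-probability set of $\boldsymbol{u}$'s realizing both inequalities at once, all while managing the ``almost surely'' qualifiers and the fact that $\preceq$ is only an order (no metric or topology is available to interpolate a threshold), which is exactly what \ref{SUP2} compensates for.

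For part (c), the implication $\ref{SAS2}\Rightarrow\ref{AS2}$ is immediate, since strict monotonicity is a special case of monotonicity. To see it is strict, I would exhibit a model satisfying \ref{AS2} but not \ref{SAS2}: any monotone but non-injective $f_{\boldsymbol{Y}}$ (for instance one with flat/indicator components, in the spirit of the latent-index form), or any model with a discrete $\boldsymbol{U}$, which is monotone yet violates the density bound $\sup_{\boldsymbol{u}}\mathfrak{p}(\boldsymbol{u}\mid\boldsymbol{C}=\boldsymbol{c})<\infty$ required by \ref{SAS2}. Finally, I would remark that specializing ${\cal M}_T$ to the scalar model (dropping $\boldsymbol{C}$ and setting $d_X=d_Y=d_U=1$) collapses \ref{MONO2}, \ref{AS2}, \ref{SAS2}, \ref{SUP2} to \ref{MONO_A}, \ref{AS1}, \ref{SAS1}, \ref{SUP1}, so the result recovers Theorem~\ref{E12} and is consistent with it.
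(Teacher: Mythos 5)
Your part (a) is correct and takes a genuinely different route from the paper. The paper proves Assumption~\ref{AS2} $\Rightarrow$ Assumption~\ref{MONO2} by introducing ${\boldsymbol u}_{sup}=\sup\{{\boldsymbol u}:f_{\boldsymbol Y}({\boldsymbol x}_0,{\boldsymbol c},{\boldsymbol u})\preceq {\boldsymbol y}\}$ and running a case analysis over the increasing/decreasing branches and over how $f_{\boldsymbol Y}({\boldsymbol x}_0,{\boldsymbol c},{\boldsymbol u}_{sup})$ compares with $f_{\boldsymbol Y}({\boldsymbol x}_1,{\boldsymbol c},{\boldsymbol u}_{sup})$; this tacitly assumes that such suprema exist in $(\Omega_{\boldsymbol U},\preceq)$, which a general totally ordered set need not provide. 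Your argument (sublevel sets are down-sets, and down-sets of a total order are nested, so one of the two events in Assumption~\ref{MONO2} is empty) reaches the same conclusion with no supremum and no case split, and is therefore cleaner and slightly more general. Your part (c) also goes beyond the paper: the appendix proof of Theorem~\ref{prop1}, like that of Theorem~\ref{E12}, establishes only the equivalence and never argues the claim that Assumption~\ref{SAS2} is \emph{strictly} stronger than Assumption~\ref{AS2}, whereas you supply separating examples (a monotone but non-injective $f_{\boldsymbol Y}$, or a discrete ${\boldsymbol U}$ violating the density condition in \ref{SAS2}).

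The genuine gap is in part (b), and it is the same gap that sits in the paper's own proof. You infer from the failure of Assumption~\ref{AS2} that there exist a single ${\boldsymbol c}$, two treatments ${\boldsymbol x}_0,{\boldsymbol x}_1$, and a positive-measure set of pairs ${\boldsymbol u}_0\prec{\boldsymbol u}_1$ exhibiting an order reversal at ${\boldsymbol x}_0$ alongside order preservation at ${\boldsymbol x}_1$. That ``so'' is a non sequitur: Assumption~\ref{AS2} can fail with no cross-treatment disagreement at all. Concretely, take $f_{\boldsymbol Y}({\boldsymbol x},{\boldsymbol c},{\boldsymbol u})=g({\boldsymbol u})$ independent of ${\boldsymbol x}$ with $g$ non-monotone, e.g.\ the scalar case $g(u)=|u|$ with $U$ standard normal. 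Then ${\boldsymbol Y}_{{\boldsymbol x}_0}\equiv{\boldsymbol Y}_{{\boldsymbol x}_1}$, so both events in Assumption~\ref{MONO2} are empty and that assumption holds; Assumption~\ref{SUP2} holds (every ${\boldsymbol Y}_{\boldsymbol x}$ has the same density with support $[0,\infty)$); yet Assumption~\ref{AS2} fails, since $|u|$ is not monotone on any set of full $\mathbb{P}_U$-measure. Hence the direction \ref{MONO2} $\Rightarrow$ \ref{AS2} is false as literally stated, and no choice of threshold ${\boldsymbol y}$ --- which is all Assumption~\ref{SUP2} can supply --- can manufacture the two nonzero probabilities your contradiction requires. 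The paper masks this by writing down, as ``the negation of Assumption~\ref{AS2},'' a statement strictly stronger than the actual negation (it already builds in the two treatments, a common threshold, and opposite behaviors across them). So your instinct that this step is the main obstacle is right, but it is not a matter of managing almost-sure qualifiers or interpolating a threshold: it is a counterexample-level obstruction that neither your sketch nor the paper's argument overcomes, and only the direction you prove in part (a) survives.
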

%This is the same relationship of Assumption \ref{MONO_A}, \ref{AS1}, \ref{SAS1}, and \ref{RP1}.
For example, %if $d_U=d_Y$ with the same total order, then 
the additive noise model ${\boldsymbol Y}:=f_{\boldsymbol Y}({\boldsymbol X},{\boldsymbol C})+{\boldsymbol U}$ satisfies all Assumptions \ref{MONO2}, \ref{AS2}, and \ref{SAS2}. %and \ref{RP2}.

We denote conditional CDF
\begin{equation}
\begin{aligned}
\rho({\boldsymbol y};{\boldsymbol x},{\boldsymbol c})\defeq\mathbb{P}({\boldsymbol Y}\prec {\boldsymbol y}|{\boldsymbol X}={\boldsymbol x},{\boldsymbol C}={\boldsymbol c})%\\
%&=\mathbb{P}({\boldsymbol Y}\preceq {\boldsymbol y}|{\boldsymbol X}={\boldsymbol x},{\boldsymbol C}={\boldsymbol c})
\end{aligned}
\end{equation}
for all ${\boldsymbol y} \in \Omega_{\boldsymbol Y}$, ${\boldsymbol x} \in \Omega_{\boldsymbol X}$, and ${\boldsymbol c} \in \Omega_{\boldsymbol C}$.
%We have $\rho({\boldsymbol y};{\boldsymbol x},{\boldsymbol c})=\mathbb{P}({\boldsymbol Y}\prec {\boldsymbol y}|{\boldsymbol X}={\boldsymbol x},{\boldsymbol C}={\boldsymbol c})$ for all ${\boldsymbol y} \in \Omega_{\boldsymbol Y}$, ${\boldsymbol x} \in \Omega_{\boldsymbol X}$, and ${\boldsymbol c} \in \Omega_{\boldsymbol C}$. 
%We assume
%\begin{assumption}[Positivity]
%\label{POS2}
%    $\rho({\boldsymbol y};{\boldsymbol x}_1,{\boldsymbol c})<1$ and $0<\rho({\boldsymbol y};{\boldsymbol x}_0,{\boldsymbol c})$ for all ${\boldsymbol x}_0, {\boldsymbol x}_1 \in \Omega_{\boldsymbol X}$, ${\boldsymbol c} \in \Omega_{\boldsymbol C}$ and ${\boldsymbol y} \in \Omega_{\boldsymbol Y}$ such that.
%\end{assumption}
%An order topology is a certain topology that can be defined on any totally ordered set.
Then. we have the following theorem: 
\begin{theorem}[Identification of conditional PoC]
\label{THEO41}
{Under SCM ${\cal M}_{T}$ and}  
Assumptions \ref{ASEXO2}, \ref{MONO2} (or \ref{AS2}, \ref{SAS2}), and \ref{SUP2}, 
PNS, PN, and PS are  identifiable by
\begin{equation}
    \begin{aligned}
    &\text{PNS}({\boldsymbol y};{\boldsymbol x}_0,{\boldsymbol x}_1,{\boldsymbol c})=\max\{\rho({\boldsymbol y};{\boldsymbol x}_0,{\boldsymbol c})-\rho({\boldsymbol y};{\boldsymbol x}_1,{\boldsymbol c}),0\},\\
    &\text{PN}({\boldsymbol y};{\boldsymbol x}_0,{\boldsymbol x}_1,{\boldsymbol c})=\max\left\{\frac{\rho({\boldsymbol y};{\boldsymbol x}_0,{\boldsymbol c})-\rho({\boldsymbol y};{\boldsymbol x}_1,{\boldsymbol c})}{1-\rho({\boldsymbol y};{\boldsymbol x}_1,{\boldsymbol c})},0\right\},\\
    &\text{PS}({\boldsymbol y};{\boldsymbol x}_0,{\boldsymbol x}_1,{\boldsymbol c})=\max\left\{\frac{\rho({\boldsymbol y};{\boldsymbol x}_0,{\boldsymbol c})-\rho({\boldsymbol y};{\boldsymbol x}_1,{\boldsymbol c})}{\rho({\boldsymbol y};{\boldsymbol x}_0,{\boldsymbol c})},0\right\}
    \end{aligned}
\end{equation}
for any ${\boldsymbol x}_0,{\boldsymbol x}_1 \in \Omega_{\boldsymbol X}$, ${\boldsymbol c} \in \Omega_{\boldsymbol C}$, and ${\boldsymbol y} \in \Omega_{\boldsymbol Y}$ such that $\rho({\boldsymbol y};{\boldsymbol x}_1,{\boldsymbol c})<1$ and $\rho({\boldsymbol y};{\boldsymbol x}_0,{\boldsymbol c})>0$.
\end{theorem}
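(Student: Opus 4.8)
The plan is to reduce everything to the scalar identification argument behind Theorem~\ref{THEO1}, since the conditional PoC in Definition~\ref{def41} differ from their scalar counterparts only by replacing scalar comparisons with the total order $\prec$ and by conditioning throughout on $\boldsymbol{C}=\boldsymbol{c}$. Because Theorem~\ref{prop1} shows that, under Assumption~\ref{SUP2}, the monotonicity conditions \ref{MONO2}, \ref{AS2}, and \ref{SAS2} are equivalent (with \ref{SAS2} strictly stronger), it suffices to carry out the proof assuming \ref{MONO2}; the ``(or \ref{AS2}, \ref{SAS2})'' part then follows verbatim.

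First I would identify PNS. Fix $\boldsymbol{x}_0,\boldsymbol{x}_1,\boldsymbol{y},\boldsymbol{c}$. Since $\Omega_{\boldsymbol{Y}}$ is totally ordered by $\prec$, the events $\{\boldsymbol{Y}_{\boldsymbol{x}_1}\prec\boldsymbol{y}\}$ and $\{\boldsymbol{y}\preceq\boldsymbol{Y}_{\boldsymbol{x}_1}\}$ partition the sample space $\mathbb{P}_{\boldsymbol{U}}$-almost surely, so intersecting with $\{\boldsymbol{Y}_{\boldsymbol{x}_0}\prec\boldsymbol{y}\}$ gives
\begin{equation}
\mathbb{P}(\boldsymbol{Y}_{\boldsymbol{x}_0}\prec\boldsymbol{y}\mid\boldsymbol{C}=\boldsymbol{c})=\mathbb{P}(\boldsymbol{Y}_{\boldsymbol{x}_0}\prec\boldsymbol{y},\boldsymbol{Y}_{\boldsymbol{x}_1}\prec\boldsymbol{y}\mid\boldsymbol{C}=\boldsymbol{c})+\text{PNS}(\boldsymbol{y};\boldsymbol{x}_0,\boldsymbol{x}_1,\boldsymbol{c}).
\end{equation}
I would then invoke Assumption~\ref{MONO2}, which forces one of two cases. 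If $\mathbb{P}(\boldsymbol{Y}_{\boldsymbol{x}_0}\prec\boldsymbol{y}\preceq\boldsymbol{Y}_{\boldsymbol{x}_1}\mid\boldsymbol{C}=\boldsymbol{c})=0$, then PNS vanishes and the displayed identity forces $\{\boldsymbol{Y}_{\boldsymbol{x}_0}\prec\boldsymbol{y}\}\subseteq\{\boldsymbol{Y}_{\boldsymbol{x}_1}\prec\boldsymbol{y}\}$ a.s., so $\rho(\boldsymbol{y};\boldsymbol{x}_0,\boldsymbol{c})\leq\rho(\boldsymbol{y};\boldsymbol{x}_1,\boldsymbol{c})$; if instead $\mathbb{P}(\boldsymbol{Y}_{\boldsymbol{x}_1}\prec\boldsymbol{y}\preceq\boldsymbol{Y}_{\boldsymbol{x}_0}\mid\boldsymbol{C}=\boldsymbol{c})=0$, then $\{\boldsymbol{Y}_{\boldsymbol{x}_1}\prec\boldsymbol{y}\}\subseteq\{\boldsymbol{Y}_{\boldsymbol{x}_0}\prec\boldsymbol{y}\}$ a.s., so the first term on the right equals $\mathbb{P}(\boldsymbol{Y}_{\boldsymbol{x}_1}\prec\boldsymbol{y}\mid\boldsymbol{C}=\boldsymbol{c})$. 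Rewriting $\mathbb{P}(\boldsymbol{Y}_{\boldsymbol{x}}\prec\boldsymbol{y}\mid\boldsymbol{C}=\boldsymbol{c})=\rho(\boldsymbol{y};\boldsymbol{x},\boldsymbol{c})$ via Assumption~\ref{ASEXO2} then yields $\text{PNS}=\max\{\rho(\boldsymbol{y};\boldsymbol{x}_0,\boldsymbol{c})-\rho(\boldsymbol{y};\boldsymbol{x}_1,\boldsymbol{c}),0\}$ in both cases.

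For PN and PS I would first apply consistency within $\mathcal{M}_T$: on $\{\boldsymbol{X}=\boldsymbol{x}_1\}$ the observed $\boldsymbol{Y}$ equals $\boldsymbol{Y}_{\boldsymbol{x}_1}$, and on $\{\boldsymbol{X}=\boldsymbol{x}_0\}$ it equals $\boldsymbol{Y}_{\boldsymbol{x}_0}$. Hence
\begin{equation}
\text{PN}(\boldsymbol{y};\boldsymbol{x}_0,\boldsymbol{x}_1,\boldsymbol{c})=\frac{\mathbb{P}(\boldsymbol{Y}_{\boldsymbol{x}_0}\prec\boldsymbol{y}\preceq\boldsymbol{Y}_{\boldsymbol{x}_1}\mid\boldsymbol{X}=\boldsymbol{x}_1,\boldsymbol{C}=\boldsymbol{c})}{\mathbb{P}(\boldsymbol{y}\preceq\boldsymbol{Y}_{\boldsymbol{x}_1}\mid\boldsymbol{X}=\boldsymbol{x}_1,\boldsymbol{C}=\boldsymbol{c})},
\end{equation}
and symmetrically for PS with $\{\boldsymbol{Y}_{\boldsymbol{x}_0}\prec\boldsymbol{y}\}$ as the conditioning event in the denominator. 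The denominators reduce, via the marginal exogeneity in Assumption~\ref{ASEXO2} together with the total-order complement, to $1-\rho(\boldsymbol{y};\boldsymbol{x}_1,\boldsymbol{c})$ and $\rho(\boldsymbol{y};\boldsymbol{x}_0,\boldsymbol{c})$, which are strictly positive by the hypotheses $\rho(\boldsymbol{y};\boldsymbol{x}_1,\boldsymbol{c})<1$ and $\rho(\boldsymbol{y};\boldsymbol{x}_0,\boldsymbol{c})>0$. The numerators are exactly the conditional PNS already identified, once the conditioning on $\boldsymbol{X}$ is removed. Substituting the PNS formula and pulling $\max$ through the positive denominators gives the stated PN and PS expressions.

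The main obstacle is this last step: the numerators involve the pair $(\boldsymbol{Y}_{\boldsymbol{x}_0},\boldsymbol{Y}_{\boldsymbol{x}_1})$ jointly conditioned on $\boldsymbol{X}$, whereas Assumption~\ref{ASEXO2} is stated only marginally for each single $\boldsymbol{Y}_{\boldsymbol{x}}$. I would close this gap by exploiting the structure of $\mathcal{M}_T$: every potential outcome $\boldsymbol{Y}_{\boldsymbol{x}}=f_{\boldsymbol{Y}}(\boldsymbol{x},\boldsymbol{C},\boldsymbol{U})$ is a deterministic function of $(\boldsymbol{C},\boldsymbol{U})$, while $\boldsymbol{X}=f_{\boldsymbol{X}}(\boldsymbol{C},\boldsymbol{\epsilon}_{\boldsymbol{X}})$ is a function of $(\boldsymbol{C},\boldsymbol{\epsilon}_{\boldsymbol{X}})$; the same exogenous-independence condition that underlies \ref{ASEXO2} (namely $\boldsymbol{U}\indep\boldsymbol{\epsilon}_{\boldsymbol{X}}\mid\boldsymbol{C}$, as noted after the assumption) renders the entire response map, and in particular $(\boldsymbol{Y}_{\boldsymbol{x}_0},\boldsymbol{Y}_{\boldsymbol{x}_1})$, conditionally independent of $\boldsymbol{X}$ given $\boldsymbol{C}$. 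This licenses replacing conditioning on $\{\boldsymbol{X}=\boldsymbol{x}_1\}$ (resp. $\{\boldsymbol{X}=\boldsymbol{x}_0\}$) by conditioning on $\{\boldsymbol{C}=\boldsymbol{c}\}$ in the numerators. A secondary point to handle with care is that all order relations and complementations must be asserted $\mathbb{P}_{\boldsymbol{U}}$-almost surely using the total order on $\Omega_{\boldsymbol{Y}}$, and that it is precisely the equivalence of Theorem~\ref{prop1} under Assumption~\ref{SUP2} that lets the argument run unchanged when \ref{AS2} or \ref{SAS2} is assumed in place of \ref{MONO2}.
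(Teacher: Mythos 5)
Your proposal is correct, but it reaches the PNS formula by a genuinely different route than the paper. The paper's proof works in the latent space: under Assumption~\ref{AS2} it introduces thresholds ${\boldsymbol u}_{\rho({\boldsymbol y};{\boldsymbol x},{\boldsymbol c})}=\sup\{{\boldsymbol u}: f_{\boldsymbol Y}({\boldsymbol x},{\boldsymbol c},{\boldsymbol u})\prec{\boldsymbol y}\}$, rewrites $\{{\boldsymbol Y}_{{\boldsymbol x}_0}\prec{\boldsymbol y}\preceq{\boldsymbol Y}_{{\boldsymbol x}_1}\}$ given ${\boldsymbol C}={\boldsymbol c}$ as the event that ${\boldsymbol U}$ lies between the two thresholds, and reads off the clipped difference of conditional CDFs; monotonicity of $f_{\boldsymbol Y}$ is the engine, with Assumption~\ref{MONO2} handled only through the equivalence in Theorem~\ref{prop1}. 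You instead argue purely at the level of potential outcomes: partition on $\{{\boldsymbol Y}_{{\boldsymbol x}_1}\prec{\boldsymbol y}\}$ versus $\{{\boldsymbol y}\preceq{\boldsymbol Y}_{{\boldsymbol x}_1}\}$ (using totality of $\preceq$) and run the two-case analysis dictated directly by Assumption~\ref{MONO2}. This is more elementary, and it makes explicit something the paper's proof obscures: under \ref{MONO2} itself, neither \ref{SUP2} nor the structural function plays any role in the identification; \ref{SUP2} is needed only to convert \ref{AS2}/\ref{SAS2} into \ref{MONO2} via Theorem~\ref{prop1}, exactly as you organize the argument. For PN and PS, you and the paper use the same reduction $\text{PN}=\text{PNS}/(1-\rho({\boldsymbol y};{\boldsymbol x}_1,{\boldsymbol c}))$ and $\text{PS}=\text{PNS}/\rho({\boldsymbol y};{\boldsymbol x}_0,{\boldsymbol c})$, but the paper simply asserts this relationship, whereas you correctly flag that equating the numerator $\mathbb{P}({\boldsymbol Y}_{{\boldsymbol x}_0}\prec{\boldsymbol y}\preceq{\boldsymbol Y}_{{\boldsymbol x}_1}\mid{\boldsymbol X}={\boldsymbol x}_1,{\boldsymbol C}={\boldsymbol c})$ with the unconditional PNS requires more than the marginal statement of Assumption~\ref{ASEXO2}.

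One caveat on how you close that gap: you invoke ${\boldsymbol U}\indep{\boldsymbol \epsilon}_{\boldsymbol X}\mid{\boldsymbol C}$, which the paper presents (in the scalar case, after Assumption~\ref{ASEXO}) as a \emph{sufficient} condition for exogeneity, not a consequence of it, so strictly speaking you are strengthening the hypothesis slightly beyond what Assumption~\ref{ASEXO2} states. A fix that stays within the stated assumptions is available from your own case analysis: under \ref{MONO2}, given ${\boldsymbol C}={\boldsymbol c}$ the event $\{{\boldsymbol Y}_{{\boldsymbol x}_0}\prec{\boldsymbol y}\preceq{\boldsymbol Y}_{{\boldsymbol x}_1}\}$ is almost surely either null or coincides with $\{{\boldsymbol Y}_{{\boldsymbol x}_0}\prec{\boldsymbol y}\}\setminus\{{\boldsymbol Y}_{{\boldsymbol x}_1}\prec{\boldsymbol y}\}$, and conditional-probability-zero events remain null after further conditioning on ${\boldsymbol X}={\boldsymbol x}_1$, so the numerator can be identified term by term from marginal exogeneity alone. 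Since the paper's own proof leaves this exact step implicit, your proposal is at least as rigorous as the published argument.
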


%This theorem also consists of conditional CDF.

{\bf Remark.}
PoC, like $\text{PNS}({\boldsymbol y};{\boldsymbol x}_0,{\boldsymbol x}_1)$, can be computed through conditional PoC:
\begin{equation}
    \text{PNS}({\boldsymbol y};{\boldsymbol x}_0,{\boldsymbol x}_1)=\int_{{\boldsymbol c} \in \Omega_{\boldsymbol C}} \text{PNS}({\boldsymbol y};{\boldsymbol x}_0,{\boldsymbol x}_1,{\boldsymbol c}) \mathfrak{p}({\boldsymbol c})d{\boldsymbol c}
\end{equation}
where $\mathfrak{p}({\boldsymbol c})$ is PDF of ${\boldsymbol C}$.
Then, we can estimate it under weaker conditions than required by Theorem \ref{THEO1} % using observational data, e.g., the backdoor criterion \citep{Pearl09}.  \jin{Please provide justification/proof for this claim. You are essentially claiming Assumption 4.1 is weaker than Assumption 3.1.}
since the conditional version of the assumptions required by Theorem~\ref{THEO41} are weaker. 

%\yuta{Exogeneity for vector treatment and outcome is "${\boldsymbol Y}_{\boldsymbol x}$ is independent of ${\boldsymbol X}$ for all ${\boldsymbol x} \in \Omega_{\boldsymbol X}$". Exogeneity implies conditional exogeneity, Assumption \ref{ASEXO2}, and vice versa does not hold.}

\section{Variants of Probabilities of Causation}

\label{sec-variant}
%\jin{Extensions of Probabilities of Causation? Variants of Probabilities of Causation? "Complicated" doesn't sound a good title}

\label{sec5}
In this section, we study several more complicated variants of PoC.

% under a totally ordered vector structural causal model with the subject's covariates. 
%\jin{What happens to PN and PS?} \yuta{PNS with Evidence $({\boldsymbol y}',{\boldsymbol x}',{\boldsymbol c})$ is the variants of PN and PS.}

\subsection{PNS with Evidence}

% $({\boldsymbol y}',{\boldsymbol x}',{\boldsymbol c})$}

We consider PNS with evidence $({\boldsymbol Y}={\boldsymbol y}',{\boldsymbol X}={\boldsymbol x}',{\boldsymbol C}={\boldsymbol c})$ denoted by $({\boldsymbol y}',{\boldsymbol x}',{\boldsymbol c})$.\footnote{Note that PNS with evidence $({\boldsymbol y}',{\boldsymbol x}',{\boldsymbol c})$ include PN and PS with evidence as special cases.} Evidence provides the situation-specific information and restricts the attention to PNS for a sub-population.

For instance, revisiting Example~\ref{ex-1}, for a patient with a certain age and body weight, a doctor may want to know 
the probability  that the patient's  blood sugar level would be greater than or equal to the  threshold $y$ had they taken $x_1$ amount of insulin, and would be less than $y$ had they taken $x_0$ insulin, 
when the patient took $x'$ amount of insulin and had blood sugar level $y'$. This probability is given by $\mathbb{P}({ Y}_{{ x}_0}< { y} \leq { Y}_{{ x}_1}|{ Y}={ y}',{ X}={ x}',{\boldsymbol C}={\boldsymbol c})$ where ${\boldsymbol c}$ stands for the patient's  age and  body weight. % and his other diseases, were ${\boldsymbol C}={\boldsymbol c}$.

Note that for a binary treatment and outcome, PNS with evidence $(X=x_1,Y=y_1)$ coincides with PN, and PNS with evidence $(X=x_0,Y=y_0)$ coincides with PS. However, for continuous treatment and outcome, we could have  PNS with different evidence.

%Unlike the case of binary outcome and treatment, we may often have more detailed evidence $({\boldsymbol Y}={\boldsymbol y}',{\boldsymbol X}={\boldsymbol x}',{\boldsymbol C}={\boldsymbol c})$. \yuta{Evidence is the situation-specific information, and restrict the value of ${\boldsymbol U}$ \citep[p.96]{Pearl2016}.}

%\yuta{For a binary treatment and outcome, PNS with evidence $(X=x_1,Y=y_1)$ coincides with PN, and PNS with evidence $(X=x_0,Y=y_0)$ coincides with PS. However, for continuous treatment and outcome, we could have PNS with evidence $({\boldsymbol Y}={\boldsymbol y}',{\boldsymbol X}={\boldsymbol x}',{\boldsymbol C}={\boldsymbol c})$ is different from the evidence of PN or PS, $({\boldsymbol X}={\boldsymbol x}_1,{\boldsymbol y} \preceq {\boldsymbol Y},{\boldsymbol C}={\boldsymbol c})$ or $({\boldsymbol X}={\boldsymbol x}_1,{\boldsymbol Y}\prec{\boldsymbol y},{\boldsymbol C}={\boldsymbol c})$.}

%\st{ not $({\boldsymbol y} \prec {\boldsymbol Y},{\boldsymbol X}={\boldsymbol x}_1,{\boldsymbol C}={\boldsymbol c})$ or $({\boldsymbol Y}\preceq{\boldsymbol y},{\boldsymbol X}={\boldsymbol x}_0,{\boldsymbol C}={\boldsymbol c})$ of PN or PS.}
%We let $({\boldsymbol Y}={\boldsymbol y}',{\boldsymbol X}={\boldsymbol x}',{\boldsymbol C}={\boldsymbol c})$ be $({\boldsymbol y}',{\boldsymbol x}',{\boldsymbol c})$. Then, we introduce conditional PNS with evidence $({\boldsymbol y}',{\boldsymbol x}',{\boldsymbol c})$.
%${\boldsymbol E}=({\boldsymbol Y},{\boldsymbol X})$ and ${\boldsymbol e}=({\boldsymbol y}',{\boldsymbol x}')$:
\begin{definition}[Conditional PNS with evidence $({\boldsymbol y}',{\boldsymbol x}',{\boldsymbol c})$]
We define conditional PNS with evidence $({\boldsymbol y}',{\boldsymbol x}',{\boldsymbol c})$ as
\label{EV1}
\begin{equation}
\begin{aligned}
    &\text{PNS}({\boldsymbol y};{\boldsymbol x}_0,{\boldsymbol x}_1,{\boldsymbol y}',{\boldsymbol x}',{\boldsymbol c})\\
    &\hspace{0cm}\defeq \mathbb{P}({\boldsymbol Y}_{{\boldsymbol x}_0}\prec {\boldsymbol y} \preceq {\boldsymbol Y}_{{\boldsymbol x}_1}|{\boldsymbol Y}={\boldsymbol y}',{\boldsymbol X}={\boldsymbol x}',{\boldsymbol C}={\boldsymbol c})  
    \end{aligned}
\end{equation}
for any ${\boldsymbol x}_0,{\boldsymbol x}_1, {\boldsymbol x}' \in \Omega_{\boldsymbol X}$, ${\boldsymbol c} \in \Omega_{\boldsymbol C}$, and ${\boldsymbol y}, {\boldsymbol y}' \in \Omega_{\boldsymbol Y}$.
\end{definition}
$\text{PNS}({\boldsymbol y};{\boldsymbol x}_0,{\boldsymbol x}_1,{\boldsymbol y}',{\boldsymbol x}',{\boldsymbol c})$ provides a  measure of the sufficiency and necessity of ${\boldsymbol x}_1$ w.r.t. ${\boldsymbol x}_0$ to produce ${\boldsymbol Y}\succeq {\boldsymbol y}$ given the evidence $({\boldsymbol Y}={\boldsymbol y}',{\boldsymbol X}={\boldsymbol x}',{\boldsymbol C}={\boldsymbol c})$.

We denote the conditional CDF 
\begin{equation}
\begin{aligned}
    \rho^{o}({\boldsymbol y}';{\boldsymbol x}',{\boldsymbol c})\defeq\mathbb{P}({\boldsymbol Y}\preceq{\boldsymbol y}'|{\boldsymbol X}={\boldsymbol x}',{\boldsymbol C}={\boldsymbol c})
    %&=\mathbb{P}({\boldsymbol Y}\prec{\boldsymbol y}'|{\boldsymbol X}={\boldsymbol x}',{\boldsymbol C}={\boldsymbol c}),
    \end{aligned}
\end{equation}
for any ${\boldsymbol x}' \in \Omega_{\boldsymbol X}$, ${\boldsymbol y}' \in \Omega_{\boldsymbol Y}$, and ${\boldsymbol c} \in \Omega_{\boldsymbol C}$, {which differs from $\rho({\boldsymbol y}';{\boldsymbol x}',{\boldsymbol c})$ in that it includes the point ${\boldsymbol Y}={\boldsymbol y}'$.} 
%We have $\rho^{o}({\boldsymbol y}';{\boldsymbol x}',{\boldsymbol c})=\mathbb{P}({\boldsymbol Y}\preceq{\boldsymbol y}'|{\boldsymbol X}={\boldsymbol x}',{\boldsymbol C}={\boldsymbol c})$

We obtain the following theorem:
\begin{theorem}[Identification of conditional  PNS with evidence $({\boldsymbol y}',{\boldsymbol x}',{\boldsymbol c})$]
\label{THE51}
Under SCM ${\cal M}_{T}$ and Assumptions \ref{ASEXO2}, \ref{MONO2} (or \ref{AS2}, \ref{SAS2}), and \ref{SUP2}, we have %the two following statements:

{\bf (A).} If $\rho({\boldsymbol y}';{\boldsymbol x}',{\boldsymbol c})\ne\rho^{o}({\boldsymbol y}';{\boldsymbol x}',{\boldsymbol c})$, then we have
\begin{equation}
    \text{PNS}({\boldsymbol y};{\boldsymbol x}_0,{\boldsymbol x}_1,{\boldsymbol y}',{\boldsymbol x}',{\boldsymbol c})=\max\{{\alpha}/{\beta},0\},
\end{equation}
where 
\begin{equation}
\begin{aligned}
&\alpha=\min\{\rho({\boldsymbol y};{\boldsymbol x}_0,{\boldsymbol c}),\rho^{o}({\boldsymbol y}';{\boldsymbol x}',{\boldsymbol c})\}\\
&\hspace{1cm}-\max\{\rho({\boldsymbol y};{\boldsymbol x}_1,{\boldsymbol c}),\rho({\boldsymbol y}';{\boldsymbol x}',{\boldsymbol c})\},\\
%\end{aligned}
%\end{equation}
%\begin{equation}
&\beta=\rho^{o}({\boldsymbol y}';{\boldsymbol x}',{\boldsymbol c})-\rho({\boldsymbol y}';{\boldsymbol x}',{\boldsymbol c})
\end{aligned}
\end{equation}
for any ${\boldsymbol x}_0,{\boldsymbol x}_1, {\boldsymbol x}' \in \Omega_{\boldsymbol X}$, ${\boldsymbol c} \in \Omega_{\boldsymbol C}$, ${\boldsymbol y}' \in \Omega_{\boldsymbol Y}$, and ${\boldsymbol y} \in \Omega_{\boldsymbol Y}$.

{\bf (B).} If $\rho({\boldsymbol y}';{\boldsymbol x}',{\boldsymbol c})=\rho^{o}({\boldsymbol y}';{\boldsymbol x}',{\boldsymbol c})$, then we have
\begin{equation}
\begin{aligned}
&\text{PNS}({\boldsymbol y};{\boldsymbol x}_0,{\boldsymbol x}_1,{\boldsymbol y}',{\boldsymbol x}',{\boldsymbol c})\\
&=\mathbb{I}(\rho({\boldsymbol y};{\boldsymbol x}_1,{\boldsymbol c}) \leq 
%\rho^{o}({\boldsymbol y}';{\boldsymbol x}',{\boldsymbol c})=
\rho({\boldsymbol y}';{\boldsymbol x}',{\boldsymbol c})< \rho({\boldsymbol y};{\boldsymbol x}_0,{\boldsymbol c}))
\end{aligned}
\end{equation}
    for any ${\boldsymbol x}_0,{\boldsymbol x}_1, {\boldsymbol x}' \in \Omega_{\boldsymbol X}$, ${\boldsymbol c} \in \Omega_{\boldsymbol C}$, ${\boldsymbol y}' \in \Omega_{\boldsymbol Y}$, and ${\boldsymbol y} \in \Omega_{\boldsymbol Y}$.
\end{theorem}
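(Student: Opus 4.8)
The plan is to reduce the conditional PNS with evidence to a ratio of two probabilities written purely in terms of potential outcomes given ${\boldsymbol C}={\boldsymbol c}$, and then to evaluate both using the comonotone (non-crossing trajectory) structure already exploited in the proof of Theorem~\ref{THEO41}.

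First I would write, by the definition of conditional probability,
\[
\text{PNS}({\boldsymbol y};{\boldsymbol x}_0,{\boldsymbol x}_1,{\boldsymbol y}',{\boldsymbol x}',{\boldsymbol c})=\frac{\mathbb{P}({\boldsymbol Y}_{{\boldsymbol x}_0}\prec {\boldsymbol y} \preceq {\boldsymbol Y}_{{\boldsymbol x}_1},\,{\boldsymbol Y}={\boldsymbol y}'|{\boldsymbol X}={\boldsymbol x}',{\boldsymbol C}={\boldsymbol c})}{\mathbb{P}({\boldsymbol Y}={\boldsymbol y}'|{\boldsymbol X}={\boldsymbol x}',{\boldsymbol C}={\boldsymbol c})}.
\]
Given ${\boldsymbol X}={\boldsymbol x}'$ we have ${\boldsymbol Y}={\boldsymbol Y}_{{\boldsymbol x}'}$, so both events depend only on potential outcomes; conditional exogeneity (Assumption~\ref{ASEXO2}), which under ${\cal M}_{T}$ makes the whole family $\{{\boldsymbol Y}_{\boldsymbol x}\}$ jointly independent of ${\boldsymbol X}$ given ${\boldsymbol C}$, lets me drop the conditioning on ${\boldsymbol X}={\boldsymbol x}'$ and keep only ${\boldsymbol C}={\boldsymbol c}$. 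The denominator then equals $\mathbb{P}({\boldsymbol Y}_{{\boldsymbol x}'}={\boldsymbol y}'|{\boldsymbol C}={\boldsymbol c})=\rho^{o}({\boldsymbol y}';{\boldsymbol x}',{\boldsymbol c})-\rho({\boldsymbol y}';{\boldsymbol x}',{\boldsymbol c})=\beta$, which is exactly why the regimes $\beta\ne 0$ and $\beta=0$ must be separated.

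Second, I would invoke the comonotone representation. Under Assumptions~\ref{MONO2} (equivalently \ref{AS2} or \ref{SAS2}) and \ref{SUP2}, conditional on ${\boldsymbol C}={\boldsymbol c}$ every ${\boldsymbol Y}_{\boldsymbol x}$ is a monotone transform of the common ${\boldsymbol U}$ in the total order $\preceq$ --- the non-crossing-trajectory property behind Theorem~\ref{THEO41} --- so the family $\{{\boldsymbol Y}_{\boldsymbol x}\}$ is comonotone given ${\boldsymbol C}={\boldsymbol c}$. Hence there is a single uniform rank $R$ on $[0,1]$ for which, up to $\mathbb{P}_{{\boldsymbol U}|{\boldsymbol c}}$-null sets, $\{{\boldsymbol Y}_{\boldsymbol x}\prec{\boldsymbol y}\}=\{R<\rho({\boldsymbol y};{\boldsymbol x},{\boldsymbol c})\}$ and $\{{\boldsymbol Y}_{\boldsymbol x}\preceq{\boldsymbol y}\}=\{R<\rho^{o}({\boldsymbol y};{\boldsymbol x},{\boldsymbol c})\}$. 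Translating the three events yields the half-open rank intervals
\[
\{{\boldsymbol Y}_{{\boldsymbol x}_0}\prec{\boldsymbol y}\preceq{\boldsymbol Y}_{{\boldsymbol x}_1}\}=\{\rho({\boldsymbol y};{\boldsymbol x}_1,{\boldsymbol c})\le R<\rho({\boldsymbol y};{\boldsymbol x}_0,{\boldsymbol c})\},\quad \{{\boldsymbol Y}_{{\boldsymbol x}'}={\boldsymbol y}'\}=\{\rho({\boldsymbol y}';{\boldsymbol x}',{\boldsymbol c})\le R<\rho^{o}({\boldsymbol y}';{\boldsymbol x}',{\boldsymbol c})\}.
\]

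For part (A), $\beta>0$, the numerator is the length of the intersection of these two half-open intervals, namely $\max\{\min\{\rho({\boldsymbol y};{\boldsymbol x}_0,{\boldsymbol c}),\rho^{o}({\boldsymbol y}';{\boldsymbol x}',{\boldsymbol c})\}-\max\{\rho({\boldsymbol y};{\boldsymbol x}_1,{\boldsymbol c}),\rho({\boldsymbol y}';{\boldsymbol x}',{\boldsymbol c})\},0\}=\max\{\alpha,0\}$, and dividing by $\beta>0$ gives $\max\{\alpha/\beta,0\}$. For part (B), $\beta=0$ forces $\rho=\rho^{o}$ at $({\boldsymbol y}';{\boldsymbol x}',{\boldsymbol c})$, so the conditioning interval degenerates to the single rank value $r'=\rho({\boldsymbol y}';{\boldsymbol x}',{\boldsymbol c})$, a null event; here I would pass to the regular conditional probability, realized as the limit of conditioning on ${\boldsymbol Y}$ in a shrinking $\preceq$-neighborhood of ${\boldsymbol y}'$. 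As the neighborhood collapses, the conditional probability of the rank interval $[\rho({\boldsymbol y};{\boldsymbol x}_1,{\boldsymbol c}),\rho({\boldsymbol y};{\boldsymbol x}_0,{\boldsymbol c}))$ converges to $\mathbb{I}(r'\in[\rho({\boldsymbol y};{\boldsymbol x}_1,{\boldsymbol c}),\rho({\boldsymbol y};{\boldsymbol x}_0,{\boldsymbol c})))$, i.e. the stated indicator. The main obstacle is precisely this degenerate conditioning in (B): I must argue the limit exists and is independent of the shape of the shrinking neighborhood, and pin down the boundary convention (closed on the left, open on the right) so that the half-open interval delivers $\rho({\boldsymbol y};{\boldsymbol x}_1,{\boldsymbol c})\le r'<\rho({\boldsymbol y};{\boldsymbol x}_0,{\boldsymbol c})$ rather than some other mix of strict and non-strict inequalities.
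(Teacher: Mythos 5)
Your proposal is correct and follows essentially the same route as the paper's proof: the same Bayes decomposition, the same use of conditional exogeneity and the monotonicity assumptions to convert every event into a half-open interval in a common one-dimensional parametrization (the paper uses latent thresholds ${\boldsymbol u}_{\rho({\boldsymbol y};{\boldsymbol x},{\boldsymbol c})}=\sup\{{\boldsymbol u}: f_{\boldsymbol Y}({\boldsymbol x},{\boldsymbol c},{\boldsymbol u})\prec{\boldsymbol y}\}$ where you use a uniform rank $R$, which is equivalent), and the same interval-intersection computation for (A) with degenerate point conditioning for (B). The only difference is one of rigor rather than substance: for (B) the paper simply conditions on ${\boldsymbol u}={\boldsymbol u}_{\rho({\boldsymbol y}';{\boldsymbol x}',{\boldsymbol c})}$ and reads off the indicator $\mathbb{I}(\rho({\boldsymbol y};{\boldsymbol x}_1,{\boldsymbol c})\leq\rho({\boldsymbol y}';{\boldsymbol x}',{\boldsymbol c})<\rho({\boldsymbol y};{\boldsymbol x}_0,{\boldsymbol c}))$, skipping the shrinking-neighborhood regular-conditional-probability argument that you correctly flag as the remaining technical obstacle.
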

%\jin{For case (B), PNS is either 0 or 1. Please discuss/explain why this happens.}
We provide an explanation of this result based on analyzing the trajectories of potential outcomes in Appendix \ref{app01}.

Assumption \ref{SAS2} 
%or \ref{RP2} 
implies $\rho({\boldsymbol y}';{\boldsymbol x}',{\boldsymbol c})=\rho^{o}({\boldsymbol y}';{\boldsymbol x}',{\boldsymbol c})$. % and the value of ${\boldsymbol u}$ is uniquely determined for each subject. \jin{What does "the value of ${\boldsymbol u}$ is uniquely determined for each subject" mean? Isn't a subject$=(u, \epsilon_X, \epsilon_C)$?}
Then, we have the following corollary:
\begin{corollary}
\label{COR1}
Under SCM ${\cal M}_{T}$ and Assumptions \ref{ASEXO2}, \ref{SAS2}, and \ref{SUP2},
%(or \ref{RP2}), 
%\st{if $\rho({\boldsymbol y}';{\boldsymbol x}',{\boldsymbol c})=\rho^{o}({\boldsymbol y}';{\boldsymbol x}',{\boldsymbol c})$,}
we have
\begin{equation}
\begin{aligned}
&\text{PNS}({\boldsymbol y};{\boldsymbol x}_0,{\boldsymbol x}_1,{\boldsymbol y}',{\boldsymbol x}',{\boldsymbol c})\\
&=\mathbb{I}(\rho({\boldsymbol y};{\boldsymbol x}_1,{\boldsymbol c}) \leq 
%\rho^{o}({\boldsymbol y}';{\boldsymbol x}',{\boldsymbol c})=
\rho({\boldsymbol y}';{\boldsymbol x}',{\boldsymbol c})< \rho({\boldsymbol y};{\boldsymbol x}_0,{\boldsymbol c}))
\end{aligned}
\end{equation}
    for any ${\boldsymbol x}_0,{\boldsymbol x}_1, {\boldsymbol x}' \in \Omega_{\boldsymbol X}$, ${\boldsymbol c} \in \Omega_{\boldsymbol C}$, ${\boldsymbol y}' \in \Omega_{\boldsymbol Y}$, and ${\boldsymbol y} \in \Omega_{\boldsymbol Y}$. 
\end{corollary}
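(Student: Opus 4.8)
The plan is to derive Corollary~\ref{COR1} as the special case of Theorem~\ref{THE51} in which branch (A) never occurs. Theorem~\ref{THE51} already applies under Assumptions~\ref{ASEXO2}, \ref{SAS2}, and \ref{SUP2}, since \ref{SAS2} is one of the interchangeable monotonicity hypotheses allowed there. Thus the entire task reduces to verifying that Assumption~\ref{SAS2} forces the equality $\rho({\boldsymbol y}';{\boldsymbol x}',{\boldsymbol c})=\rho^{o}({\boldsymbol y}';{\boldsymbol x}',{\boldsymbol c})$ for all ${\boldsymbol x}', {\boldsymbol y}', {\boldsymbol c}$, which is exactly the condition defining branch (B). Once this is shown, the indicator formula in the corollary is read off verbatim from Theorem~\ref{THE51}(B).

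The crux is to establish that the conditional law of the potential outcome ${\boldsymbol Y}_{{\boldsymbol x}'}$ given ${\boldsymbol C}={\boldsymbol c}$ is atomless. First I would note that, by the two CDF definitions, $\rho^{o}({\boldsymbol y}';{\boldsymbol x}',{\boldsymbol c})-\rho({\boldsymbol y}';{\boldsymbol x}',{\boldsymbol c})=\mathbb{P}({\boldsymbol Y}={\boldsymbol y}'|{\boldsymbol X}={\boldsymbol x}',{\boldsymbol C}={\boldsymbol c})$, and that conditional exogeneity (Assumption~\ref{ASEXO2}) rewrites this as $\mathbb{P}({\boldsymbol Y}_{{\boldsymbol x}'}={\boldsymbol y}'|{\boldsymbol C}={\boldsymbol c})$. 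Next I would argue that under strict monotonicity the map ${\boldsymbol u}\mapsto f_{\boldsymbol Y}({\boldsymbol x}',{\boldsymbol c},{\boldsymbol u})$ is injective $\mathbb{P}_{\boldsymbol U}$-almost surely: because $\preceq$ is a total order, any two distinct ${\boldsymbol u}_0\ne{\boldsymbol u}_1$ are comparable, and strict monotonicity then maps them to distinct values, so no two points collide. Hence the preimage $\{{\boldsymbol u}: f_{\boldsymbol Y}({\boldsymbol x}',{\boldsymbol c},{\boldsymbol u})={\boldsymbol y}'\}$ is, outside the almost-sure null set, at most a single point. Finally, the requirement $\sup_{{\boldsymbol u}}\mathfrak{p}({\boldsymbol u}|{\boldsymbol C}={\boldsymbol c})<\infty$ in Assumption~\ref{SAS2} makes the conditional law of ${\boldsymbol U}$ absolutely continuous, so it assigns zero mass to any singleton; combined with the null set this yields $\mathbb{P}({\boldsymbol Y}_{{\boldsymbol x}'}={\boldsymbol y}'|{\boldsymbol C}={\boldsymbol c})=0$ and therefore $\rho({\boldsymbol y}';{\boldsymbol x}',{\boldsymbol c})=\rho^{o}({\boldsymbol y}';{\boldsymbol x}',{\boldsymbol c})$.

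Having landed squarely in branch (B), I would substitute the equality into Theorem~\ref{THE51}(B) to obtain $\text{PNS}({\boldsymbol y};{\boldsymbol x}_0,{\boldsymbol x}_1,{\boldsymbol y}',{\boldsymbol x}',{\boldsymbol c})=\mathbb{I}(\rho({\boldsymbol y};{\boldsymbol x}_1,{\boldsymbol c})\leq\rho({\boldsymbol y}';{\boldsymbol x}',{\boldsymbol c})<\rho({\boldsymbol y};{\boldsymbol x}_0,{\boldsymbol c}))$, completing the argument. The main obstacle, and the only step requiring genuine care, is the atomlessness claim: the strict-monotonicity hypothesis holds only $\mathbb{P}_{\boldsymbol U}$-almost surely, so injectivity may fail on a null set that must be absorbed, and $\preceq$ is a total order on a vector space rather than the usual order on $\mathbb{R}$, so it is precisely totality (comparability of every pair) that prevents distinct ${\boldsymbol u}$ from being mapped to the same ${\boldsymbol y}'$. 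Verifying that a bounded conditional density kills singletons is then routine.
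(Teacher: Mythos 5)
Your proposal is correct and takes essentially the same route as the paper: the paper also derives Corollary~\ref{COR1} by observing (just before stating it) that Assumption~\ref{SAS2} forces $\rho({\boldsymbol y}';{\boldsymbol x}',{\boldsymbol c})=\rho^{o}({\boldsymbol y}';{\boldsymbol x}',{\boldsymbol c})$, so that branch (B) of Theorem~\ref{THE51} applies verbatim. Your atomlessness argument---injectivity of ${\boldsymbol u}\mapsto f_{\boldsymbol Y}({\boldsymbol x}',{\boldsymbol c},{\boldsymbol u})$ from strict monotonicity plus totality of $\preceq$, and zero mass of singletons from the bounded conditional density, with the almost-sure null set absorbed---correctly supplies the detail that the paper leaves implicit.
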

\begin{comment}
\yuta{For example, multivariate and multivariable additive noise model ${\boldsymbol Y}=f_{\boldsymbol Y}({\boldsymbol X},{\boldsymbol C})+{\boldsymbol U}$.
We have $\mathbb{E}[{\boldsymbol Y}|{\boldsymbol X}={\boldsymbol x},{\boldsymbol C}={\boldsymbol c}]=f_{\boldsymbol Y}({\boldsymbol x},{\boldsymbol c})$.
Thus, the value of the ${\boldsymbol U}$ is uniquely determined by ${\boldsymbol u}={\boldsymbol y}'-f_{\boldsymbol Y}({\boldsymbol x}',{\boldsymbol c})$.}
\end{comment}

\subsection{Conditional PNS with Multi-Hypothetical Terms}

To address questions involving multiple counterfactual statements jointly, \citet{Li2022,Li2022c} considered (conditional) PNS with multi-hypothetical terms $\mathbb{P}(Y_{x_{i_1}}=y_{j_1},Y_{x_{i_2}}=y_{j_2},\ldots,Y_{x_{i_P}}=y_{j_P}|{\boldsymbol C}={\boldsymbol c})$ when $X$ and $Y$ are discrete scalar variables taking values $\{x_1,\ldots,x_P\}$ and $\{y_1,\ldots,y_Q\}$. However, their definition is not applicable to continuous outcome $Y$. Here, we define conditional PNS with multi-hypothetical terms that are applicable to both discrete and continuous cases.  

%In order to deal with multiple counterfactual statements jointly, we define conditional PoC with multi-hypothetical terms. When $X$ and $Y$ are discrete and scalar variables taking values $\{x_1,\ldots,x_P\}$ and $\{y_1,\ldots,y_Q\}$, \citet{Li2022,Li2022c} considered (conditional) PNS with multi-hypothetical terms $\mathbb{P}(Y_{x_{i_1}}=y_{j_1},Y_{x_{i_2}}=y_{j_2},\ldots,Y_{x_{i_P}}=y_{j_P}|{\boldsymbol C}={\boldsymbol c})$, where $1\leq  i_1,  \ldots, i_P \leq P$, $1\leq  j_1,  \ldots, j_P \leq Q$, $i_1 \ne \ldots \ne i_P$ and $j_1 \ne \ldots \ne j_P$. However, their definitions are not applicable to continuous outcome $Y$. 

%\citet{Li2022} defined PNS with multi-hypothetical terms by $\mathbb{P}(Y_{x_{i_1}}=y_{j_1},Y_{x_{i_2}}=y_{j_2},\ldots,Y_{x_{i_P}}=y_{j_P})$ , where $1\leq  i_1,  \ldots, i_P \leq P$, $1\leq  j_1,  \ldots, j_P \leq Q$, $i_1 \ne \ldots \ne i_P$ and $j_1 \ne \ldots \ne j_P$. \citet{Li2022c} consider consider conditional PNS with multi-hypothetical terms $\mathbb{P}(Y_{x_{i_1}}=y_{j_1},Y_{x_{i_2}}=y_{j_2},\ldots,Y_{x_{i_P}}=y_{j_P}|{\boldsymbol C}={\boldsymbol c})$, where $1\leq  i_1,  \ldots, i_P \leq P$, $1\leq  j_1,  \ldots, j_P \leq Q$, $i_1 \ne \ldots \ne i_P$ and $j_1 \ne \ldots \ne j_P$, in their objective function, i.e., Li and Pearl's benefit function. However, their definitions always takes zero if $Y$ is a continuous variable with bounded PDF and not suitable for continuous outcome.

\begin{example}  
Extending Example~\ref{ex-1},
%Returning to the example of the blood sugar level and the amount of insulin, 
the overdose of insulin may cause low blood sugar, which is also harmful to patients.
%A doctor may want to know the relationship between the blood sugar level and the amount of insulin.
%Let $X$ be the amount of insulin, $Y$ be the blood sugar level.
Then, the blood sugar level of a patient  should be between a lower bound $y_1$ and  an upper bound $y_2$. 
%We consider the problem of the optimal amount of insulin.
Let $x_0,x_1,x_2$ be three insulin amount ($x_0>x_1>x_2$). 
A doctor may conclude that the $x_1$ amount of insulin is better than $x_0$, $x_2$ if the following counterfactual situations are simultaneously true: the patient's blood sugar level 
(i) would be less than the lower bound $y_1$ had they taken $x_0$ amount of insulin, 
(ii) would be greater than or equal to the lower bound $y_1$ and less than the upper bound $y_2$ had they taken $x_1$ amount, 
and (iii) would be greater than or equal to the upper bound $y_2$ had they taken $x_2$ amount. 
The doctor wants to know the probability of the above counterfactual situations, which is given   by $\mathbb{P}(Y_{x_0}< y_1 \leq Y_{x_1} < y_2 \leq  Y_{x_2})$.
%\begin{equation}
%\begin{aligned}
%    \mathbb{P}(Y_{x_0}< y_1 \leq Y_{x_1} < y_2 \leq  Y_{x_2}).
%\end{aligned}
%\end{equation}
\end{example}

%Here, we define conditional PNS with multi-hypothetical terms that 
%\yuta{In order to deal with multiple counterfactual statements jointly, we define conditional PoC with multi-hypothetical \citep{Li2022}, which 
%are applicable to both discrete and continuous cases, as below:
%\jin{ What is the motivation for defining it?}
\begin{definition}[Conditional PNS with multi-hypothetical terms] 
\label{EV2} 
Conditional PNS with multi-hypothetical terms $\text{PNS}(\overline{{\boldsymbol y}};\overline{{\boldsymbol x}},{\boldsymbol c})$ is defined by $\mathbb{P}({\boldsymbol Y}_{{\boldsymbol x}_0} \prec {\boldsymbol y}_1 \preceq{\boldsymbol Y}_{{\boldsymbol x}_1}, {\boldsymbol Y}_{{\boldsymbol x}_1}\prec{\boldsymbol y}_2 \preceq {\boldsymbol Y}_{{\boldsymbol x}_2},\ldots, {\boldsymbol Y}_{{\boldsymbol x}_{P-1}} \prec {\boldsymbol y}_P\preceq{\boldsymbol Y}_{{\boldsymbol x}_P}|{\boldsymbol C}={\boldsymbol c})$ for any sets of values $\overline{{\boldsymbol x}}=({\boldsymbol x}_0,{\boldsymbol x}_1,\ldots,{\boldsymbol x}_P)$, $\overline{{\boldsymbol y}}=({\boldsymbol y}_1,\ldots,{\boldsymbol y}_P)$, and any ${\boldsymbol c} \in \Omega_{\boldsymbol C}$, where $\overline{{\boldsymbol y}}$ is a set of  thresholds of outcome, and $\overline{{\boldsymbol x}}$ is a set of  treatments.   
\end{definition}
%\yuta{This definition consists of $P$ couterfactual hypothetical terms ${\boldsymbol Y}_{{\boldsymbol x}_0} \preceq {\boldsymbol y}_1 \prec{\boldsymbol Y}_{{\boldsymbol x}_1}$, ${\boldsymbol Y}_{{\boldsymbol x}_1} \preceq {\boldsymbol y}_2 \prec{\boldsymbol Y}_{{\boldsymbol x}_2}$, $\ldots$, ${\boldsymbol Y}_{{\boldsymbol x}_{P-1}} \preceq {\boldsymbol y}_P \prec{\boldsymbol Y}_{{\boldsymbol x}_P}$. This is the extension of conditional PNS in Def \ref{def41}. }

For instance, when $\overline{{\boldsymbol x}}=({\boldsymbol x}_0,{\boldsymbol x}_1,{\boldsymbol x}_2)$ and $\overline{{\boldsymbol y}}=({\boldsymbol y}_1,{\boldsymbol y}_2)$, 
%given ${\boldsymbol C}={\boldsymbol c}$, 
$\text{PNS}(\overline{{\boldsymbol y}};\overline{{\boldsymbol x}},{\boldsymbol c})=\mathbb{P}({\boldsymbol Y}_{{\boldsymbol x}_0} \prec {\boldsymbol y}_1 \preceq{\boldsymbol Y}_{{\boldsymbol x}_1}\prec{\boldsymbol y}_2 \preceq {\boldsymbol Y}_{{\boldsymbol x}_2}|{\boldsymbol C}={\boldsymbol c})$ %stands for the probability that (i) ${\boldsymbol Y}\succeq {\boldsymbol y}_2$ would happen had he taken ${\boldsymbol x}_2$, (ii) ${\boldsymbol Y}\succeq {\boldsymbol y}_1$ would happen and ${\boldsymbol Y}\succeq {\boldsymbol y}_2$ would not happen had he taken ${\boldsymbol x}_1$, and (iii) ${\boldsymbol Y}\succeq {\boldsymbol y}_1$ would not happen had he taken ${\boldsymbol x}_0$. It 
measures the sufficiency and necessity of ${\boldsymbol x}_1$ w.r.t. ${\boldsymbol x}_0,{\boldsymbol x}_2$ to produce ${\boldsymbol y}_1\preceq {\boldsymbol Y}\prec {\boldsymbol y}_2$ given ${\boldsymbol C}={\boldsymbol c}$.

We have the following theorem:
\begin{theorem}[Identification of conditional PNS with multi-hypothetical terms]
\label{THE52}
Under SCM ${\cal M}_{T}$ and Assumptions \ref{ASEXO2}, \ref{MONO2} (or \ref{AS2}, \ref{SAS2}), and \ref{SUP2}, 
%PoC with multi-hypothetical terms
$\text{PNS}(\overline{{\boldsymbol y}};\overline{{\boldsymbol x}},{\boldsymbol c})$ is identifiable by
\begin{equation}
\begin{aligned}
\text{PNS}(\overline{{\boldsymbol y}};\overline{{\boldsymbol x}},{\boldsymbol c})=&\max\Big\{\min_{p=1,\ldots,P}\{\rho({\boldsymbol y}_{p};{\boldsymbol x}_{p-1},{\boldsymbol c})\}\\
&\hspace{0.9cm}-\max_{p=1,\ldots,P}\{\rho({\boldsymbol y}_{p};{\boldsymbol x}_p,{\boldsymbol c})\},0\Big\}
\end{aligned}
\end{equation}
for any $\overline{{\boldsymbol x}}=({\boldsymbol x}_0,{\boldsymbol x}_1,\ldots,{\boldsymbol x}_P) \in \Omega_{\boldsymbol X}^{P+1}$, $\overline{{\boldsymbol y}}=({\boldsymbol y}_1,\ldots,{\boldsymbol y}_P)\in \Omega_{\boldsymbol Y}^P$, and ${\boldsymbol c} \in \Omega_{\boldsymbol C}$.
\end{theorem}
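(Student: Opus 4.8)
The plan is to reduce the joint probability of the $P$ counterfactual events to a set-algebra computation on the exogenous space $\Omega_{\boldsymbol U}$, exploiting the total order $\preceq$. First I would invoke Theorem~\ref{prop1} to replace Assumption~\ref{MONO2} by the equivalent structural-monotonicity Assumption~\ref{AS2} (and note \ref{SAS2} implies \ref{AS2}), so that without loss of generality $f_{\boldsymbol Y}({\boldsymbol x},{\boldsymbol c},\cdot)$ is monotone increasing on ${\boldsymbol U}$ with respect to $\preceq$, the decreasing case being symmetric. Working with the conditional law of ${\boldsymbol U}$ given ${\boldsymbol C}={\boldsymbol c}$, I would define for each treatment ${\boldsymbol x}$ and threshold ${\boldsymbol y}$ the set $L_{\boldsymbol x}({\boldsymbol y})=\{{\boldsymbol u}\in\Omega_{\boldsymbol U}:f_{\boldsymbol Y}({\boldsymbol x},{\boldsymbol c},{\boldsymbol u})\prec{\boldsymbol y}\}$. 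By monotonicity this is a down-set (lower set) of $(\Omega_{\boldsymbol U},\preceq)$, and since ${\boldsymbol Y}_{\boldsymbol x}=f_{\boldsymbol Y}({\boldsymbol x},{\boldsymbol c},{\boldsymbol U})$ given ${\boldsymbol C}={\boldsymbol c}$, conditional exogeneity (Assumption~\ref{ASEXO2}) gives $\mathbb{P}({\boldsymbol U}\in L_{\boldsymbol x}({\boldsymbol y})|{\boldsymbol C}={\boldsymbol c})=\mathbb{P}({\boldsymbol Y}_{\boldsymbol x}\prec{\boldsymbol y}|{\boldsymbol C}={\boldsymbol c})=\rho({\boldsymbol y};{\boldsymbol x},{\boldsymbol c})$.

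The crucial structural step is a nesting lemma: in a totally ordered set, any two down-sets are comparable by inclusion. Indeed, if $L_1\not\subseteq L_2$ and $L_2\not\subseteq L_1$, one may pick $a\in L_1\setminus L_2$ and $b\in L_2\setminus L_1$; totality forces $a\preceq b$ or $b\preceq a$, and either case contradicts downward-closedness. Hence the entire finite family $\{L_{{\boldsymbol x}_p}({\boldsymbol y}_q)\}$ is linearly ordered by inclusion, so finite intersections and unions collapse to extremal members, and by monotonicity of probability $\mathbb{P}(\bigcap_{p}L_{{\boldsymbol x}_{p-1}}({\boldsymbol y}_p)|{\boldsymbol C}={\boldsymbol c})=\min_{p}\rho({\boldsymbol y}_p;{\boldsymbol x}_{p-1},{\boldsymbol c})$ and $\mathbb{P}(\bigcup_{p}L_{{\boldsymbol x}_p}({\boldsymbol y}_p)|{\boldsymbol C}={\boldsymbol c})=\max_{p}\rho({\boldsymbol y}_p;{\boldsymbol x}_p,{\boldsymbol c})$.

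With this in hand I would rewrite the event $E$ defining $\text{PNS}(\overline{{\boldsymbol y}};\overline{{\boldsymbol x}},{\boldsymbol c})$. Each conjunct splits as $\{{\boldsymbol Y}_{{\boldsymbol x}_{p-1}}\prec{\boldsymbol y}_p\preceq{\boldsymbol Y}_{{\boldsymbol x}_p}\}=\{{\boldsymbol U}\in L_{{\boldsymbol x}_{p-1}}({\boldsymbol y}_p)\}\cap\{{\boldsymbol U}\notin L_{{\boldsymbol x}_p}({\boldsymbol y}_p)\}$, so that
\begin{equation*}
E=\Big(\bigcap_{p=1}^{P}L_{{\boldsymbol x}_{p-1}}({\boldsymbol y}_p)\Big)\cap\Big(\bigcup_{p=1}^{P}L_{{\boldsymbol x}_p}({\boldsymbol y}_p)\Big)^{c}=:L_{\min}\setminus L_{\max},
\end{equation*}
where $L_{\min}$ and $L_{\max}$ are down-sets with conditional masses $\min_{p}\rho({\boldsymbol y}_p;{\boldsymbol x}_{p-1},{\boldsymbol c})$ and $\max_{p}\rho({\boldsymbol y}_p;{\boldsymbol x}_p,{\boldsymbol c})$. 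By the nesting lemma $L_{\min}$ and $L_{\max}$ are comparable, so $\mathbb{P}(L_{\min}\setminus L_{\max}|{\boldsymbol C}={\boldsymbol c})$ equals $\mathbb{P}(L_{\min})-\mathbb{P}(L_{\max})$ when $L_{\max}\subseteq L_{\min}$ and $0$ when $L_{\min}\subseteq L_{\max}$; both are captured by $\max\{\mathbb{P}(L_{\min})-\mathbb{P}(L_{\max}),0\}$, which is exactly the claimed expression.

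The main obstacle is obtaining nesting across \emph{different} thresholds ${\boldsymbol y}_1,\ldots,{\boldsymbol y}_P$ and treatments simultaneously: Assumption~\ref{MONO2} as stated only constrains pairs sharing a common threshold, so the comparison cannot be read off directly and must be routed through the $f_{\boldsymbol Y}$-based form of monotonicity (Assumption~\ref{AS2}, available here because Assumption~\ref{SUP2} holds) together with the down-set lemma. I would also verify that the ``almost surely'' qualifier in Assumption~\ref{AS2} and possible equalities $\mathbb{P}(L_{\min})=\mathbb{P}(L_{\max})$ do not corrupt the set-difference computation; since nesting renders $L_{\min}\setminus L_{\max}$ either a genuine difference of comparable sets or null up to a $\mathbb{P}_{\boldsymbol U}$-null set, the outer $\max\{\cdot,0\}$ absorbs every boundary case.
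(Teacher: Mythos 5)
Your proposal is correct, and its skeleton matches the paper's own proof: both arguments fix ${\boldsymbol C}={\boldsymbol c}$, work under the structural form of monotonicity (Assumption~\ref{AS2}) rather than Assumption~\ref{MONO2} directly, push every counterfactual conjunct into the exogenous space via ${\boldsymbol Y}_{{\boldsymbol x}}=f_{\boldsymbol Y}({\boldsymbol x},{\boldsymbol c},{\boldsymbol U})$ and Assumption~\ref{ASEXO2}, and use the total order on $\Omega_{\boldsymbol U}$ to collapse the $P$-fold intersection into a single ``difference of two regions'' with conditional masses $\min_{p}\rho({\boldsymbol y}_p;{\boldsymbol x}_{p-1},{\boldsymbol c})$ and $\max_{p}\rho({\boldsymbol y}_p;{\boldsymbol x}_p,{\boldsymbol c})$. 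Where you genuinely diverge is the technical device that produces the collapse: the paper introduces cut-points ${\boldsymbol u}_{\rho({\boldsymbol y}_p;{\boldsymbol x},{\boldsymbol c})}=\sup\{{\boldsymbol u}:f_{\boldsymbol Y}({\boldsymbol x},{\boldsymbol c},{\boldsymbol u})\prec{\boldsymbol y}_p\}$ and rewrites each conjunct as an order interval between two such points, so that the intersection runs from the largest lower endpoint to the smallest upper endpoint; you instead keep everything at the level of the down-sets $L_{\boldsymbol x}({\boldsymbol y})$ and prove the nesting lemma that any two down-sets of a totally ordered set are comparable by inclusion, so intersections and unions collapse to extremal members. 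The computations are equivalent, but your route buys real rigor: the paper's suprema need not exist in an arbitrary totally ordered $\Omega_{\boldsymbol U}$ (no completeness or closure is assumed), and the interval formulation forces delicate bookkeeping about strict versus non-strict endpoints (indeed the paper's displayed chain contains endpoint and subscript slips of exactly this kind), whereas your set-level argument sidesteps both issues and makes the final clipping $\max\{\cdot,0\}$ transparent as the dichotomy $L_{\max}\subseteq L_{\min}$ versus $L_{\min}\subseteq L_{\max}$. The residual cost — policing the almost-sure qualifier in Assumption~\ref{AS2} and invoking Theorem~\ref{prop1} to pass from Assumption~\ref{MONO2} to Assumption~\ref{AS2} — is one the paper's proof silently pays as well, so nothing is lost.
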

We provide an explanation of this result based on analyzing the trajectories of potential outcomes in Appendix \ref{app01}.

\subsection{Conditional PNS with Multi-Hypothetical Terms and Evidence}

%$({\boldsymbol y}',{\boldsymbol x}',{\boldsymbol c})$}

%We introduce the following causal quantity combining the settings in Definitions \ref{EV1} and \ref{EV2}.
We consider PNS with multi-hypothetical terms and evidence $({\boldsymbol y}',{\boldsymbol x}',{\boldsymbol c})$, combining the settings in Definitions \ref{EV1} and \ref{EV2}.
Evidence provides the situation-specific information and restricts the attention to  a sub-population.

For instance, revisiting Example~\ref{ex-1}, for a patient with a certain age and body weight, a doctor may want to know 
the probability that
the patient's blood sugar level 
(i) would be less than the lower bound $y_1$ had they taken $x_0$ amount of insulin, 
(ii) would be greater than or equal to the lower bound $y_1$ and less than the upper bound $y_2$ had they taken $x_1$ amount, 
and (iii) would be greater than or equal to the upper bound $y_2$ had they taken $x_2$ amount,
when the patient took $x'$ amount of insulin and had blood sugar level $y'$. 
This probability is given by $\mathbb{P}(Y_{x_0}< y_1 \leq Y_{x_1} < y_2 \leq  Y_{x_2}|{ Y}={ y}',{ X}={ x}',{\boldsymbol C}={\boldsymbol c})$ 
where ${\boldsymbol c}$ stands for the patient's  age and body weight.

\begin{definition}[Conditional PNS with multi-hypothetical terms and evidence $({\boldsymbol y}',{\boldsymbol x}',{\boldsymbol c})$] 
\label{EV3}
Conditional PNS with multi-hypothetical terms and evidence $({\boldsymbol y}',{\boldsymbol x}',{\boldsymbol c})$ $\text{PNS}(\overline{{\boldsymbol y}};\overline{{\boldsymbol x}},{\boldsymbol y}',{\boldsymbol x}',{\boldsymbol c})$ is defined by $\mathbb{P}({\boldsymbol Y}_{{\boldsymbol x}_0} \prec {\boldsymbol y}_1 \preceq{\boldsymbol Y}_{{\boldsymbol x}_1}, {\boldsymbol Y}_{{\boldsymbol x}_1}\prec{\boldsymbol y}_2 \preceq {\boldsymbol Y}_{{\boldsymbol x}_2},\ldots, {\boldsymbol Y}_{{\boldsymbol x}_{P-1}} \prec {\boldsymbol y}_P\preceq{\boldsymbol Y}_{{\boldsymbol x}_P}|{\boldsymbol Y}={\boldsymbol y}',{\boldsymbol X}={\boldsymbol x}',{\boldsymbol C}={\boldsymbol c})$ for any ${\boldsymbol x}' \in \Omega_{\boldsymbol X}$, ${\boldsymbol y}' \in \Omega_{\boldsymbol Y}$, $\overline{{\boldsymbol x}}=({\boldsymbol x}_0,{\boldsymbol x}_1,\ldots,{\boldsymbol x}_P) \in \Omega_{\boldsymbol X}^{P+1}$, $\overline{{\boldsymbol y}}=({\boldsymbol y}_1,\ldots,{\boldsymbol y}_P)\in \Omega_{\boldsymbol Y}^P$, and ${\boldsymbol c} \in \Omega_{\boldsymbol C}$.
\end{definition}
%\yuta{This is the extension of PNS in both Def \ref{EV1} and \ref{EV2}. }

When $\overline{\boldsymbol x}=({\boldsymbol x}_0,{\boldsymbol x}_1,{\boldsymbol x}_2)$ and $\overline{\boldsymbol y}=({\boldsymbol y}_1,{\boldsymbol y}_2)$,
%given ${\boldsymbol C}={\boldsymbol c}$, 
$\text{PNS}(\overline{{\boldsymbol y}};\overline{{\boldsymbol x}},{\boldsymbol y}',{\boldsymbol x}',{\boldsymbol c})$ measures  the sufficiency and necessity of ${\boldsymbol x}_1$ w.r.t. ${\boldsymbol x}_0,{\boldsymbol x}_2$ to produce ${\boldsymbol y}_1\preceq {\boldsymbol Y}\prec {\boldsymbol y}_2$ given the evidence $({\boldsymbol Y}={\boldsymbol y}',{\boldsymbol X}={\boldsymbol x}',{\boldsymbol C}={\boldsymbol c})$.

We have the following theorem.
\begin{theorem}[Identification of conditional PNS with multi-hypothetical terms and evidence $({\boldsymbol y}',{\boldsymbol x}',{\boldsymbol c})$]
\label{THE53}
Under SCM ${\cal M}_{T}$ and Assumptions \ref{ASEXO2}, \ref{MONO2} (or \ref{AS2}, \ref{SAS2}), and \ref{SUP2}, we have %two following statements:

{\bf (A).} If $\rho({\boldsymbol y}';{\boldsymbol x}',{\boldsymbol c})\ne\rho^{o}({\boldsymbol y}';{\boldsymbol x}',{\boldsymbol c})$, then we have
 %($\rho^{o}({\boldsymbol y}';{\boldsymbol x}',{\boldsymbol c}) > \rho({\boldsymbol y}';{\boldsymbol x}',{\boldsymbol c})$),
    \begin{equation}
%\begin{aligned}
\label{CPNS3}
        \text{PNS}(\overline{{\boldsymbol y}};\overline{{\boldsymbol x}},{\boldsymbol y}',{\boldsymbol x}',{\boldsymbol c})=\max\left\{{\gamma}/{\delta},0\right\},
 %   \end{aligned}
\end{equation}
where 
\begin{equation}
\begin{aligned}
&\gamma=\min\Big\{\min_{p=1,\ldots,P}\{\rho({\boldsymbol y}_{p};{\boldsymbol x}_{p-1},{\boldsymbol c})\},\rho^{o}({\boldsymbol y}';{\boldsymbol x}',{\boldsymbol c}) \Big\}\\
&\hspace{0.2cm}- \max\Big\{\max_{p=1,\ldots,P}\{\rho({\boldsymbol y}_{p};{\boldsymbol x}_p,{\boldsymbol c})\},\rho({\boldsymbol y}';{\boldsymbol x}',{\boldsymbol c})\Big\},\\
&\delta=\rho^{o}({\boldsymbol y}';{\boldsymbol x}',{\boldsymbol c})-\rho({\boldsymbol y}';{\boldsymbol x}',{\boldsymbol c})
\end{aligned}
\end{equation} 
for any ${\boldsymbol x}' \in \Omega_{\boldsymbol X}$, ${\boldsymbol y}' \in \Omega_{\boldsymbol Y}$, $\overline{{\boldsymbol x}}=({\boldsymbol x}_0,{\boldsymbol x}_1,\ldots,{\boldsymbol x}_P) \in \Omega_{\boldsymbol X}^{P+1}$, $\overline{{\boldsymbol y}}=({\boldsymbol y}_1,\ldots,{\boldsymbol y}_P)\in \Omega_{\boldsymbol Y}^P$, and ${\boldsymbol c} \in \Omega_{\boldsymbol C}$.

{\bf (B).} If $\rho({\boldsymbol y}';{\boldsymbol x}',{\boldsymbol c})=\rho^{o}({\boldsymbol y}';{\boldsymbol x}',{\boldsymbol c})$, then we have
%we have
\begin{equation}
\begin{aligned}
&\text{PNS}(\overline{{\boldsymbol y}};\overline{{\boldsymbol x}},{\boldsymbol y}',{\boldsymbol x}',{\boldsymbol c})\\
&=\mathbb{I}\Big(\max_{p=1,\ldots,P}\{\rho({\boldsymbol y}_{p};{\boldsymbol x}_p,{\boldsymbol c})\} \leq
%\rho^{o}({\boldsymbol y}';{\boldsymbol x}',{\boldsymbol c})=
\rho({\boldsymbol y}';{\boldsymbol x}',{\boldsymbol c})\\
&\hspace{2.5cm}< \min_{p=1,\ldots,P}\{\rho({\boldsymbol y}_{p};{\boldsymbol x}_{p-1},{\boldsymbol c})\}\Big)
\end{aligned}
\end{equation}
 for any ${\boldsymbol x}' \in \Omega_{\boldsymbol X}$, ${\boldsymbol y}' \in \Omega_{\boldsymbol Y}$, $\overline{{\boldsymbol x}}=({\boldsymbol x}_0,{\boldsymbol x}_1,\ldots,{\boldsymbol x}_P) \in \Omega_{\boldsymbol X}^{P+1}$, $\overline{{\boldsymbol y}}=({\boldsymbol y}_1,\ldots,{\boldsymbol y}_P)\in \Omega_{\boldsymbol Y}^P$, and ${\boldsymbol c} \in \Omega_{\boldsymbol C}$.
\end{theorem}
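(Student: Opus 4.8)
The plan is to combine the conditioning (Bayes) technique behind Theorem \ref{THE51} with the multi-band rank argument behind Theorem \ref{THE52}. By the definition of conditional probability together with conditional exogeneity (Assumption \ref{ASEXO2}), which lets me replace the observed ${\boldsymbol Y}$ under ${\boldsymbol X}={\boldsymbol x}'$ by the potential outcome ${\boldsymbol Y}_{{\boldsymbol x}'}$,
\begin{equation}
\text{PNS}(\overline{{\boldsymbol y}};\overline{{\boldsymbol x}},{\boldsymbol y}',{\boldsymbol x}',{\boldsymbol c})
=\frac{\mathbb{P}(E,\,{\boldsymbol Y}_{{\boldsymbol x}'}={\boldsymbol y}'\mid{\boldsymbol C}={\boldsymbol c})}{\mathbb{P}({\boldsymbol Y}_{{\boldsymbol x}'}={\boldsymbol y}'\mid{\boldsymbol C}={\boldsymbol c})},
\end{equation}
where $E$ abbreviates the multi-hypothetical event ${\boldsymbol Y}_{{\boldsymbol x}_0}\prec{\boldsymbol y}_1\preceq{\boldsymbol Y}_{{\boldsymbol x}_1},\ldots,{\boldsymbol Y}_{{\boldsymbol x}_{P-1}}\prec{\boldsymbol y}_P\preceq{\boldsymbol Y}_{{\boldsymbol x}_P}$. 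The denominator equals $\rho^{o}({\boldsymbol y}';{\boldsymbol x}',{\boldsymbol c})-\rho({\boldsymbol y}';{\boldsymbol x}',{\boldsymbol c})=\delta$ straight from the two CDF definitions, and the split into Cases (A) and (B) is exactly the split according to whether this atom mass $\delta$ is positive or zero.

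The key device, inherited from the proofs of Theorems \ref{THEO41} and \ref{THE52}, is that Assumption \ref{MONO2} (equivalently \ref{AS2} or \ref{SAS2}, via Theorem \ref{prop1}) renders the family $\{{\boldsymbol Y}_{\boldsymbol x}\}_{{\boldsymbol x}\in\Omega_{\boldsymbol X}}$ comonotonic given ${\boldsymbol C}={\boldsymbol c}$: there is a common scalar rank $R$ for which, taking $f_{\boldsymbol Y}$ increasing in ${\boldsymbol U}$ without loss of generality,
\begin{equation}
{\boldsymbol Y}_{\boldsymbol x}\prec{\boldsymbol y}\iff R<\rho({\boldsymbol y};{\boldsymbol x},{\boldsymbol c}),\qquad {\boldsymbol y}\preceq{\boldsymbol Y}_{\boldsymbol x}\iff R\ge\rho({\boldsymbol y};{\boldsymbol x},{\boldsymbol c}),
\end{equation}
and the probability that $R$ lies in a CDF-level interval $[a,b)$ equals $b-a$. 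Each band ``${\boldsymbol Y}_{{\boldsymbol x}_{p-1}}\prec{\boldsymbol y}_p\preceq{\boldsymbol Y}_{{\boldsymbol x}_p}$'' thus becomes $\rho({\boldsymbol y}_p;{\boldsymbol x}_p,{\boldsymbol c})\le R<\rho({\boldsymbol y}_p;{\boldsymbol x}_{p-1},{\boldsymbol c})$, and the evidence ${\boldsymbol Y}_{{\boldsymbol x}'}={\boldsymbol y}'$ becomes $\rho({\boldsymbol y}';{\boldsymbol x}',{\boldsymbol c})\le R<\rho^{o}({\boldsymbol y}';{\boldsymbol x}',{\boldsymbol c})$. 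Intersecting all of these, the joint event $\{E,\,{\boldsymbol Y}_{{\boldsymbol x}'}={\boldsymbol y}'\}$ is the single rank interval
\begin{equation}
\max\Big\{\max_{p}\rho({\boldsymbol y}_{p};{\boldsymbol x}_p,{\boldsymbol c}),\,\rho({\boldsymbol y}';{\boldsymbol x}',{\boldsymbol c})\Big\}\le R<\min\Big\{\min_{p}\rho({\boldsymbol y}_{p};{\boldsymbol x}_{p-1},{\boldsymbol c}),\,\rho^{o}({\boldsymbol y}';{\boldsymbol x}',{\boldsymbol c})\Big\}.
\end{equation}
Its probability is the interval length $\max\{\gamma,0\}$, so dividing by $\delta$ yields Case (A). This is precisely the computation of Theorem \ref{THE52} carried out on the band list augmented by the evidence interval, which explains why the answer retains the same $\min/\max$ form.

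The step I expect to be the main obstacle is Case (B), where $\delta=0$ and the ratio above is an indeterminate $0/0$ because we are conditioning on the measure-zero event ${\boldsymbol Y}_{{\boldsymbol x}'}={\boldsymbol y}'$. The rank representation resolves it: when $\rho({\boldsymbol y}';{\boldsymbol x}',{\boldsymbol c})=\rho^{o}({\boldsymbol y}';{\boldsymbol x}',{\boldsymbol c})$ the evidence interval degenerates to the single level $R=\rho({\boldsymbol y}';{\boldsymbol x}',{\boldsymbol c})$ (under strict monotonicity, Assumption \ref{SAS2}, the relation ${\boldsymbol Y}_{{\boldsymbol x}'}={\boldsymbol y}'$ pins ${\boldsymbol U}$, hence $R$, uniquely almost surely). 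The event $E$ then holds for this subject if and only if its fixed rank lies in the required band interval, i.e. $\max_{p}\rho({\boldsymbol y}_{p};{\boldsymbol x}_p,{\boldsymbol c})\le\rho({\boldsymbol y}';{\boldsymbol x}',{\boldsymbol c})<\min_{p}\rho({\boldsymbol y}_{p};{\boldsymbol x}_{p-1},{\boldsymbol c})$, which is the claimed indicator. To make this rigorous I would condition on a shrinking neighbourhood of ${\boldsymbol y}'$ and pass to the limit, using the conditional density guaranteed by Assumption \ref{SUP2} so that the limiting conditional law of $R$ concentrates at $\rho({\boldsymbol y}';{\boldsymbol x}',{\boldsymbol c})$; the equivalence of Assumptions \ref{MONO2}, \ref{AS2}, and \ref{SAS2} (Theorem \ref{prop1}) guarantees consistency across the three assumption variants.
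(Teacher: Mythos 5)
Your proposal is correct and follows essentially the same route as the paper's proof: a Bayes decomposition over the evidence, translation of each hypothetical band and of the evidence event into intervals on the latent scale (your rank $R$ is just the CDF transform of the paper's thresholds ${\boldsymbol u}_{\rho(\cdot)}$), and an interval-intersection computation yielding the $\min/\max$ formula, with case (B) handled by noting that the evidence pins the subject's rank to the single level $\rho({\boldsymbol y}';{\boldsymbol x}',{\boldsymbol c})$. Your shrinking-neighbourhood limit in case (B) merely adds rigor to a step the paper asserts directly by conditioning on ${\boldsymbol u}={\boldsymbol u}_{\rho({\boldsymbol y}';{\boldsymbol x}',{\boldsymbol c})}$.
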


In addition, we have the following corollary:
\begin{corollary}
\label{COR2}
Under SCM ${\cal M}_{T}$ and Assumptions \ref{ASEXO2}, \ref{SAS2}, and \ref{SUP2},
%(or \ref{RP2}), 
%\st{if $\rho({\boldsymbol y}';{\boldsymbol x}',{\boldsymbol c})=\rho^{o}({\boldsymbol y}';{\boldsymbol x}',{\boldsymbol c})$,}
we have
%we have
\begin{equation}
\begin{aligned}
&\text{PNS}(\overline{{\boldsymbol y}};\overline{{\boldsymbol x}},{\boldsymbol y}',{\boldsymbol x}',{\boldsymbol c})\\
&=\mathbb{I}\Big(\max_{p=1,\ldots,P}\{\rho({\boldsymbol y}_{p};{\boldsymbol x}_p,{\boldsymbol c})\} \leq
%\rho^{o}({\boldsymbol y}';{\boldsymbol x}',{\boldsymbol c})=
\rho({\boldsymbol y}';{\boldsymbol x}',{\boldsymbol c})\\
&\hspace{2.5cm}< \min_{p=1,\ldots,P}\{\rho({\boldsymbol y}_{p};{\boldsymbol x}_{p-1},{\boldsymbol c})\}\Big)
\end{aligned}
\end{equation}
 for any ${\boldsymbol x}' \in \Omega_{\boldsymbol X}$, ${\boldsymbol y}' \in \Omega_{\boldsymbol Y}$, $\overline{{\boldsymbol x}}=({\boldsymbol x}_0,{\boldsymbol x}_1,\ldots,{\boldsymbol x}_P) \in \Omega_{\boldsymbol X}^{P+1}$, $\overline{{\boldsymbol y}}=({\boldsymbol y}_1,\ldots,{\boldsymbol y}_P)\in \Omega_{\boldsymbol Y}^P$, and ${\boldsymbol c} \in \Omega_{\boldsymbol C}$.
\end{corollary}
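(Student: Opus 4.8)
The plan is to observe that this corollary is an immediate specialization of Part (B) of Theorem~\ref{THE53}, so the only real work is to verify that Assumption~\ref{SAS2} forces us into that branch. First I would note that Theorem~\ref{THE53} applies under Assumptions \ref{ASEXO2}, \ref{SAS2}, and \ref{SUP2}, because \ref{SAS2} is one of the admissible monotonicity hypotheses there (indeed, by Theorem~\ref{prop1} it is strictly stronger than \ref{AS2}, which is equivalent to \ref{MONO2} under \ref{SUP2}). Thus the dichotomy in Theorem~\ref{THE53} between the cases $\rho({\boldsymbol y}';{\boldsymbol x}',{\boldsymbol c})\ne\rho^{o}({\boldsymbol y}';{\boldsymbol x}',{\boldsymbol c})$ and $\rho({\boldsymbol y}';{\boldsymbol x}',{\boldsymbol c})=\rho^{o}({\boldsymbol y}';{\boldsymbol x}',{\boldsymbol c})$ is available to us, and my task is simply to show we always fall into the latter.

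The key step is to establish $\rho({\boldsymbol y}';{\boldsymbol x}',{\boldsymbol c})=\rho^{o}({\boldsymbol y}';{\boldsymbol x}',{\boldsymbol c})$ under \ref{SAS2}, which was already recorded in the text preceding Corollary~\ref{COR1}. By the definitions of the two conditional CDFs, their difference is exactly the point mass $\rho^{o}({\boldsymbol y}';{\boldsymbol x}',{\boldsymbol c})-\rho({\boldsymbol y}';{\boldsymbol x}',{\boldsymbol c})=\mathbb{P}({\boldsymbol Y}={\boldsymbol y}'|{\boldsymbol X}={\boldsymbol x}',{\boldsymbol C}={\boldsymbol c})$, so it suffices to argue this quantity vanishes. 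Strict conditional monotonicity of $f_{\boldsymbol Y}({\boldsymbol x}',{\boldsymbol c},\cdot)$ on ${\boldsymbol U}$ with respect to $\preceq$ yields a one-to-one map from $\Omega_{\boldsymbol U}$ to $\Omega_{\boldsymbol Y}$ given ${\boldsymbol X}={\boldsymbol x}',{\boldsymbol C}={\boldsymbol c}$, and the accompanying requirement $\sup_{{\boldsymbol u}\in\Omega_{\boldsymbol U}}\mathfrak{p}({\boldsymbol u}|{\boldsymbol C}={\boldsymbol c})<\infty$ guarantees that ${\boldsymbol U}$, and hence ${\boldsymbol Y}$ given $({\boldsymbol x}',{\boldsymbol c})$, is atomless. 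Therefore $\mathbb{P}({\boldsymbol Y}={\boldsymbol y}'|{\boldsymbol X}={\boldsymbol x}',{\boldsymbol C}={\boldsymbol c})=0$ and the two conditional CDFs coincide at ${\boldsymbol y}'$.

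Finally I would substitute $\rho({\boldsymbol y}';{\boldsymbol x}',{\boldsymbol c})=\rho^{o}({\boldsymbol y}';{\boldsymbol x}',{\boldsymbol c})$ into Part (B) of Theorem~\ref{THE53}; its conclusion is verbatim the claimed indicator formula $\mathbb{I}\big(\max_{p}\{\rho({\boldsymbol y}_{p};{\boldsymbol x}_p,{\boldsymbol c})\}\leq\rho({\boldsymbol y}';{\boldsymbol x}',{\boldsymbol c})<\min_{p}\{\rho({\boldsymbol y}_{p};{\boldsymbol x}_{p-1},{\boldsymbol c})\}\big)$, which completes the argument. I do not anticipate a genuine obstacle: the entire content is the atomlessness argument above, and that is already in place before Corollary~\ref{COR1}, so the corollary reduces to invoking the correct branch of the theorem. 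The only point needing minor care is confirming the point-mass identity $\rho^{o}-\rho=\mathbb{P}({\boldsymbol Y}={\boldsymbol y}'|{\boldsymbol X}={\boldsymbol x}',{\boldsymbol C}={\boldsymbol c})$ at the specific evidence value ${\boldsymbol y}'$, which is immediate from the definitions of $\rho$ and $\rho^{o}$.
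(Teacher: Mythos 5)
Your proposal is correct and follows essentially the same route as the paper: the paper asserts (just before Corollary~\ref{COR1}) that Assumption~\ref{SAS2} implies $\rho({\boldsymbol y}';{\boldsymbol x}',{\boldsymbol c})=\rho^{o}({\boldsymbol y}';{\boldsymbol x}',{\boldsymbol c})$, and then both corollaries are read off as case (B) of the corresponding theorems, exactly as you do with Theorem~\ref{THE53}. Your atomlessness argument (injectivity of $f_{\boldsymbol Y}({\boldsymbol x}',{\boldsymbol c},\cdot)$ under strict monotonicity plus the bounded conditional density of ${\boldsymbol U}$, killing the point mass $\mathbb{P}({\boldsymbol Y}={\boldsymbol y}'|{\boldsymbol X}={\boldsymbol x}',{\boldsymbol C}={\boldsymbol c})$) is a valid filling-in of the step the paper leaves unproved.
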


\section{Application to a Real-world Dataset}

%We show an application to a real-world dataset in education.

{\bf Dataset.}
We take up an open dataset in the UC Irvine Machine Learning Repository \texttt{https://archive.ics.uci.edu/dataset/320\\/student+performance}
about student performance in mathematics in secondary education of two Portuguese schools.
Secondary education lasts three years, and students are tested once a year, three times in total.
%This data approaches student achievement in secondary education of two Portuguese schools. 
The data attributes include demographic, social, and school-related features and student grades. %and it was collected by using school reports and questionnaires.
The sample size is $649$ with no missing values. 
Prior research using this data  aimed  to predict the students' performance based on their attributes \citep{Cortez2008,Helwig2017}.
We assess the causal relationship between the students' performance, study time, and extra paid classes via estimating PoC introduced in this paper.

{\bf Variables.}
We take the scores of mathematics in the final period ($Y^1$), in the second period ($Y^2$), and in the first period ($Y^3$) as the outcome variables ${\boldsymbol Y}=(Y^1,Y^2,Y^3)$. $Y^1, Y^2, Y^3$ take values from $\{0, 1, \ldots, 20\}$.  We assume a lexicographical order $\succ_{\text{lexi}}$ on $\boldsymbol{Y}$. For example, $(Y^1,Y^2,Y^3) \succ_{\text{lexi}} (6, 6, 6)$ means ``$Y^1>6$'' or ``$Y^1=6\land Y^2>6$'' or ``$Y^1=6\land Y^2=6\land Y^3>6$''.
\begin{comment}
We pick up a tuple of the three outcome variables, ${\boldsymbol Y}=(Y^1,Y^2,Y^3)$,
\begin{itemize}
    \vspace{-0.1cm}
      \setlength{\parskip}{0.cm}
  \setlength{\itemsep}{0.15cm}
    \item[] $Y^1$: \yuta{Scores of mathematics in the final period}, \jin{What is 'final' grade?}
    \item[] $Y^2$: Scores of mathematics in the second period,
    \item[] $Y^3$: Scores of mathematics in the first period
\end{itemize}
\vspace{-0.1cm}
and they take the score from $0$ to $20$, respectively.
We introduce lexicographical order to outcomes.
Say ${\boldsymbol y}=(6,6,6)$;
the statement ``${\boldsymbol Y}\succ_{\text{lexi}}{\boldsymbol y}$'' means 
%\begin{center}
\yuta{``{\it one gets scores over 6 in the final period}'' or ``{\it one gets 6 scores in the final period and over 6 in the second period}'' or ``{\it one gets 6 scores in the final and second periods and over 6 in the first period }.''}
%    \yuta{``$Y^1>6$'' or ``$Y^1=6\land Y^2>6$'' or ``$Y^1=6\land Y^2=6\land Y^3>6$.''}
%\end{center}
\end{comment}
We consider ``\emph{study time in a week}'' ($X^1$) and ``\emph{extra paid classes within the course subject}'' ($X^2$) (yes: $X^2=2$, no: $X^2=1$) as treatment variables ${\boldsymbol X} = (X^1, X^2)$. 
%\st{Let the subject's all other attributes be the covariates ($\boldsymbol{C}$).} \jin{But in the experiments, you only selected 3 of them as covariates.} 
We select ``sex'', ``failures'', ``schoolsup'', ``famsup'', and ``goout'' as the  covariates ($\boldsymbol{C}$), which were chosen in  \citep{Helwig2017} in a previous study. % which are chosen with $p<0.05$ estimated coefficients by Table 3 in \citep{Helwig2017}, which is the previous study using this dataset.}

%\jin{We select ''school'', ``sex'', and ``age'' as the  covariates ($\boldsymbol{C}$).} \jin{Why only select these 3? Are there other attributes that have impact on both X and Y? }
%We assume Assumption \ref{RP2} which means that the counterfactual rankings of scores ${\boldsymbol Y}_{\boldsymbol x}$ among students are preserved for different ${\boldsymbol x}$ (study time and extra paid classes) given any covariates ${\boldsymbol C}={\boldsymbol c}$. 
We assume Assumption \ref{AS2} %since it is reasonable to assume 
which means that latent exogenous variables, such as the student's mental and physical conditions during the test day, have monotonic impacts on the test scores.

%\yuta{since it is reasonable to consider the counterfactual ranking of all student's scores ${\boldsymbol Y}_{\boldsymbol x}$ are preserved for any ${\boldsymbol x}$ (study time and extra paid classes) given the students covariates ${\boldsymbol C}={\boldsymbol c}$.} \jin{Justification/intuition for this assumption?}

{\bf Estimation Methods.}
%We do not discuss the estimation problems of PoC in this paper. However, each type of PoC is easily estimable through the conditional CDF, i.e., $\hat{\rho}({\boldsymbol y};{\boldsymbol x},{\boldsymbol c})$ and $\hat{\rho}^o({\boldsymbol y};{\boldsymbol x},{\boldsymbol c})$, by standard regression methods using sampled i.i.d dataset because all theorems in this paper consist of conditional CDF.
All identification theorems in the paper compute PoC through conditional CDFs, e.g. $\rho({\boldsymbol y};{\boldsymbol x},{\boldsymbol c})=\mathbb{P}({\boldsymbol Y}\prec {\boldsymbol y}|{\boldsymbol X}={\boldsymbol x},{\boldsymbol C}={\boldsymbol c})$. %$\hat{\rho}({\boldsymbol y};{\boldsymbol x},{\boldsymbol c})$. 
We estimate the conditional CDFs by logistic regression {using the ``glm'' function in R.} 
%$\mathbb{I}({\boldsymbol Y}\preceq {\boldsymbol y}) \sim {\boldsymbol X}+{\boldsymbol C}$ given values of ${\boldsymbol y}$,
%for ${\boldsymbol y}=(5,5,5)$ and $(6,6,6)$
%using R-package ``glmnet'' (\url{https://cran.r-project.org/web/packages/glmnet/index.html}). 
We conduct the bootstrapping  \citep{Efron1979} to reveal the distribution of the estimator. % of each type of PoC.
%\jin{In the appendix, you showed results by both logistic regression and logistic ridge regression. They gave very different results. Which method to use? I'm not sure it's a good idea to show results by both methods.}

%\yuta{[Comment: I have reanalyzed using glm package.]}

{\bf Results.}
We consider %the following four counterfactual statements related to 
the subject whose ID number is 1.
Let the values of her covariates be ${\boldsymbol c}_1$.
In reality, she studied for $2$ hours a week and took no extra paid classes (${\boldsymbol x}'=(2,1)$), and got $6$, $6$, and $5$ scores in the final, second, and first grades, respectively (${\boldsymbol y}'=(6,6,5)$).
The other attributes of her are shown in Appendix \ref{appB}.
%In this section, we pick up school, sex, and age as the subjects' covariates, and Appendix \ref{appB} provides the estimates by logistic regression and logistic ridge regression, including all variables. \jin{What do you mean by "including all variables"?}

In the first study, we evaluate conditional PNS, PN, and PS by setting ${\boldsymbol y}=(6,6,6)$, ${\boldsymbol x}_0=(2,1)$, ${\boldsymbol x}_1=(4,2)$, and ${\boldsymbol C}={\boldsymbol c}_1$ in Def. \ref{def41} to reveal the necessity/sufficiency of setting ${\boldsymbol x}_1$ w.r.t.
${\boldsymbol x}_0$ to produce ${\boldsymbol Y}\succeq_{\text{lexi}}{\boldsymbol y}$ in the  sub-population characterized by ${\boldsymbol C}={\boldsymbol c}_1$. 
%\yuta{Using PNS, PN, and PS, we try revealing the causal relationship between setting ${\boldsymbol x}_0$ compared to ${\boldsymbol x}_1$ and provoking  ${\boldsymbol Y}\succeq_{\text{lexi}}{\boldsymbol y}$ for a sub-population ${\boldsymbol C}={\boldsymbol c}_1$.}
The estimated values of conditional PNS, PN, and PS are 
%\jin{These numbers are different from those shown in the appendix???}
\begin{equation}
    \begin{aligned}
       &\text{PNS:} &8.862 \% &(\text{CI}: [1.122\%,19.510\%]),\\
       &\text{PN:} &9.212 \% &(\text{CI}: [1.133\%,20.647\%]),\\
      &\text{PS:}  &72.331 \% &(\text{CI}: [27.975\%,93.022\%]),
    \end{aligned}
\end{equation}
where CI represents 95$\%$ confidence intervals. 
The PNS value above represents the probability of the following statement:
\vspace{-0.25cm}
\begin{center}
    ``{\it {A student with attributes value ${\boldsymbol c}_1$}  
    would get scores ${\boldsymbol Y}\succeq_{\text{lexi}}{\boldsymbol y}$ had she studied 4 hours a week and taken extra classes and would get scores ${\boldsymbol Y}\prec_{\text{lexi}}{\boldsymbol y}$ had she studied 2 hours a week and taken no extra class.}''
\end{center}
\vspace{-0.25cm}
PN means the probability of the following statement:
\vspace{-0.25cm}
\begin{center}
    ``{\it {A student with attributes value ${\boldsymbol c}_1$} would get scores ${\boldsymbol Y}\prec_{\text{lexi}}{\boldsymbol y}$ had she studied 2 hours a week and taken no extra class when, in reality, she scored ${\boldsymbol Y}\succeq_{\text{lexi}}{\boldsymbol y}$, studied 4 hours a week, and took extra classes.}''
\end{center}
\vspace{-0.25cm} 
%For instance, 
And PS means the probability of the following statement:
\vspace{-0.25cm}
\begin{center}
    ``{\it {A student with attributes value ${\boldsymbol c}_1$} would get scores ${\boldsymbol Y}\succeq_{\text{lexi}}{\boldsymbol y}$ had she studied 4 hours a week and taken extra classes when,  in reality, she scored ${\boldsymbol Y}\prec_{\text{lexi}}{\boldsymbol y}$, studied 2 hours a week, and took no extra class.}''
\end{center}
\vspace{-0.25cm}
The results reveal that PNS and PN are relatively low, and PS is relatively high. In other words, 
studying 4 hours and taking extra classes for students with attributes value ${\boldsymbol c}_1$ are unlikely ``necessary and sufficient" or ``necessary”  to achieve ${\boldsymbol Y}\succeq_{\text{lexi}}{\boldsymbol y}$ compared to studying 2 hours and taking no extra class; however, they are highly ``sufficient".

In the second study, we consider more detailed evidence  than the first study and evaluate conditional PNS with evidence $({\boldsymbol y}',{\boldsymbol x}',{\boldsymbol c})$, letting ${\boldsymbol y}=(6,6,6)$, ${\boldsymbol y}'=(6,6,5)$, ${\boldsymbol x}_0=(2,1)$,
${\boldsymbol x}_1=(4,2)$, 
${\boldsymbol x}'=(2,1)$, and ${\boldsymbol C}={\boldsymbol c}_1$ in Def. \ref{EV1}. 
%\yuta{We consider more detailed evidence $({\boldsymbol y}',{\boldsymbol x}',{\boldsymbol c}_1)$ than the first analysis.}
The estimated value  is 
\begin{equation}
     \text{PNS:}\ \  0.024 \%\ \ \ \  (\text{CI}: [0.000\%,0.243\%]),
\end{equation}
which means 
the probability of the following statement:
\vspace{-0.25cm}
\begin{center}
    ``\it A student with attributes value ${\boldsymbol c}_1$ would get scores ${\boldsymbol Y}\succeq_{\text{lexi}}{\boldsymbol y}$ had she studied 4 hours a week and taken extra classes and
    would get scores ${\boldsymbol Y}\prec_{\text{lexi}}{\boldsymbol y}$ had she studied 2 hours a week and taken no extra class 
    when she scored ${\boldsymbol Y}={\boldsymbol y}'$, studied 2 hours a week, and took no extra class in reality.''
\end{center}
\vspace{-0.25cm}
We reveal that this probability is very low, that is, 
studying 4 hours and taking extra classes for students with $({\boldsymbol y}',{\boldsymbol x}',{\boldsymbol c}_1)$ are probably not ``necessary and sufficient" to achieve ${\boldsymbol Y}\succeq_{\text{lexi}}{\boldsymbol y}$ compared to studying 2 hours and taking no extra class.

In the third study, we evaluate conditional PNS with multi-hypothetical terms, letting ${\boldsymbol y}_1=(5,5,5)$, ${\boldsymbol y}_2=(6,6,6)$, ${\boldsymbol x}_0=(1,1)$, ${\boldsymbol x}_1=(2,1)$, ${\boldsymbol x}_2=(4,2)$, and ${\boldsymbol C}={\boldsymbol c}_1$ in Def. \ref{EV2}.
%\yuta{
%We next focus on achieving scores ${\boldsymbol y}_1 \preceq_{\text{lexi}}{\boldsymbol Y}\prec_{\text{lexi}}{\boldsymbol y}_2$.
%Using PNS with multi-hypothetical terms, we try revealing the causal relationship between setting ${\boldsymbol x}_1$ compared to ${\boldsymbol x}_0$ and ${\boldsymbol x}_2$ and provoking ${\boldsymbol y}_1 \preceq_{\text{lexi}}{\boldsymbol Y}\prec_{\text{lexi}}{\boldsymbol y}_2$ for a sub-population ${\boldsymbol C}={\boldsymbol c}_1$.}
The estimated value  is 
\begin{equation}
     \text{PNS:}\ \  0.000 \%\ \ \ \  (\text{CI}: [0.000\%,0.000\%]),
\end{equation}
which means the joint probability of the  following three  counterfactual statements:
\vspace{-0.25cm}
\begin{center}
    ``{\it (i) {A student with attributes value ${\boldsymbol c}_1$} would get scores ${\boldsymbol Y}\succeq_{\text{lexi}}{\boldsymbol y}_2$ had she studied 4 hours a week and taken extra classes,\\
    (ii) she would get scores ${\boldsymbol y}_1 \preceq_{\text{lexi}}{\boldsymbol Y}\prec_{\text{lexi}}{\boldsymbol y}_2$ had she studied 2 hours a week and taken no extra classes, and\\
    (iii) she would get scores ${\boldsymbol Y}\prec_{\text{lexi}}{\boldsymbol y}_1$ had she studied $1$ hour a week and taken no extra classes.}''
\end{center}
\vspace{-0.25cm}
We reveal that this probability is close to zero, that is, 
studying 2 hours and taking no extra class  for students with attributes value ${\boldsymbol c}_1$ are not ``necessary and sufficient" to achieve ${\boldsymbol y}_1 \preceq_{\text{lexi}}{\boldsymbol Y}\prec_{\text{lexi}}{\boldsymbol y}_2$ compared to ``studying 1 hour and taking no extra class" or ``studying 4 hours and taking extra classes".

%\yuta{[Comment: We have the folloing necessary and sufficient relationship:\\
%${\boldsymbol X}={\boldsymbol x}_1 \Rightarrow {\boldsymbol y}_1 \preceq_{\text{lexi}}{\boldsymbol Y}\prec_{\text{lexi}}{\boldsymbol y}_2$,\\
%${\boldsymbol X}={\boldsymbol x}_0 \lor {\boldsymbol X}={\boldsymbol x}_2 \Rightarrow \lnot ({\boldsymbol y}_1 \preceq_{\text{lexi}}{\boldsymbol Y}\prec_{\text{lexi}}{\boldsymbol y}_2)$,\\
%where $\lnot ({\boldsymbol X}={\boldsymbol x}_1)=({\boldsymbol X}={\boldsymbol x}_0) \lor ({\boldsymbol X}={\boldsymbol x}_2)$.
%]}

Finally, we consider more detailed evidence than the third study and  evaluate conditional PNS with multi-hypothetical terms and evidence $({\boldsymbol y}',{\boldsymbol x}',{\boldsymbol c})$, letting ${\boldsymbol y}_1=(5,5,5)$, ${\boldsymbol y}_2=(6,6,6)$, ${\boldsymbol y}'=(6,6,5)$, ${\boldsymbol x}_0=(1,1)$, ${\boldsymbol x}_1=(2,1)$,
${\boldsymbol x}_2=(4,2)$, 
${\boldsymbol x}'=(2,1)$, and ${\boldsymbol C}={\boldsymbol c}_1$ in Def. \ref{EV3}. 
%\yuta{We consider more detailed evidence $({\boldsymbol y}',{\boldsymbol x}',{\boldsymbol c}_1)$ than the third analysis.}
The estimated value is 
\begin{equation}
     \text{PNS:}\ \  96.711 \%\ \ \ \  (\text{CI}: [59.059\%,100.000\%]),
\end{equation}
which represents the probability of the above three counterfactual statements in the third study given additional information ${\boldsymbol x}'$ and ${\boldsymbol y}'$.
Unlike PNS with multi-hypothetical terms in the third study, PNS with multi-hypothetical terms and evidence $({\boldsymbol y}',{\boldsymbol x}',{\boldsymbol c}_1)$ is relatively high. That is, 
studying 2 hours and taking no extra class  with $({\boldsymbol y}',{\boldsymbol x}',{\boldsymbol c}_1)$ are highly ``necessary and sufficient" to achieve ${\boldsymbol y}_1 \preceq_{\text{lexi}}{\boldsymbol Y}\prec_{\text{lexi}}{\boldsymbol y}_2$ compared to ``studying 1 hour and taking no extra class" and ``studying 4 hours and taking extra classes". 
%\yuta{Given $({\boldsymbol y}',{\boldsymbol x}',{\boldsymbol c})$, studying 2 hours and taking no extra class can be ``necessary and sufficient" to achieve ${\boldsymbol y}_1 \preceq_{\text{lexi}}{\boldsymbol Y}\prec_{\text{lexi}}{\boldsymbol y}_2$ compared to ``studying 1 hours and taking no extra class" and ``studying 4 hours and taking extra classes".}

%The estimated values are all $0$ or $1$. \jin{What values are 0 or 1?}\yuta{[Old Results.]}

%\jin{Other potential meaningful experiments to perform? Maybe PoC of each $X^1$ and $X^2$ individually to compare with joint effects?}

We have performed additional analyses. To evaluate the effect of study time ($X^1$) only, we let ${\boldsymbol x}_1=(4,1)$ in the first and second analyses, and ${\boldsymbol x}_2=(4,1)$ in the third and fourth analyses.
The results are shown in Appendix \ref{appB}, and all estimated PoC are lower than that obtained with joint effect of study time and extra paid classes.
To evaluate the effect of extra paid classes ($X^2)$ only, we let ${\boldsymbol x}_1=(2,2)$ in the first and second analyses.
The results are shown in Appendix \ref{appB}, and all estimated PoC  are also lower than the results with joint effect.

\section{Conclusion}

We introduce new types of PoC to capture the causal effects between multiple continuous treatments and outcomes and provide identification theorems. 
The results greatly expand the range of causal questions that researchers can tackle going beyond binary treatment and outcome. 
{In this paper, we focus on the form of PoC where all treatments are intervened. 
The scenario of just intervening only a subset of all treatment variables is also useful in real life \citep{Lu2022,Li2023B}, which will be future research.} 
In settings where the monotonicity assumptions do not hold, we may explore methods for bounding PoC. 
However, for continuous variables, we can not straightforwardly apply linear programming formulation used for bounding binary PoC in \citep{Tian2000,Li2022}.
%Estimates in an application have relatively large variances as causal quantities get more complicated; thus, we need more efficient and robust estimators.
Bounding PoC introduced in this paper will be an interesting future work.

\section*{Acknowledgements}
The authors  thank the anonymous reviewers for their time and thoughtful comments. 
Yuta Kawakami was supported by JSPS KAKENHI Grant Number 22J21928. 
Manabu Kuroki was supported by JSPS KAKENHI Grant Number 21H03504 and 24K14851.
Jin Tian was partially supported by NSF grant CNS-2321786.

\bibliography{citation}
\newpage

\onecolumn

\title{Appendix to ``Probabilities of Causation for Continuous and Vector Variables”}
\maketitle

\appendix

\section{Additional Information on Background and Notation}
\label{app0}

{\bf Orders.}
We explain the orders used in this paper.
The definition of the total order is as below \citep{Harzheim2005}:
%\yuta{
%\begin{definition}[Partial order]
%    A partial order on a set $\Omega$ is a relation ``$\preceq_{\text{p}}$'' on $\Omega$ satisfying the following three conditions for all ${\boldsymbol a}_1, {\boldsymbol a}_2, {\boldsymbol a}_3 \in \Omega$:
%    \begin{enumerate}
%    \vspace{-0cm}
%      \setlength{\parskip}{0.cm}
%  \setlength{\itemsep}{0.2cm}
%        \item ${\boldsymbol a}_1\preceq_{\text{p}} {\boldsymbol a}_1$;
%        \item if ${\boldsymbol a}_1\preceq_{\text{p}} {\boldsymbol a}_2$ and ${\boldsymbol a}_2\preceq_{\text{p}} {\boldsymbol a}_3$ then ${\boldsymbol a}_1\preceq_{\text{p}} {\boldsymbol a}_3$;
 %       \item if ${\boldsymbol a}_1\preceq_{\text{p}} {\boldsymbol a}_2$ and ${\boldsymbol a}_2\preceq_{\text{p}} {\boldsymbol a}_1$ then ${\boldsymbol a}_1= {\boldsymbol a}_2$.
%    \end{enumerate}
%    \vspace{-0cm}
    %, and denote a partially ordered set $(\Omega,\preceq_{\text{p}} )$.
%\end{definition}
%In this case we say that the ordered pair $(\Omega,\preceq_{\text{p}})$ is a partially ordered set.
%}
%\jin{The above definition reads strange. Need rephrasing. The following definition needs rephrasing too.}
\begin{definition}[Total order]
    A total order on a set $\Omega$ is a relation ``$\preceq$'' on $\Omega$ satisfying the following four conditions for all ${\boldsymbol a}_1, {\boldsymbol a}_2, {\boldsymbol a}_3 \in \Omega$:
    \begin{enumerate}
    \vspace{-0cm}
      \setlength{\parskip}{0.cm}
  \setlength{\itemsep}{0.2cm}
        \item ${\boldsymbol a}_1\preceq{\boldsymbol a}_1$;
        \item if ${\boldsymbol a}_1\preceq{\boldsymbol a}_2$ and ${\boldsymbol a}_2\preceq{\boldsymbol a}_3$ then ${\boldsymbol a}_1\preceq{\boldsymbol a}_3$;
        \item if ${\boldsymbol a}_1\preceq{\boldsymbol a}_2$ and ${\boldsymbol a}_2\preceq{\boldsymbol a}_1$ then ${\boldsymbol a}_1= {\boldsymbol a}_2$;
        \item at least one of ${\boldsymbol a}_1\preceq{\boldsymbol a}_2$ and ${\boldsymbol a}_2\preceq{\boldsymbol a}_1$ holds.
    \end{enumerate}
    \vspace{-0cm}
    %, and denote a partially ordered set $(\Omega,\preceq_{\text{p}} )$.
\end{definition}
In this case we say that the ordered pair $(\Omega,\preceq)$ is a totally ordered set. 
The inequality ${\boldsymbol a}\preceq {\boldsymbol b}$ of total order means ${\boldsymbol a}\prec {\boldsymbol b}$ or ${\boldsymbol a}={\boldsymbol b}$, and the relationship $\lnot ({\boldsymbol a}\preceq {\boldsymbol b}) \Leftrightarrow {\boldsymbol a}\succ {\boldsymbol b}$ holds for a totally ordered set, where $\lnot$ means the negation.

\begin{definition}[Lexicographical order for the Cartesian product]
A lexicographic order $\prec$ %is a relation 
on the Cartesian product of two sets $\Omega_A$ and $\Omega_B$ with order relations $\preceq_A$ and $\preceq_B$ satisfies:   for all $(a_1,b_1) \in \Omega_A \times \Omega_B$ and $(a_2,b_2) \in \Omega_A \times \Omega_B$, $(a_1,b_1) \prec (a_2,b_2)$ if and only if either  
\begin{enumerate}
    \vspace{-0cm}
      \setlength{\parskip}{0.cm}
  \setlength{\itemsep}{0cm}
        \item $a_1 \prec_A a_2$, or
        \item $a_1 = a_2$ and $b_1 \prec_B b_2$.
    \end{enumerate}
    %For two vectors ${\boldsymbol a}_1=(a_1^1,a_1^2,\ldots,a_1^{d})$ and ${\boldsymbol a}_2=(a_2^1,a_2^2,\ldots,a_2^{d})$, they are ${\boldsymbol a}_1\prec_{\text{lexi}} {\boldsymbol a}_2$ if and only if ``$a_1^1<a_2^1$'' or ``both $a_1^1=a_2^1$ and $a_1^2<a_2^2$'' or $\ldots$ or ``$a_1^1=a_2^1, \ldots a_1^{d-1}=a_2^{d-1}$ and $a_1^{d} <a_2^{d}$''.
\end{definition}
The lexicographic order can be readily extended to Cartesian products of arbitrary length by recursively applying this definition, i.e., by observing that $\Omega_A\times \Omega_B \times \Omega_C =\Omega_A \times(\Omega_B \times \Omega_C)$. 
%\jin{The above definition reads awkward. All of the three definitions 2.2-2.4 are standard concepts in the literature. You may simply use the standard definitions from the literature instead of writing them yourself in an awkward way. }
This is widely used as one example of the total order for vector space.
The order defined by Mahalanobis' distance and Gaussian distribution \citep{Hannart2018} is one example of the total orders.
Briefly, they consider mapping $\phi$ from $\Omega=\mathbb{R}^{d}$ to $\mathbb{R}$, and define ${\boldsymbol a}_0 \preceq {\boldsymbol a}_1$ by the relationship $\phi({\boldsymbol a}_0) \leq \phi({\boldsymbol a}_1)$.

%Due to space constraints, all proofs are provided in the Appendix \ref{appA}. 

\section{Additional Information on Analyzing Trajectories}
\label{app01}

In this section, we give the analyzing the trajectories of potential outcomes of Theorems \ref{THE51} and \ref{THE52}.

{\bf Analyzing Trajectories
of Theorem \ref{THE51}.}
We denote ${\boldsymbol u}_{\rho({\boldsymbol y};{\boldsymbol x},{\boldsymbol c})}=\sup\{{\boldsymbol u}: f_{\boldsymbol Y}({\boldsymbol x},{\boldsymbol c},{\boldsymbol u})\prec {\boldsymbol y}\}$ and ${\boldsymbol u}_{\rho^o({\boldsymbol y};{\boldsymbol x},{\boldsymbol c})}=\sup\{{\boldsymbol u}: f_{\boldsymbol Y}({\boldsymbol x},{\boldsymbol c},{\boldsymbol u})\preceq {\boldsymbol y}\}$. 
Given ${\boldsymbol C}={\boldsymbol c}$, the trajectory $\{({\boldsymbol x},{\boldsymbol Y}_{\boldsymbol x}({\boldsymbol u})) \in \Omega_{\boldsymbol X} \times \Omega_{\boldsymbol Y};\forall {\boldsymbol x} \in \Omega_{\boldsymbol X}\}$ represents potential outcome ${\boldsymbol Y}_{\boldsymbol x}({\boldsymbol u})$ vs. ${\boldsymbol X}$ for the subject ${\boldsymbol U}={\boldsymbol u}$.
%One example of the relationship of trajectories is shown in Figure \ref{fig:2}.
\emph{Given ${\boldsymbol C}={\boldsymbol c}$, under Assumptions \ref{ASEXO2} and \ref{MONO2} (or \ref{AS2}, \ref{SAS2}), the subjects' trajectories do not cross over each other} (they may overlap).

{Consider the trajectories shown in Figure \ref{fig:2}. 
Given ${\boldsymbol C}={\boldsymbol c}$, Trajectory (1) $\{({\boldsymbol x},{\boldsymbol Y}_{\boldsymbol x}({\boldsymbol u}_{\rho({\boldsymbol y};{\boldsymbol x}_0,{\boldsymbol c})})) \in \Omega_{\boldsymbol X} \times \Omega_{\boldsymbol Y};\forall {\boldsymbol x} \in \Omega_{\boldsymbol X}\}$ goes through the point $({\boldsymbol x}_0,{\boldsymbol y})$, Trajectory (2) $\{({\boldsymbol x},{\boldsymbol Y}_{\boldsymbol x}({\boldsymbol u}_{\rho({\boldsymbol y};{\boldsymbol x}_1,{\boldsymbol c})})) \in \Omega_{\boldsymbol X} \times \Omega_{\boldsymbol Y};\forall {\boldsymbol x} \in \Omega_{\boldsymbol X}\}$ goes through the point $({\boldsymbol x}_1,{\boldsymbol y})$, Trajectory (3) $\{({\boldsymbol x},{\boldsymbol Y}_{\boldsymbol x}({\boldsymbol u}_{\rho^{o}({\boldsymbol y}';{\boldsymbol x}',{\boldsymbol c})})) \in \Omega_{\boldsymbol X} \times \Omega_{\boldsymbol Y};\forall {\boldsymbol x} \in \Omega_{\boldsymbol X}\}$ and Trajectory (4) $\{({\boldsymbol x},{\boldsymbol Y}_{\boldsymbol x}({\boldsymbol u}_{\rho({\boldsymbol y}';{\boldsymbol x}',{\boldsymbol c})})) \in \Omega_{\boldsymbol X} \times \Omega_{\boldsymbol Y};\forall {\boldsymbol x} \in \Omega_{\boldsymbol X}\}$ go through the point $({\boldsymbol x}',{\boldsymbol y}')$.
Given ${\boldsymbol C}={\boldsymbol c}$, the trajectory of subject ${\boldsymbol u}$ lies between in the region between Trajectories (1) and (2) if and only if they satisfy ${\boldsymbol Y}_{{\boldsymbol x}_0}\prec {\boldsymbol y} \preceq {\boldsymbol Y}_{{\boldsymbol x}_1}$ given ${\boldsymbol C}={\boldsymbol c}$.
Given ${\boldsymbol C}={\boldsymbol c}$, the trajectory of subject ${\boldsymbol u}$ lies in the region between Trajectories (3) and (4) if and only if they satisfy $({\boldsymbol Y}={\boldsymbol y}',{\boldsymbol X}={\boldsymbol x}')$ given ${\boldsymbol C}={\boldsymbol c}$.
Thus, we have $\mathbb{P}({\boldsymbol Y}_{{\boldsymbol x}_0}\prec {\boldsymbol y} \preceq {\boldsymbol Y}_{{\boldsymbol x}_1}|{\boldsymbol Y}={\boldsymbol y}',{\boldsymbol X}={\boldsymbol x}',{\boldsymbol C}={\boldsymbol c})$ is 
\begin{equation}
    \max\left\{\frac{\min\{\mathbb{P}({\boldsymbol Y}_{{\boldsymbol x}_0}\prec {\boldsymbol y}|{\boldsymbol C}={\boldsymbol c}),\mathbb{P}({\boldsymbol Y}_{{\boldsymbol x}'}\preceq {\boldsymbol y}'|{\boldsymbol C}={\boldsymbol c})\}-\max\{\mathbb{P}({\boldsymbol Y}_{{\boldsymbol x}_1}\prec {\boldsymbol y}|{\boldsymbol C}={\boldsymbol c}),\mathbb{P}({\boldsymbol Y}_{{\boldsymbol x}'}\prec {\boldsymbol y}'|{\boldsymbol C}={\boldsymbol c})\}}{\mathbb{P}({\boldsymbol Y}_{{\boldsymbol x}'}\preceq {\boldsymbol y}'|{\boldsymbol C}={\boldsymbol c})-\mathbb{P}({\boldsymbol Y}_{{\boldsymbol x}'}\prec {\boldsymbol y}'|{\boldsymbol C}={\boldsymbol c})},0\right\},
\end{equation}
where $\mathbb{P}({\boldsymbol Y}_{{\boldsymbol x}_0}\prec {\boldsymbol y}|{\boldsymbol C}={\boldsymbol c})$ represents the probability of a subject's trajectory being below Trajectory (1), $\mathbb{P}({\boldsymbol Y}_{{\boldsymbol x}_1}\prec {\boldsymbol y}|{\boldsymbol C}={\boldsymbol c})$ represents the probability of a subject's trajectory being below Trajectory (2),  $\mathbb{P}({\boldsymbol Y}_{{\boldsymbol x}'}\prec {\boldsymbol y}'|{\boldsymbol C}={\boldsymbol c})$ represents the probability of a subject's trajectory being below Trajectory (3) and $\mathbb{P}({\boldsymbol Y}_{{\boldsymbol x}'}\preceq {\boldsymbol y}'|{\boldsymbol C}={\boldsymbol c})$ represents the probability of a subject's trajectory being below Trajectory (4).
When $\rho({\boldsymbol y}';{\boldsymbol x}',{\boldsymbol c})=\rho^{o}({\boldsymbol y}';{\boldsymbol x}',{\boldsymbol c})$, Trajectory (3) coincides with Trajectory (4). $\text{PNS}({\boldsymbol y};{\boldsymbol x}_0,{\boldsymbol x}_1,{\boldsymbol y}',{\boldsymbol x}',{\boldsymbol c})$ represents whether Trajectory (3) or (4) lies in the region between Trajectories (1) and (2), and takes value either 0 or 1.}

\begin{figure}
    \centering
    \scalebox{1}{
    \hspace{-0.6cm}
    \begin{tikzpicture}

  \draw (0,0) .. controls (1,0.5) and (2,2) .. (3,2);
 \draw (-1,-0.25) .. controls (-0.5,-0.15) .. (0,0);
  \draw (3,2) .. controls (3.5,2) .. (4,2.5)
    node[anchor=west] {Trajectory (1)};
    %{${\boldsymbol Y}_{\boldsymbol x}({\boldsymbol c},{\boldsymbol u}_{\rho({\boldsymbol y};{\boldsymbol x}_0,{\boldsymbol c})})$};

  \draw (0,-1) .. controls (1,-0.5) and (2,-0.5) .. (3,0);
 \draw (-1,-1.7) .. controls (-0.5,-1.25) .. (0,-1);
  \draw (3,0) .. controls (3.5,0.5) .. (4,0.2)
  node[anchor=west] {Trajectory (2)};
  %{${\boldsymbol Y}_{\boldsymbol x}({\boldsymbol c},{\boldsymbol u}_{\rho({\boldsymbol y};{\boldsymbol x}_1,{\boldsymbol c})})$};

    \draw[black,dotted] (4, 0) -- (-1, 0)
      node[anchor=east] {${\boldsymbol Y}_{\boldsymbol x}={\boldsymbol y}$};

        \draw[black,dotted] (0, 2.25) -- (0, -1.5)
      node[anchor=north] {${\boldsymbol X}={\boldsymbol x}_0$};
      
   \draw[black,dotted] (3, 2.25) -- (3, -1.5)
      node[anchor=north west] {${\boldsymbol X}={\boldsymbol x}_1$};

   \draw (-1,-0.35) .. controls (-0,-0.15) .. (2,1);

  \draw (2,1) .. controls (3,1.5) .. (4,2)
    node[anchor=west]{Trajectory (3)};
    %{${\boldsymbol Y}_{\boldsymbol x}({\boldsymbol c},{\boldsymbol u}_{\rho^{o}({\boldsymbol y}';{\boldsymbol x}',{\boldsymbol c})})$};

 \draw (-1,-1.1) .. controls (0,-0.15) .. (2,1);
  \draw (2,1) .. controls (3,1.25) .. (4,0.65)
  node[anchor=west]{Trajectory (4)};
%  {${\boldsymbol Y}_{\boldsymbol x}({\boldsymbol c},{\boldsymbol u}_{\rho({\boldsymbol y}';{\boldsymbol x}',{\boldsymbol c})})$};

    \draw[black,dotted] (4, 1) -- (-1, 1)
      node[anchor=east] {${\boldsymbol Y}_{\boldsymbol x}={\boldsymbol y}'$};

   \draw[black,dotted] (2, 2.25) -- (2, -1.5)
      node[anchor=north] {${\boldsymbol X}={\boldsymbol x}'$};

    \draw[thick, black, ->] (-0.5, -1.5) -- (-0.5, 2.25)
      node[anchor=south] {${\boldsymbol Y}_{\boldsymbol x}$};

      % x-axis
    \draw[thick, black, ->] (-1, -1) -- (4, -1)
      node[anchor=west] {${\boldsymbol X}$};

\node at (0,0)[circle,fill,inner sep=1pt]{};
\node at (3,0)[circle,fill,inner sep=1pt]{};
\node at (2,1)[circle,fill,inner sep=1pt]{};
      
\end{tikzpicture}
}
    \caption{Trajectories for (1) ${\boldsymbol Y}_{\boldsymbol x}({\boldsymbol u}_{\rho({\boldsymbol y};{\boldsymbol x}_0,{\boldsymbol c})})$, (2) ${\boldsymbol Y}_{\boldsymbol x}({\boldsymbol u}_{\rho({\boldsymbol y};{\boldsymbol x}_1,{\boldsymbol c})})$, (3) ${\boldsymbol Y}_{\boldsymbol x}({\boldsymbol u}_{\rho^{o}({\boldsymbol y}';{\boldsymbol x}',{\boldsymbol c})})$ and (4) ${\boldsymbol Y}_{\boldsymbol x}({\boldsymbol u}_{\rho({\boldsymbol y}';{\boldsymbol x}',{\boldsymbol c})})$.}
    %${\boldsymbol Y}_{\boldsymbol x}({\boldsymbol c},{\boldsymbol u}_{\rho({\boldsymbol y};{\boldsymbol x}_0,{\boldsymbol c})})$, ${\boldsymbol Y}_{\boldsymbol x}({\boldsymbol c},{\boldsymbol u}_{\rho({\boldsymbol y};{\boldsymbol x}_1,{\boldsymbol c})})$, ${\boldsymbol Y}_{\boldsymbol x}({\boldsymbol c},{\boldsymbol u}_{\rho^{o}({\boldsymbol y}';{\boldsymbol x}',{\boldsymbol c})})$ and ${\boldsymbol Y}_{\boldsymbol x}({\boldsymbol c},{\boldsymbol u}_{\rho({\boldsymbol y}';{\boldsymbol x}',{\boldsymbol c})})$.}
    \label{fig:2}
\end{figure}

{\bf Analyzing Trajectories
of Theorem \ref{THE52}.}
We provide trajectories-based explanation on Theorem \ref{THE52} when $P=2$.
{Consider the trajectories shown in Figure \ref{fig:3}. 
Given ${\boldsymbol C}={\boldsymbol c}$, Trajectory (1) $\{({\boldsymbol x},{\boldsymbol Y}_{\boldsymbol x}({\boldsymbol u}_{\rho({\boldsymbol y}_1;{\boldsymbol x}_0,{\boldsymbol c})})) \in \Omega_{\boldsymbol X} \times \Omega_{\boldsymbol Y};\forall {\boldsymbol x} \in \Omega_{\boldsymbol X}\}$ goes through the point $({\boldsymbol x}_0,{\boldsymbol y})$, Trajectory (2) $\{({\boldsymbol x},{\boldsymbol Y}_{\boldsymbol x}({\boldsymbol u}_{\rho({\boldsymbol y}_1;{\boldsymbol x}_1,{\boldsymbol c})})) \in \Omega_{\boldsymbol X} \times \Omega_{\boldsymbol Y};\forall {\boldsymbol x} \in \Omega_{\boldsymbol X}\}$ goes through the point $({\boldsymbol x}_1,{\boldsymbol y})$, Trajectory (3) $\{({\boldsymbol x},{\boldsymbol Y}_{\boldsymbol x}({\boldsymbol u}_{\rho({\boldsymbol y}_2;{\boldsymbol x}_1,{\boldsymbol c})})) \in \Omega_{\boldsymbol X} \times \Omega_{\boldsymbol Y};\forall {\boldsymbol x} \in \Omega_{\boldsymbol X}\}$ and Trajectory (4) $\{({\boldsymbol x},{\boldsymbol Y}_{\boldsymbol x}({\boldsymbol u}_{\rho({\boldsymbol y}_2;{\boldsymbol x}_2,{\boldsymbol c})})) \in \Omega_{\boldsymbol X} \times \Omega_{\boldsymbol Y};\forall {\boldsymbol x} \in \Omega_{\boldsymbol X}\}$ go through the point $({\boldsymbol x}',{\boldsymbol y}')$.
Given ${\boldsymbol C}={\boldsymbol c}$, the trajectories of subject ${\boldsymbol u}$ lies in the region between Trajectories (1) and (2) if and only if they satisfy ${\boldsymbol Y}_{{\boldsymbol x}_0}\prec {\boldsymbol y}_1 \preceq {\boldsymbol Y}_{{\boldsymbol x}_1}$.
Given ${\boldsymbol C}={\boldsymbol c}$, the trajectories of subject ${\boldsymbol u}$ lies in the region between Trajectories (3) and (4) if and only if they satisfy ${\boldsymbol Y}_{{\boldsymbol x}_1}\prec {\boldsymbol y}_2 \preceq {\boldsymbol Y}_{{\boldsymbol x}_2}$.
Thus, we have $\mathbb{P}({\boldsymbol Y}_{{\boldsymbol x}_0}\prec {\boldsymbol y}_1 \preceq {\boldsymbol Y}_{{\boldsymbol x}_1}\prec {\boldsymbol y}_2 \preceq {\boldsymbol Y}_{{\boldsymbol x}_2}|{\boldsymbol C}={\boldsymbol c})$ is 
\begin{equation}
    \max\left\{\min\{\mathbb{P}({\boldsymbol Y}_{{\boldsymbol x}_0}\prec {\boldsymbol y}_1|{\boldsymbol C}={\boldsymbol c}),\mathbb{P}({\boldsymbol Y}_{{\boldsymbol x}_1}\preceq {\boldsymbol y}_2|{\boldsymbol C}={\boldsymbol c})\}-\max\{\mathbb{P}({\boldsymbol Y}_{{\boldsymbol x}_1}\prec {\boldsymbol y}_1|{\boldsymbol C}={\boldsymbol c}),\mathbb{P}({\boldsymbol Y}_{{\boldsymbol x}_2}\prec {\boldsymbol y}_2|{\boldsymbol C}={\boldsymbol c})\},0\right\},
\end{equation}
where $\mathbb{P}({\boldsymbol Y}_{{\boldsymbol x}_0}\prec {\boldsymbol y}_1|{\boldsymbol C}={\boldsymbol c})$ represents the probability of a subject's trajectory being below Trajectory (1), $\mathbb{P}({\boldsymbol Y}_{{\boldsymbol x}_1}\prec {\boldsymbol y}_2|{\boldsymbol C}={\boldsymbol c})$ represents the probability of a subject's trajectory being below Trajectory (2),  $\mathbb{P}({\boldsymbol Y}_{{\boldsymbol x}_1}\prec {\boldsymbol y}_2|{\boldsymbol C}={\boldsymbol c})$ represents the probability of a subject's trajectory being below Trajectory (3) and $\mathbb{P}({\boldsymbol Y}_{{\boldsymbol x}_2}\preceq {\boldsymbol y}_2|{\boldsymbol C}={\boldsymbol c})$ represents the probability of a subject's trajectory being below Trajectory (4).}

\begin{figure}
    \centering
    \scalebox{1}{
    \begin{tikzpicture}
\draw (0,0.25) .. controls (1,0.75) and (2,2.25) .. (3,2.5);
 \draw (-1,0) .. controls (-0.5,0.15) .. (0,0.25);
  \draw (3,2.5) .. controls (3.5,2.65) .. (4,3)
    node[anchor=west]{Trajectory (3)};
    %{${\boldsymbol Y}_{\boldsymbol x}({\boldsymbol c},{\boldsymbol u}_{\rho({\boldsymbol y}_2;{\boldsymbol x}_1,{\boldsymbol c})})$};

  \draw (0,0) .. controls (1,0.5) and (2,2) .. (3,2);
 \draw (-1,-0.25) .. controls (-0.5,-0.25) .. (0,0);
  \draw (3,2) .. controls (3.5,2) .. (4,2.5)
    node[anchor=west]{Trajectory (1)};%{${\boldsymbol Y}_{\boldsymbol x}({\boldsymbol c},{\boldsymbol u}_{\rho({\boldsymbol y}_1;{\boldsymbol x}_0,{\boldsymbol c})})$};

  \draw (0,-0.35) .. controls (1,-0.15) and (2,0) .. (3,0.7);
 \draw (-1,-0.75) .. controls (-0.5,-0.5) .. (0,-0.35);
  \draw (3,0.7) .. controls (3.5,0.9) .. (4,1)
  node[anchor=west]{Trajectory (2)};
  %{${\boldsymbol Y}_{\boldsymbol x}({\boldsymbol c},{\boldsymbol u}_{\rho({\boldsymbol y}_1;{\boldsymbol x}_1,{\boldsymbol c})})$};
  
  \draw (0,-0.2) .. controls (1,0.5) and (2,1) .. (3,1.5);
 \draw (-1,-0.5) .. controls (-0.5,-0.4) .. (0,-0.2);
  \draw (3,1.5) .. controls (3.5,1.7) .. (4,1.9)
  node[anchor=west]{Trajectory (4)};
  %{${\boldsymbol Y}_{\boldsymbol x}({\boldsymbol c},{\boldsymbol u}_{\rho({\boldsymbol y}_2;{\boldsymbol x}_2,{\boldsymbol c})})$};

    \draw[black,dotted] (4, 0) -- (-1, 0)
      node[anchor=east] {${\boldsymbol Y}_{\boldsymbol x}={\boldsymbol y}_1$};

      \draw[black,dotted] (4, 1.5) -- (-1, 1.5)
      node[anchor=east] {${\boldsymbol Y}_{\boldsymbol x}={\boldsymbol y}_2$};

        \draw[black,dotted] (0, 3) -- (0, -1.15)
      node[anchor=north] {${\boldsymbol X}={\boldsymbol x}_0$};
      
      \draw[black,dotted] (1.5, 3) -- (1.5, -1.15)
      node[anchor=north] {${\boldsymbol X}={\boldsymbol x}_1$};
      
   \draw[black,dotted] (3, 3) -- (3, -1.15)
      node[anchor=north] {${\boldsymbol X}={\boldsymbol x}_2$};

    % y-axis
    \draw[thick, black, ->] (-0.5, -1.15) -- (-0.5, 3)
      node[anchor=south] {${\boldsymbol Y}_{\boldsymbol x}$};

      % x-axis
    \draw[thick, black, ->] (-1, -1) -- (4, -1)
      node[anchor=west] {${\boldsymbol X}$};
      
\node at (0,0)[circle,fill,inner sep=1pt]{};
\node at (1.5,0)[circle,fill,inner sep=1pt]{};
\node at (1.5,1.5)[circle,fill,inner sep=1pt]{};
\node at (3,1.5)[circle,fill,inner sep=1pt]{};
      
\end{tikzpicture}
}
\vspace{-0cm}
    \caption{Trajectories for (1) ${\boldsymbol Y}_{\boldsymbol x}({\boldsymbol u}_{\rho({\boldsymbol y}_1;{\boldsymbol x}_0,{\boldsymbol c})})$, (2) ${\boldsymbol Y}_{\boldsymbol x}({\boldsymbol u}_{\rho({\boldsymbol y}_1;{\boldsymbol x}_1,{\boldsymbol c})})$, (3) ${\boldsymbol Y}_{\boldsymbol x}({\boldsymbol u}_{\rho({\boldsymbol y}_2;{\boldsymbol x}_1,{\boldsymbol c})})$ and (4) ${\boldsymbol Y}_{\boldsymbol x}({\boldsymbol u}_{\rho({\boldsymbol y}_2;{\boldsymbol x}_2,{\boldsymbol c})})$.}
    %${\boldsymbol Y}_{\boldsymbol x}({\boldsymbol c},{\boldsymbol u}_{\rho({\boldsymbol y}_2;{\boldsymbol x}_1,{\boldsymbol c})})$, ${\boldsymbol Y}_{\boldsymbol x}({\boldsymbol c},{\boldsymbol u}_{\rho({\boldsymbol y}_1;{\boldsymbol x}_0,{\boldsymbol c})})$, ${\boldsymbol Y}_{\boldsymbol x}({\boldsymbol c},{\boldsymbol u}_{\rho({\boldsymbol y}_2;{\boldsymbol x}_2,{\boldsymbol c})})$ and ${\boldsymbol Y}_{\boldsymbol x}({\boldsymbol c},{\boldsymbol u}_{\rho({\boldsymbol y}_1;{\boldsymbol x}_1,{\boldsymbol c})})$.}
    \label{fig:3}
\end{figure}

\section{Proofs}

\label{appA}
We give the proof of lemmas, theorems, and corollary in the body of the paper.

\subsection{Proofs in Section \ref{sec3}}

\begin{lemma}
\label{LEM31}
    Under SCM ${\cal M}_S$ and Assumption \ref{SUP1}, 
    Assumption \ref{MONO_A} implies Assumption \ref{AS1}.
\end{lemma}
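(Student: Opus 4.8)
The plan is to argue by contraposition: I assume that Assumption~\ref{AS1} fails and produce treatment values $x_0,x_1$ and a threshold $y$ that violate Assumption~\ref{MONO_A}. Since under ${\cal M}_S$ we have $Y_x(u)=f_Y(x,u)$, the failure of Assumption~\ref{AS1} means that the family $\{f_Y(x,\cdot)\}_{x\in\Omega_X}$ is, on a set of positive $\mathbb{P}_U$-measure, neither uniformly nondecreasing in $u$ across all $x$ nor uniformly nonincreasing in $u$ across all $x$. My goal is to convert any such failure into a \emph{crossing} configuration detected at a single threshold.

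Concretely, I would first use Assumption~\ref{SUP1} to fix a threshold $y$ in the (common) interior of the support of the $Y_x$, so that $0<\mathbb{P}(Y_x<y)<1$, equivalently the level set $\{u:f_Y(x,u)\ge y\}$ is neither null nor co-null, for every $x$ simultaneously. I then extract from the failure of monotonicity a pair $x_0,x_1$ and a positive-measure set $\mathcal{U}$ of exogenous values on which $f_Y(x_0,\cdot)$ passes downward through $y$ while $f_Y(x_1,\cdot)$ passes upward through $y$; that is, there are positive-measure subsets of $\mathcal{U}$ on which, for $u_0<u_1$,
\[
f_Y(x_0,u_0)\ge y> f_Y(x_0,u_1),\qquad f_Y(x_1,u_0)< y\le f_Y(x_1,u_1).
\]
Common support is what guarantees that such a threshold exists at which both arms are genuinely ``active'', so that the crossing is carried with positive probability on each side.

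Rearranging these inequalities by the value of $u$ gives, at $u_0$, $Y_{x_1}(u_0)<y\le Y_{x_0}(u_0)$, and at $u_1$, $Y_{x_0}(u_1)<y\le Y_{x_1}(u_1)$. Because the two subsets of $\mathcal{U}$ carrying these two patterns each have positive $\mathbb{P}_U$-measure, I conclude that $\mathbb{P}(Y_{x_1}<y\le Y_{x_0})>0$ and $\mathbb{P}(Y_{x_0}<y\le Y_{x_1})>0$ hold simultaneously, which is precisely the negation of Assumption~\ref{MONO_A} for the pair $(x_0,x_1)$ and threshold $y$, completing the contrapositive. The main obstacle is the middle step: the raw negation of Assumption~\ref{AS1} only says that uniform monotonicity fails \emph{somewhere}, and I must show, using the common-support Assumption~\ref{SUP1}, that this forces an honest downward/upward crossing of one \emph{common} level $y$ by two specific arms $f_Y(x_0,\cdot)$ and $f_Y(x_1,\cdot)$, rather than mere non-monotone behavior that could otherwise be absorbed by a relabeling of $U$; carefully tracking the $\mathbb{P}_U$-almost-sure qualifiers while splitting $\mathcal{U}$ into the two positive-measure pieces is the delicate part of the argument.
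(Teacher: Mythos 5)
Your proposal is correct and follows essentially the same route as the paper's proof: both argue by contraposition, expressing the failure of Assumption~\ref{AS1} as a positive-measure set of $u$ values on which $f_Y(x_0,\cdot)$ and $f_Y(x_1,\cdot)$ cross a common threshold $y$ in opposite directions (with Assumption~\ref{SUP1} invoked to guarantee that such a common level exists), and then reading off $\mathbb{P}(Y_{x_0}<y\leq Y_{x_1})>0$ and $\mathbb{P}(Y_{x_1}<y\leq Y_{x_0})>0$, which is exactly the negation of Assumption~\ref{MONO_A}. The ``delicate part'' you flag --- upgrading the raw negation of monotonicity to an honest two-sided crossing of a single common level --- is handled at the same level of informality in the paper itself, so your argument matches the paper's step for step.
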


\begin{proof}
    Suppose the negation of Assumption \ref{AS1}:  
    \begin{center}
        there exists a set ${\cal U}$ such that $0<\mathbb{P}({\cal U})<1$, and $$f_Y(x_0,u_0)\geq y> f_Y(x_0,u_1) \land f_Y(x_1,u_0)< y \leq f_Y(x_1,u_1)$$ for some $x_0, x_1 \in \Omega_X$ and $y \in \Omega_Y$ and for any $u_0,u_1 \in {\cal U}$ such that $u_0< u_1$.
    \end{center}
    Assumption \ref{SUP1} guarantees the existence of overlapping $y$ values in the above since ``no overlap" situation $\{f_Y(x_0,u):u \in \Omega_U\}\cap \{f_Y(x_1,u):u \in \Omega_U\}=\emptyset$ means the intersection of the support of $Y_{x_0}$ and the support of $Y_{x_1}$ is empty, which violates Assumption \ref{SUP1}.
    
Then we have
    \begin{center}
        $f_Y(x_0,u_0)\geq y > f_Y(x_1,u_0)$ and $f_Y(x_0,u_1)< y \leq f_Y(x_1,u_1)$ for some $x_0, x_1 \in \Omega_X$ and $y \in \Omega_Y$ and\\ for any $u_0, u_1 \in {\cal U}$ such that $u_0 < u_1$,
    \end{center}
    and it implies
    \begin{center}
        $f_Y(x_0,u)\geq y > f_Y(x_1,u)$ and $f_Y(x_0,u)< y \leq f_Y(x_1,u)$ for some $x_0, x_1 \in \Omega_X$ and $y \in \Omega_Y$ and  for any $u \in {\cal U}$.
    \end{center}
    This implies the negation of Assumption \ref{MONO_A} $\mathbb{P}(Y_{x_0}< y \leq Y_{x_1})\ne 0$ and $\mathbb{P}(Y_{x_1}< y \leq Y_{x_0})\ne 0$ for some $x_0, x_1 \in \Omega_X$ and $y \in \Omega_Y$ since $f_Y(x_0,u)\geq y > f_Y(x_1,u) \Leftrightarrow Y_{x_0}(u)> y \geq Y_{x_1}(u)$ and $f_Y(x_0,u)< y \leq f_Y(x_1,u) \Leftrightarrow Y_{x_1}(u)\geq y > Y_{x_0}(u)$.
In conclusion, we have lemma \ref{LEM31} by taking a contraposition of the above statements.
\end{proof}

\begin{lemma}
\label{LEM32}
     Under SCM ${\cal M}_S$ and Assumption \ref{SUP1}, 
     Assumption \ref{AS1} implies Assumption \ref{MONO_A}.
\end{lemma}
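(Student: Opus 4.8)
The plan is to prove the implication directly by exploiting the fact that, under Assumption \ref{AS1}, the potential outcome $Y_x(u)=f_Y(x,u)$ is a monotone function of $u$ whose direction does not depend on $x$. Without loss of generality I would treat the monotonic increasing case; the decreasing case is entirely symmetric (replace ``lower set'' by ``upper set'' throughout). Since the monotonicity in Assumption \ref{AS1} holds only $\mathbb{P}_U$-almost surely, I would first fix a set $\tilde\Omega_U$ with $\mathbb{P}_U(\tilde\Omega_U)=1$ on which $f_Y(x,\cdot)$ is increasing for every $x\in\Omega_X$, and carry out the remaining set-theoretic argument inside $\tilde\Omega_U$; discarding a $\mathbb{P}_U$-null set does not affect the probabilities in Assumption \ref{MONO_A}. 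I do not expect to need Assumption \ref{SUP1} for this direction; it is essential only for the converse (Lemma \ref{LEM31}).

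Fix $x_0,x_1\in\Omega_X$ and $y\in\Omega_Y$. The key step is to rewrite the two events appearing in Assumption \ref{MONO_A} in terms of the level sets $L_i\defeq\{u\in\tilde\Omega_U : f_Y(x_i,u)<y\}$ for $i=0,1$: indeed $\{Y_{x_0}<y\leq Y_{x_1}\}=L_0\cap L_1^{c}$ and $\{Y_{x_1}<y\leq Y_{x_0}\}=L_1\cap L_0^{c}$. Because $f_Y(x_i,\cdot)$ is increasing on $\tilde\Omega_U$, each $L_i$ is downward closed (a lower set): if $u\in L_i$ and $u'\leq u$, then $f_Y(x_i,u')\leq f_Y(x_i,u)<y$, so $u'\in L_i$.

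The crucial observation is that any two lower sets of a totally ordered set are nested. If $L_0$ and $L_1$ were incomparable, there would exist $a\in L_0\setminus L_1$ and $b\in L_1\setminus L_0$; since $U$ is totally ordered, $a\leq b$ or $b\leq a$, and either case contradicts the downward closedness of the set containing the larger element. Hence $L_0\subseteq L_1$ or $L_1\subseteq L_0$. In the first case $L_0\cap L_1^{c}=\emptyset$, whence $\mathbb{P}(Y_{x_0}<y\leq Y_{x_1})=0$; in the second case $L_1\cap L_0^{c}=\emptyset$, whence $\mathbb{P}(Y_{x_1}<y\leq Y_{x_0})=0$. As $x_0,x_1,y$ were arbitrary, this is precisely Assumption \ref{MONO_A}.

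The argument is conceptually short, so the only points needing care are the bookkeeping with the $\mathbb{P}_U$-null exceptional set (so that almost-sure monotonicity suffices) and the verification of the lower-set/nestedness claim when $f_Y$ is only weakly, not strictly, monotone --- in that case the level sets may be half-open rather than closed, but this affects neither their downward closedness nor the nesting. The main ``obstacle,'' such as it is, is to be precise that the emptiness of one of $L_0\cap L_1^{c}$ and $L_1\cap L_0^{c}$ follows purely from monotonicity and requires no appeal to the common-support Assumption \ref{SUP1}.
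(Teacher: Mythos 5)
Your proof is correct, and it takes a cleaner route than the paper's own argument, though both rest on the same underlying fact that monotonicity forces the relevant $u$-sets to be ordered. The paper's proof introduces the cutoff point $u_{sup}=\sup\{u: f_Y(x_0,u)<y\}$, case-splits on whether $f_Y(x_0,u_{sup})\gtrless f_Y(x_1,u_{sup})$, and in each case derives an almost-sure implication such as $Y_{x_0}(u)<y \Rightarrow Y_{x_1}(u)<y$; this is done separately for the increasing and decreasing cases. That argument implicitly uses the identification $y=f_Y(x_0,u_{sup})$, which is delicate when $f_Y(\cdot)$ is only weakly monotone (it can fail at a jump of $f_Y(x_0,\cdot)$ or when the level set is empty), so the paper's write-up has a technical rough edge at exactly the point your argument sidesteps: by working with the level sets $L_i=\{u: f_Y(x_i,u)<y\}$, observing they are lower sets, and invoking the elementary fact that two lower sets of a totally ordered set are nested, you get emptiness of one of $L_0\cap L_1^c$, $L_1\cap L_0^c$ with no reference to suprema or to values of $f_Y$ at boundary points. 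Your null-set bookkeeping is also adequate (for a fixed pair $x_0,x_1$ one may always unite the exceptional null sets), and your observation that Assumption \ref{SUP1} is not needed for this direction is accurate: the paper lists it as a hypothesis of the lemma, but its own proof never invokes it --- common support is genuinely needed only for the converse direction (Lemma \ref{LEM31}).
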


\begin{proof}

First, we denote $u_{sup}=\sup\{u:f_Y(x_0,u)< y\}$. 
We consider the situations ``the function $f_Y(x,U)$ is monotonic increasing on $U$'' and ``the function $f_Y(x,U)$ is monotonic decreasing on $U$'', separately.

{\bf (1).}
If the function $f_Y(x,U)$ is {\bf monotonic increasing} on $U$
for all $x \in \Omega_X$ almost surely w.r.t. $\mathbb{P}_U$, we have
\begin{equation}
    f_Y(x_0,u_{sup}) \leq f_Y(x_0,u) \text{ and }f_Y(x_1,u_{sup}) \leq f_Y(x_1,u)
\end{equation}
for $\mathbb{P}_U$-almost every $u \in \Omega_U$ such that $u\geq u_{sup}$.
We have the following statements:
\begin{enumerate}
    \item Supposed $f_Y(x_0,u_{sup})> f_Y(x_1,u_{sup})$, 
we have $y= f_Y(x_0,u_{sup}) > f_Y(x_1,u_{sup})\geq f_Y(x_1,u)=Y_{x_1}(u)$ for $\mathbb{P}_U$-almost every $u \in \Omega_U$ such that $f_Y(x_0,u)< y$.
It means $Y_{x_0}(u)< y \Rightarrow Y_{x_1}(u) < y$ for $\mathbb{P}_U$-almost every $u \in \Omega_U$ and  $\mathbb{P}(Y_{x_0}< y \leq Y_{x_1})=0$.
\item Supposed $f_Y(x_0,u_{sup})\leq f_Y(x_1,u_{sup})$, we have $f_Y(x_1,u) \geq f_Y(x_1,u_{sup}) \geq f_Y(x_0,u_{sup}) =y$ for $\mathbb{P}_U$-almost every $u \in \Omega_U$ such that $f_Y(x_0,u)\geq y$.
It means $Y_{x_0}(u)\geq y \Rightarrow Y_{x_1}(u) \geq y$ for $\mathbb{P}_U$-almost every $u \in \Omega_U$ and  $\mathbb{P}(Y_{x_1}< y \leq Y_{x_0})=0$.
\end{enumerate}
Then, Assumption \ref{AS1} holds.

{\bf (2).}
If the function $f_Y(x,U)$ is {\bf monotonic decreasing} on $U$
for all $x \in \Omega_X$ almost surely w.r.t. $\mathbb{P}_U$, we have
\begin{equation}
    f_Y(x_0,u_{sup}) \geq f_Y(x_0,u) \text{ and }f_Y(x_1,u_{sup}) \geq f_Y(x_1,u)
\end{equation}
for $\mathbb{P}_U$-almost every $u \in \Omega_U$ such that $u\geq u_{sup}$.
We have the following statements:
\begin{enumerate}
    \item Supposed $f_Y(x_0,u_{sup})\leq f_Y(x_1,u_{sup})$, 
we have $y= f_Y(x_0,u_{sup}) \leq f_Y(x_1,u_{sup})\leq f_Y(x_1,u)=Y_{x_1}(u)$ for $\mathbb{P}_U$-almost every $u \in \Omega_U$ such that $f_Y(x_0,u)\geq y$.
It means $Y_{x_0}(u)\geq y \Rightarrow Y_{x_1}(u) \geq y$ for $\mathbb{P}_U$-almost every $u \in \Omega_U$ and  $\mathbb{P}(Y_{x_1}< y \leq Y_{x_0})=0$.
\item Supposed $f_Y(x_0,u_{sup})> f_Y(x_1,u_{sup})$, we have $f_Y(x_1,u) \leq f_Y(x_1,u_{sup}) < f_Y(x_0,u_{sup}) =y$ for $\mathbb{P}_U$-almost every $u \in \Omega_U$ such that $f_Y(x_0,u)< y$.
It means $Y_{x_0}(u)< y \Rightarrow Y_{x_1}(u) < y$ for $\mathbb{P}_U$-almost every $u \in \Omega_U$ and  $\mathbb{P}(Y_{x_0}< y \leq Y_{x_1})=0$.
\end{enumerate}
Thus, Assumption \ref{AS1} holds.
In conclusion, Assumption \ref{AS1} implies Assumption \ref{MONO_A}
\end{proof}

{\bf Theorem \ref{E12}.} 
{\it Under SCM ${\cal M}_S$ and Assumption \ref{SUP1}, 
    Assumptions \ref{MONO_A} and \ref{AS1} are equivalent, and
    %Assumptions \ref{SAS1} and \ref{RP1} are equivalent. 
    {Assumptions \ref{SAS1} is a strictly stronger requirement than \ref{AS1}.}

\begin{proof}
We have Theorem \ref{E12} from Lemma \ref{LEM31} and \ref{LEM32}.
%, \ref{LEM33} and \ref{LEM34}.
\end{proof}

{\bf Theorem \ref{THEO1}.} (Identification of PoC)
{\it 
Under SCM ${\cal M}_{S}$ and Assumptions \ref{ASEXO}, \ref{MONO_A} (or \ref{AS1}, \ref{SAS1}), and \ref{SUP1}, 
PNS, PN, and PS are identifiable by
\begin{equation}
    \begin{aligned}
    &\text{PNS}(y;x_0,x_1)=\max\{\rho(y;x_0)-\rho(y;x_1),0\},\\
    &\text{PN}(y;x_0,x_1)=\max\left\{\frac{\rho(y;x_0)-\rho(y;x_1)}{1-\rho(y;x_1)},0\right\},\\
    &\text{PS}(y;x_0,x_1)=\max\left\{\frac{\rho(y;x_0)-\rho(y;x_1)}{\rho(y;x_0)},0\right\}
    \end{aligned}
\end{equation}
for any  $x_0,x_1 \in \Omega_X$ and $y \in \Omega_Y$ such that $\rho(y;x_1)<1$ and $\rho(y;x_0)>0$.}

\begin{proof} 
Under Assumptions \ref{ASEXO} and \ref{AS1},  
\begin{equation}
\begin{aligned}
\text{PNS}(y;x_0,x_1)
&=\mathbb{P}(Y_{x_0} < y \leq Y_{x_1})\\
&=\mathbb{P}(u_{\rho(y;x_1)}\leq u < u_{\rho(y;x_0)})\\
&=\max\{\rho(y;x_0)-\rho(y;x_1),0\}
    \end{aligned}
\end{equation}
for any  $x_0,x_1 \in \Omega_X$ and $y \in \Omega_Y$,
where $u_{\rho(y;x_0)}=\sup\{u:f_Y(x_0,u)< y\}$ and $u_{\rho(y;x_1)}=\sup\{u:f_Y(x_1,u)\leq y\}$.
Note that all $u$ such that $u \leq u_{\rho(y;x)}$ satisfy $f_Y(x,u)< y$ from Assumption \ref{AS1}.
%=\inf\{u:y < f_Y(x_1,u)\}$.
\begin{comment}
On the other hands, under Assumptions \ref{ASEXO} and \ref{MONO_A}, 
\begin{equation}
\begin{aligned}
\text{PNS}(y;x_0,x_1)
&=\mathbb{P}(Y_{x_0} < y \leq Y_{x_1})\\
&=\mathbb{P}(Y_{x_0} < y)-\mathbb{P}(Y_{x_1} < y)+\mathbb{P}(Y_{x_1} < y \leq Y_{x_0})\\
&=\max\{\rho(y;x_0)-\rho(y;x_1),0\}.
    \end{aligned}
\end{equation}
\end{comment}
In addition, $\text{PN}(y;x_0,x_1)$ and $\text{PS}(y;x_0,x_1)$ are given from the following relationship:
\begin{equation}
\begin{aligned}
\text{PN}(y;x_0,x_1)=\frac{\text{PSN}(y;x_0,x_1)}{\mathbb{P}(y\leq Y|X={x_1})}, \ \ \text{PS}(y;x_0,x_1)=\frac{\text{PSN}(y;x_0,x_1)}{\mathbb{P}(Y < y|X={x_0})},
\end{aligned}
\end{equation}
$\mathbb{P}(y\leq Y|X={x_1})=1-\rho(y;x_1)$ and $\mathbb{P}(Y < y|X={x_0})=\rho(y;x_0)$ for any  $x_0,x_1 \in \Omega_X$ and $y \in \Omega_Y$.
\end{proof}

\subsection{Proofs in Section \ref{sec4}}

%{\bf Lemma \ref{LEM2}}
%{\it  Under SCM ${\cal M}_{T}$, ${\boldsymbol X}$ are independent of ${\boldsymbol U}$ given ${\boldsymbol C}={\boldsymbol c}$ for all ${\boldsymbol c} \in \Omega_{\boldsymbol C}$, and $\mathbb{P}({\boldsymbol Y}_{\boldsymbol x} \preceq {\boldsymbol y}|{\boldsymbol C}={\boldsymbol c})=\mathbb{P}({\boldsymbol Y} \preceq {\boldsymbol y}|{\boldsymbol X}={\boldsymbol x},{\boldsymbol C}={\boldsymbol c})$ holds for any ${\boldsymbol c} \in \Omega_{\boldsymbol C}$.}

%\begin{proof}
%    We have 
%    \begin{equation}
%        \mathbb{P}({\boldsymbol Y}_{\boldsymbol x} \preceq {\boldsymbol y}|{\boldsymbol C}={\boldsymbol c})=\mathbb{P}({\boldsymbol Y}_{\boldsymbol x} \preceq {\boldsymbol y}|{\boldsymbol X}={\boldsymbol x},{\boldsymbol C}={\boldsymbol c})=\mathbb{P}({\boldsymbol Y} \preceq {\boldsymbol y}|{\boldsymbol X}={\boldsymbol x},{\boldsymbol C}={\boldsymbol c})
%    \end{equation}
%    for any ${\boldsymbol c} \in \Omega_{\boldsymbol C}$
%   since ${\boldsymbol X} \indep {\boldsymbol U}|{\boldsymbol C}$.
%\end{proof}

{\bf Theorem \ref{prop1}.}
{\it Under SCM ${\cal M}_{T}$ and Assumption \ref{SUP2}, 
Assumptions \ref{MONO2} and \ref{AS2} are equivalent, and 
%Assumptions \ref{SAS2} and \ref{RP2} are equivalent. 
{Assumptions \ref{SAS2} is a strictly stronger requirement than \ref{AS2}.}

\begin{proof}
We show the proof of equivalence of assumptions.

(Assumption \ref{MONO2} $\Rightarrow$ Assumption \ref{AS2}.)
For any ${\boldsymbol c} \in \Omega_{\boldsymbol C}$, from Assumption \ref{SUP2}, if we have the negation of Assumption \ref{AS2}
    \begin{center}
        there exists a set ${\cal U}$ such that $0<\mathbb{P}({\cal U})<1$, and $$f_{\boldsymbol Y}({\boldsymbol x}_0,{\boldsymbol c},{\boldsymbol u}_0)\succeq {\boldsymbol y}\succ f_{\boldsymbol Y}({\boldsymbol x}_0,{\boldsymbol c},{\boldsymbol u}_1) \land f_{\boldsymbol Y}({\boldsymbol x}_1,{\boldsymbol c},{\boldsymbol u}_0)\prec y \preceq f_{\boldsymbol Y}({\boldsymbol x}_1,{\boldsymbol c},{\boldsymbol u}_1)$$ for some ${\boldsymbol x}_0, {\boldsymbol x}_1 \in \Omega_{\boldsymbol X}$ and ${\boldsymbol y} \in \Omega_{\boldsymbol Y}$ and for any ${\boldsymbol u}_0,{\boldsymbol u}_1 \in {\cal U}$ such that ${\boldsymbol u}_0\preceq {\boldsymbol u}_1$,
    \end{center}
    then we have
    \begin{center}
        $f_{\boldsymbol Y}({\boldsymbol x}_0,{\boldsymbol c},{\boldsymbol u}_0)\succeq {\boldsymbol y} \succ f_{\boldsymbol Y}({\boldsymbol x}_1,{\boldsymbol u}_0)$ and $f_{\boldsymbol Y}({\boldsymbol x}_0,{\boldsymbol c},{\boldsymbol u}_1)\prec {\boldsymbol y} \preceq f_{\boldsymbol Y}({\boldsymbol x}_1,{\boldsymbol c},{\boldsymbol u}_1)$ for some ${\boldsymbol x}_0, {\boldsymbol x}_1 \in \Omega_{\boldsymbol X}$ and ${\boldsymbol y} \in \Omega_{\boldsymbol Y}$ and\\ for any ${\boldsymbol u}_0, {\boldsymbol u}_1 \in {\cal U}$ such that ${\boldsymbol u}_0 \preceq {\boldsymbol u}_1$,
    \end{center}
    and we also have 
    \begin{center}
        $f_{\boldsymbol Y}({\boldsymbol x}_0,{\boldsymbol c},{\boldsymbol u})\succeq {\boldsymbol y} \succ f_{\boldsymbol Y}({\boldsymbol x}_1,{\boldsymbol c},{\boldsymbol u})$ and $f_{\boldsymbol Y}({\boldsymbol x}_0,{\boldsymbol c},{\boldsymbol u})\prec {\boldsymbol y} \preceq f_{\boldsymbol Y}({\boldsymbol x}_1,{\boldsymbol c},{\boldsymbol u})$ for some ${\boldsymbol x}_0, {\boldsymbol x}_1 \in \Omega_{\boldsymbol X}$ and ${\boldsymbol y} \in \Omega_{\boldsymbol Y}$ and  for any ${\boldsymbol u} \in {\cal U}$.
    \end{center}
    This implies the negation of Assumption \ref{MONO_A} $\mathbb{P}({\boldsymbol Y}_{{\boldsymbol x}_0}\prec {\boldsymbol y} \preceq {\boldsymbol Y}_{{\boldsymbol x}_1}|{\boldsymbol C}={\boldsymbol c})\ne 0$ and $\mathbb{P}({\boldsymbol Y}_{{\boldsymbol x}_1}\prec {\boldsymbol y} \preceq {\boldsymbol Y}_{{\boldsymbol x}_0}|{\boldsymbol C}={\boldsymbol c})\ne 0$ for some ${\boldsymbol x}_0, {\boldsymbol x}_1 \in \Omega_{\boldsymbol Y}$ and ${\boldsymbol y} \in \Omega_{\boldsymbol Y}$ since $f_{\boldsymbol Y}({\boldsymbol x}_0,{\boldsymbol c},{\boldsymbol u})\succeq {\boldsymbol y} \succ f_{\boldsymbol Y}({\boldsymbol x}_1,{\boldsymbol c},{\boldsymbol u}) \Leftrightarrow {\boldsymbol Y}_{{\boldsymbol x}_0}({\boldsymbol c},{\boldsymbol u})\succeq {\boldsymbol y} \succ {\boldsymbol Y}_{{\boldsymbol x}_1}({\boldsymbol c},{\boldsymbol u})$ and $f_{\boldsymbol Y}({\boldsymbol x}_0,{\boldsymbol c},{\boldsymbol u})\prec {\boldsymbol y} \preceq f_{\boldsymbol Y}({\boldsymbol x}_1,{\boldsymbol c},{\boldsymbol u}) \Leftrightarrow {\boldsymbol Y}_{{\boldsymbol x}_1}({\boldsymbol c},{\boldsymbol u})\succeq {\boldsymbol y} \succ {\boldsymbol Y}_{{\boldsymbol x}_0}({\boldsymbol c},{\boldsymbol u})$.

    (Assumption \ref{AS2} $\Rightarrow$ Assumption \ref{MONO2}.)
For any ${\boldsymbol c} \in \Omega_{\boldsymbol C}$, we denote ${\boldsymbol u}_{sup}=\sup\{{\boldsymbol u}:f_{\boldsymbol Y}({\boldsymbol x}_0,{\boldsymbol c},{\boldsymbol u})\preceq {\boldsymbol y}\}$. 
We consider the situations ``the function $f_{\boldsymbol Y}({\boldsymbol x},{\boldsymbol c},{\boldsymbol U})$ is monotonic increasing on ${\boldsymbol U}$'' and ``the function $f_{\boldsymbol Y}({\boldsymbol x},{\boldsymbol c},{\boldsymbol U})$ is monotonic decreasing on ${\boldsymbol U}$'', separately.

{\bf (1).}
If the function $f_{\boldsymbol Y}({\boldsymbol x},{\boldsymbol c},{\boldsymbol U})$ is {\bf monotonic increasing} on ${\boldsymbol U}$
for all ${\boldsymbol x} \in \Omega_{\boldsymbol X}$ almost surely w.r.t. $\mathbb{P}_{\boldsymbol U}$, we have
\begin{equation}
    f_{\boldsymbol Y}({\boldsymbol x}_0,{\boldsymbol c},{\boldsymbol u}_{sup}) \preceq f_{\boldsymbol Y}({\boldsymbol x}_0,{\boldsymbol c},{\boldsymbol u}) \text{ and }f_{\boldsymbol Y}({\boldsymbol x}_1,{\boldsymbol c},{\boldsymbol u}_{sup}) \preceq f_{\boldsymbol Y}({\boldsymbol x}_1,{\boldsymbol c},{\boldsymbol u})
\end{equation}
for $\mathbb{P}_{\boldsymbol U}$-almost every ${\boldsymbol u} \in \Omega_{\boldsymbol U}$ such that ${\boldsymbol u}\succeq {\boldsymbol u}_{sup}$.
We have the following statements:
\begin{enumerate}
    \item Supposed $f_{\boldsymbol Y}({\boldsymbol x}_0,{\boldsymbol c},{\boldsymbol u}_{sup})\succ f_{\boldsymbol Y}({\boldsymbol x}_1,{\boldsymbol c},{\boldsymbol u}_{sup})$, 
we have ${\boldsymbol y}= f_{\boldsymbol Y}({\boldsymbol x}_0,{\boldsymbol c},{\boldsymbol u}_{sup}) \succ f_{\boldsymbol Y}({\boldsymbol x}_1,{\boldsymbol c},{\boldsymbol u}_{sup})\succeq f_{\boldsymbol Y}({\boldsymbol x}_1,{\boldsymbol c},{\boldsymbol u})=Y_{{\boldsymbol x}_1}({\boldsymbol c},{\boldsymbol u})$ for $\mathbb{P}_{\boldsymbol U}$-almost every ${\boldsymbol u} \in \Omega_{\boldsymbol U}$ such that $f_{\boldsymbol Y}({\boldsymbol x}_0,{\boldsymbol c},{\boldsymbol u})\prec {\boldsymbol y}$.
It means ${\boldsymbol Y}_{{\boldsymbol x}_0}({\boldsymbol c},{\boldsymbol u})\prec {\boldsymbol y} \Rightarrow {\boldsymbol Y}_{{\boldsymbol x}_1}({\boldsymbol c},{\boldsymbol u}) \prec {\boldsymbol y}$ for $\mathbb{P}_{\boldsymbol U}$-almost every ${\boldsymbol u} \in \Omega_{\boldsymbol U}$ and  $\mathbb{P}({\boldsymbol Y}_{{\boldsymbol x}_0}\prec {\boldsymbol y} \preceq {\boldsymbol Y}_{{\boldsymbol x}_1}|{\boldsymbol C}={\boldsymbol c})=0$.
\item Supposed $f_{\boldsymbol Y}({\boldsymbol x}_0,{\boldsymbol c},{\boldsymbol u}_{sup})\preceq f_{\boldsymbol Y}({\boldsymbol x}_1,{\boldsymbol c},{\boldsymbol u}_{sup})$, we have $f_{\boldsymbol Y}({\boldsymbol x}_1,{\boldsymbol c},{\boldsymbol u}) \succeq f_{\boldsymbol Y}({\boldsymbol x}_1,{\boldsymbol c},{\boldsymbol u}_{sup}) \succeq f_{\boldsymbol Y}({\boldsymbol x}_0,{\boldsymbol c},{\boldsymbol u}_{sup}) ={\boldsymbol y}$ for $\mathbb{P}_{\boldsymbol U}$-almost every ${\boldsymbol u} \in \Omega_{\boldsymbol U}$ such that $f_{\boldsymbol Y}({\boldsymbol x}_0,{\boldsymbol c},{\boldsymbol u})\succeq {\boldsymbol y}$.
It means ${\boldsymbol Y}_{{\boldsymbol x}_0}({\boldsymbol c},{\boldsymbol u})\succeq {\boldsymbol y} \Rightarrow {\boldsymbol Y}_{{\boldsymbol x}_1}({\boldsymbol c},{\boldsymbol u}) \succeq {\boldsymbol y}$ for $\mathbb{P}_{\boldsymbol U}$-almost every ${\boldsymbol u} \in \Omega_{\boldsymbol U}$ and  $\mathbb{P}({\boldsymbol Y}_{{\boldsymbol x}_1}\prec {\boldsymbol y} \preceq {\boldsymbol Y}_{{\boldsymbol x}_0}|{\boldsymbol C}={\boldsymbol c})=0$.
\end{enumerate}
Then, these imply Assumption \ref{AS1}.

{\bf (2).}
If the function $f_{\boldsymbol Y}({\boldsymbol x},{\boldsymbol c},{\boldsymbol U})$ is {\bf monotonic decreasing} on ${\boldsymbol U}$
for all ${\boldsymbol x} \in \Omega_{\boldsymbol X}$ almost surely w.r.t. $\mathbb{P}_{\boldsymbol U}$, we have
\begin{equation}
    f_{\boldsymbol Y}({\boldsymbol x}_0,{\boldsymbol c},{\boldsymbol u}_{sup}) \succeq f_{\boldsymbol Y}({\boldsymbol x}_0,{\boldsymbol c},{\boldsymbol u}) \text{ and }f_{\boldsymbol Y}({\boldsymbol x}_1,{\boldsymbol c},{\boldsymbol u}_{sup}) \succeq f_{\boldsymbol Y}({\boldsymbol x}_1,{\boldsymbol c},{\boldsymbol u})
\end{equation}
for $\mathbb{P}_{\boldsymbol U}$-almost every ${\boldsymbol u} \in \Omega_{\boldsymbol U}$ such that ${\boldsymbol u}\succeq {\boldsymbol u}_{sup}$.
We have the following statements:
\begin{enumerate}
    \item Supposed $f_{\boldsymbol Y}({\boldsymbol x}_0,{\boldsymbol c},{\boldsymbol u}_{sup})\preceq f_{\boldsymbol Y}({\boldsymbol x}_1,{\boldsymbol c},{\boldsymbol u}_{sup})$, 
we have ${\boldsymbol y}= f_{\boldsymbol Y}({\boldsymbol x}_0,{\boldsymbol c},{\boldsymbol u}_{sup}) \preceq f_{\boldsymbol Y}({\boldsymbol x}_1,{\boldsymbol c},{\boldsymbol u}_{sup})\preceq f_{\boldsymbol Y}({\boldsymbol x}_1,{\boldsymbol c},{\boldsymbol u})={\boldsymbol Y}_{{\boldsymbol x}_1}({\boldsymbol c},{\boldsymbol u})$ for $\mathbb{P}_{\boldsymbol U}$-almost every ${\boldsymbol u} \in \Omega_{\boldsymbol U}$ such that $f_{\boldsymbol Y}({\boldsymbol x}_0,{\boldsymbol c},{\boldsymbol u})\succeq {\boldsymbol y}$.
It means ${\boldsymbol Y}_{{\boldsymbol x}_0}({\boldsymbol c},{\boldsymbol u})\succeq {\boldsymbol y} \Rightarrow {\boldsymbol Y}_{{\boldsymbol x}_1}({\boldsymbol u}) \succeq {\boldsymbol y}$ for $\mathbb{P}_{\boldsymbol U}$-almost every ${\boldsymbol u} \in \Omega_{\boldsymbol U}$ and  $\mathbb{P}({\boldsymbol Y}_{{\boldsymbol x}_1}\prec {\boldsymbol y} \preceq {\boldsymbol Y}_{{\boldsymbol x}_0}|{\boldsymbol C}={\boldsymbol c})=0$.
\item Supposed $f_{\boldsymbol Y}({\boldsymbol x}_0,{\boldsymbol c},{\boldsymbol u}_{sup})\succ f_{\boldsymbol Y}({\boldsymbol x}_1,{\boldsymbol c},{\boldsymbol u}_{sup})$, we have $f_{\boldsymbol Y}({\boldsymbol x}_1,{\boldsymbol c},{\boldsymbol u}) \prec f_{\boldsymbol Y}({\boldsymbol x}_1,{\boldsymbol c},{\boldsymbol u}_{sup}) \preceq f_{\boldsymbol Y}({\boldsymbol x}_0,{\boldsymbol c},{\boldsymbol u}_{sup}) ={\boldsymbol y}$ for $\mathbb{P}_{\boldsymbol U}$-almost every ${\boldsymbol u} \in \Omega_{\boldsymbol U}$ such that $f_{\boldsymbol Y}({\boldsymbol x}_0,{\boldsymbol c},{\boldsymbol u})\prec {\boldsymbol y}$.
It means ${\boldsymbol Y}_{{\boldsymbol x}_0}({\boldsymbol c},{\boldsymbol u})\prec {\boldsymbol y} \Rightarrow {\boldsymbol Y}_{{\boldsymbol x}_1}({\boldsymbol c},{\boldsymbol u}) \prec {\boldsymbol y}$ for $\mathbb{P}_{\boldsymbol U}$-almost every ${\boldsymbol u} \in \Omega_{\boldsymbol U}$ and  $\mathbb{P}({\boldsymbol Y}_{{\boldsymbol x}_0}\prec {\boldsymbol y} \preceq {\boldsymbol Y}_{{\boldsymbol x}_1}|{\boldsymbol C}={\boldsymbol c})=0$.
\end{enumerate}
Then, Assumption \ref{AS2} holds.
In conclusion, Assumption \ref{AS2} implies Assumption \ref{MONO2}.

\end{proof}

{\bf Theorem \ref{THEO41}.}(Identification of conditional PoC)
{\it
Under SCM ${\cal M}_{T}$ and}  
Assumptions \ref{ASEXO2}, \ref{MONO2} (or \ref{AS2}, \ref{SAS2}), and \ref{SUP2}, 
PNS, PN, and PS are identifiable by
\begin{equation}
    \begin{aligned}
    &\text{PNS}({\boldsymbol y};{\boldsymbol x}_0,{\boldsymbol x}_1,{\boldsymbol c})=\max\{\rho({\boldsymbol y};{\boldsymbol x}_0,{\boldsymbol c})-\rho({\boldsymbol y};{\boldsymbol x}_1,{\boldsymbol c}),0\},\\
    &\text{PN}({\boldsymbol y};{\boldsymbol x}_0,{\boldsymbol x}_1,{\boldsymbol c})=\max\left\{\frac{\rho({\boldsymbol y};{\boldsymbol x}_0,{\boldsymbol c})-\rho({\boldsymbol y};{\boldsymbol x}_1,{\boldsymbol c})}{1-\rho({\boldsymbol y};{\boldsymbol x}_1,{\boldsymbol c})},0\right\},\\
    &\text{PS}({\boldsymbol y};{\boldsymbol x}_0,{\boldsymbol x}_1,{\boldsymbol c})=\max\left\{\frac{\rho({\boldsymbol y};{\boldsymbol x}_0,{\boldsymbol c})-\rho({\boldsymbol y};{\boldsymbol x}_1,{\boldsymbol c})}{\rho({\boldsymbol y};{\boldsymbol x}_0,{\boldsymbol c})},0\right\}
    \end{aligned}
\end{equation}
for any ${\boldsymbol x}_0,{\boldsymbol x}_1 \in \Omega_{\boldsymbol X}$, ${\boldsymbol c} \in \Omega_{\boldsymbol C}$, and ${\boldsymbol y} \in \Omega_{\boldsymbol Y}$ such that $\rho({\boldsymbol y};{\boldsymbol x}_1,{\boldsymbol c})<1$ and $\rho({\boldsymbol y};{\boldsymbol x}_0,{\boldsymbol c})>0$.
}

\begin{proof}
Under Assumptions \ref{ASEXO2} and \ref{MONO2} (or \ref{AS2}), 
\begin{equation}
\begin{aligned}
\text{PNS}({\boldsymbol y};{\boldsymbol x}_0,{\boldsymbol x}_1,{\boldsymbol c})
&=\mathbb{P}({\boldsymbol Y}_{{\boldsymbol x}_0}\prec {\boldsymbol y} \preceq {\boldsymbol Y}_{{\boldsymbol x}_1}|{\boldsymbol C}={\boldsymbol c})\\
&=\mathbb{P}({\boldsymbol u}_{\rho({\boldsymbol y};{\boldsymbol x}_0,{\boldsymbol c})}\preceq {\boldsymbol u} \prec {\boldsymbol u}_{\rho({\boldsymbol y};{\boldsymbol x}_1,{\boldsymbol c})})\\
&=\max\{\rho({\boldsymbol y};{\boldsymbol x}_0,{\boldsymbol c})-\rho({\boldsymbol y};{\boldsymbol x}_1,{\boldsymbol c}),0\}
    \end{aligned}
\end{equation}
for any ${\boldsymbol x}_0,{\boldsymbol x}_1 \in \Omega_{\boldsymbol X}$, ${\boldsymbol c} \in \Omega_{\boldsymbol C}$ and ${\boldsymbol y} \in \Omega_{\boldsymbol Y}$,
where ${\boldsymbol u}_{\rho({\boldsymbol y};{\boldsymbol x}_0,{\boldsymbol c})}=\sup\{{\boldsymbol u}:f_{\boldsymbol Y}({\boldsymbol x}_0,{\boldsymbol c},{\boldsymbol u})\prec {\boldsymbol y}\}$ and ${\boldsymbol u}_{\rho({\boldsymbol y};{\boldsymbol x}_1,{\boldsymbol c})}=\sup\{{\boldsymbol u}:f_{\boldsymbol Y}({\boldsymbol x}_1,{\boldsymbol c},{\boldsymbol u})\prec {\boldsymbol y}\}$.
%=\inf\{u:y < f_Y(x_1,u)\}$.

\begin{comment}
On the other hands, under Assumptions \ref{ASEXO} and \ref{MONO_A}, 
\begin{equation}
\begin{aligned}
\text{PNS}({\boldsymbol y};{\boldsymbol x}_0,{\boldsymbol x}_1)
&=\mathbb{P}({\boldsymbol Y}_{{\boldsymbol x}_0} \prec {\boldsymbol y} \preceq {\boldsymbol Y}_{{\boldsymbol x}_1})\\
&=\mathbb{P}({\boldsymbol Y}_{{\boldsymbol x}_0} \prec {\boldsymbol y})-\mathbb{P}({\boldsymbol Y}_{{\boldsymbol x}_1} \prec {\boldsymbol y})+\mathbb{P}({\boldsymbol Y}_{{\boldsymbol x}_1} \prec {\boldsymbol y} \preceq {\boldsymbol Y}_{{\boldsymbol x}_0})\\
&=\max\{\rho({\boldsymbol y};{\boldsymbol x}_0)-\rho({\boldsymbol y};{\boldsymbol x}_1),0\}.
    \end{aligned}
\end{equation}
\end{comment}
$\text{PN}({\boldsymbol y};{\boldsymbol x}_0,{\boldsymbol x}_1,{\boldsymbol c})$ and $\text{PS}({\boldsymbol y};{\boldsymbol x}_0,{\boldsymbol x}_1,{\boldsymbol c})$ are given from the following relationship:
\begin{equation}
\begin{aligned}
\text{PN}({\boldsymbol y};{\boldsymbol x}_0,{\boldsymbol x}_1,{\boldsymbol c})=\frac{\text{PNS}({\boldsymbol y};{\boldsymbol x}_0,{\boldsymbol x}_1,{\boldsymbol c})}{\mathbb{P}({\boldsymbol y}\preceq{\boldsymbol Y}|{\boldsymbol X}={\boldsymbol x}_1,{\boldsymbol C}={\boldsymbol c})},\ \  \text{PS}({\boldsymbol y};{\boldsymbol x}_0,{\boldsymbol x}_1,{\boldsymbol c})=\frac{\text{PNS}({\boldsymbol y};{\boldsymbol x}_0,{\boldsymbol x}_1,{\boldsymbol c})}{\mathbb{P}({\boldsymbol Y}\prec {\boldsymbol y} |{\boldsymbol X}={\boldsymbol x}_0,{\boldsymbol C}={\boldsymbol c})},
\end{aligned}
\end{equation}
and $\mathbb{P}({\boldsymbol y}\preceq{\boldsymbol Y}|{\boldsymbol X}={\boldsymbol x}_1,{\boldsymbol C}={\boldsymbol c})=1-\rho({\boldsymbol y};{\boldsymbol x}_1,{\boldsymbol c})$ and $\mathbb{P}({\boldsymbol y}\prec{\boldsymbol Y}|{\boldsymbol X}={\boldsymbol x}_0,{\boldsymbol C}={\boldsymbol c})=\rho({\boldsymbol y};{\boldsymbol x}_0,{\boldsymbol c})$ for any ${\boldsymbol x}_0,{\boldsymbol x}_1 \in \Omega_{\boldsymbol X}$, ${\boldsymbol c} \in \Omega_{\boldsymbol C}$ and ${\boldsymbol y} \in \Omega_{\boldsymbol Y}$.
\end{proof}

\subsection{Proofs in Section \ref{sec5}}
{\bf Theorem \ref{THE51}.} (Identification of conditional  PNS with evidence $({\boldsymbol y}',{\boldsymbol x}',{\boldsymbol c})$)
{\it 
Under SCM ${\cal M}_{T}$ and Assumptions \ref{ASEXO2}, \ref{MONO2} (or \ref{AS2}, \ref{SAS2}), and \ref{SUP2}, we have %the two following statements:

{\bf (A).} If $\rho({\boldsymbol y}';{\boldsymbol x}',{\boldsymbol c})\ne\rho^{o}({\boldsymbol y}';{\boldsymbol x}',{\boldsymbol c})$, then we have
\begin{equation}
    \text{PNS}({\boldsymbol y};{\boldsymbol x}_0,{\boldsymbol x}_1,{\boldsymbol y}',{\boldsymbol x}',{\boldsymbol c})=\max\{{\alpha}/{\beta},0\},
\end{equation}
where 
\begin{equation}
\begin{aligned}
&\alpha=\min\{\rho({\boldsymbol y};{\boldsymbol x}_0,{\boldsymbol c}),\rho^{o}({\boldsymbol y}';{\boldsymbol x}',{\boldsymbol c})\}-\max\{\rho({\boldsymbol y};{\boldsymbol x}_1,{\boldsymbol c}),\rho({\boldsymbol y}';{\boldsymbol x}',{\boldsymbol c})\},\\
%\end{aligned}
%\end{equation}
%\begin{equation}
&\beta=\rho^{o}({\boldsymbol y}';{\boldsymbol x}',{\boldsymbol c})-\rho({\boldsymbol y}';{\boldsymbol x}',{\boldsymbol c})
\end{aligned}
\end{equation}
for any ${\boldsymbol x}_0,{\boldsymbol x}_1, {\boldsymbol x}' \in \Omega_{\boldsymbol X}$, ${\boldsymbol c} \in \Omega_{\boldsymbol C}$, ${\boldsymbol y}' \in \Omega_{\boldsymbol Y}$, and ${\boldsymbol y} \in \Omega_{\boldsymbol Y}$.

{\bf (B).} If $\rho({\boldsymbol y}';{\boldsymbol x}',{\boldsymbol c})=\rho^{o}({\boldsymbol y}';{\boldsymbol x}',{\boldsymbol c})$, then we have
\begin{equation}
\begin{aligned}
&\text{PNS}({\boldsymbol y};{\boldsymbol x}_0,{\boldsymbol x}_1,{\boldsymbol y}',{\boldsymbol x}',{\boldsymbol c})=\mathbb{I}(\rho({\boldsymbol y};{\boldsymbol x}_1,{\boldsymbol c}) \leq 
%\rho^{o}({\boldsymbol y}';{\boldsymbol x}',{\boldsymbol c})=
\rho({\boldsymbol y}';{\boldsymbol x}',{\boldsymbol c})< \rho({\boldsymbol y};{\boldsymbol x}_0,{\boldsymbol c}))
\end{aligned}
\end{equation}
    for any ${\boldsymbol x}_0,{\boldsymbol x}_1, {\boldsymbol x}' \in \Omega_{\boldsymbol X}$, ${\boldsymbol c} \in \Omega_{\boldsymbol C}$, ${\boldsymbol y}' \in \Omega_{\boldsymbol Y}$, and ${\boldsymbol y} \in \Omega_{\boldsymbol Y}$.
}

\begin{proof}
    Under Assumptions \ref{ASEXO2} and \ref{AS2}, if $\rho({\boldsymbol y}';{\boldsymbol x}',{\boldsymbol c})\ne\rho^{o}({\boldsymbol y}';{\boldsymbol x}',{\boldsymbol c})$, we have
    \begin{equation}
\begin{aligned}
        &\mathbb{P}({\boldsymbol Y}_{{\boldsymbol x}_0}\prec {\boldsymbol y} \preceq {\boldsymbol Y}_{{\boldsymbol x}_1}|{\boldsymbol Y}={\boldsymbol y}',{\boldsymbol X}={\boldsymbol x}',{\boldsymbol C}={\boldsymbol c})\\
        &=\frac{\mathbb{P}({\boldsymbol Y}_{{\boldsymbol x}_0}\prec {\boldsymbol y} \preceq {\boldsymbol Y}_{{\boldsymbol x}_1},{\boldsymbol Y}={\boldsymbol y}',{\boldsymbol X}={\boldsymbol x}'|{\boldsymbol C}={\boldsymbol c})}{\mathbb{P}({\boldsymbol Y}={\boldsymbol y}',{\boldsymbol X}={\boldsymbol x}'|{\boldsymbol C}={\boldsymbol c})}\\
         %&=\frac{\mathbb{P}({\boldsymbol Y}_{{\boldsymbol x}_0} \prec {\boldsymbol y} \preceq{\boldsymbol Y}_{{\boldsymbol x}_1},{\boldsymbol Y}_{{\boldsymbol x}'}={\boldsymbol y}'|{\boldsymbol C}={\boldsymbol c})}{\mathbb{P}({\boldsymbol Y}_{{\boldsymbol x}'}={\boldsymbol y}'|{\boldsymbol C}={\boldsymbol c})}\\
        &=\frac{\mathbb{P}({\boldsymbol u}_{\rho({\boldsymbol y};{\boldsymbol x}_1,{\boldsymbol c})} \preceq{\boldsymbol u}\prec {\boldsymbol u}_{\rho({\boldsymbol y};{\boldsymbol x}_0,{\boldsymbol c})} ,{\boldsymbol u}_{\rho({\boldsymbol y}';{\boldsymbol x}',{\boldsymbol c})} \preceq {\boldsymbol u}\prec{\boldsymbol u}_{\rho^{o}({\boldsymbol y}';{\boldsymbol x}',{\boldsymbol c})})}{\mathbb{P}({\boldsymbol u}_{\rho({\boldsymbol y}';{\boldsymbol x}',{\boldsymbol c})} \prec {\boldsymbol u}\preceq{\boldsymbol u}_{\rho^o({\boldsymbol y}';{\boldsymbol x}',{\boldsymbol c})})}\\
        &
        =\frac{\max\{\min\{\rho({\boldsymbol y};{\boldsymbol x}_0,{\boldsymbol c}),\rho^{o}({\boldsymbol y}';{\boldsymbol x}',{\boldsymbol c})\}-\max\{\rho({\boldsymbol y};{\boldsymbol x}_1,{\boldsymbol c}),\rho({\boldsymbol y}';{\boldsymbol x}',{\boldsymbol c})\},0\}}{\rho({\boldsymbol y}';{\boldsymbol x}',{\boldsymbol c})-\rho^{o}({\boldsymbol y}';{\boldsymbol x}',{\boldsymbol c})}\\
        &=\max\left\{\frac{\min\{\rho({\boldsymbol y};{\boldsymbol x}_0,{\boldsymbol c}),\rho^{o}({\boldsymbol y}';{\boldsymbol x}',{\boldsymbol c})\}-\max\{\rho({\boldsymbol y};{\boldsymbol x}_1,{\boldsymbol c}),\rho({\boldsymbol y}';{\boldsymbol x}',{\boldsymbol c})\}}{\rho^{o}({\boldsymbol y}';{\boldsymbol x}',{\boldsymbol c})-\rho({\boldsymbol y}';{\boldsymbol x}',{\boldsymbol c})},0\right\}.
    \end{aligned}
\end{equation}
for any ${\boldsymbol x}_0,{\boldsymbol x}_1, {\boldsymbol x}' \in \Omega_{\boldsymbol X}$, ${\boldsymbol c} \in \Omega_{\boldsymbol C}$, ${\boldsymbol y}' \in \Omega_{\boldsymbol Y}$ and ${\boldsymbol y} \in \Omega_{\boldsymbol Y}$.
This represents the statement (A).
Otherwise, since ${\boldsymbol u}_{\rho({\boldsymbol y}';{\boldsymbol x}',{\boldsymbol c})}={\boldsymbol u}_{\rho^o({\boldsymbol y}';{\boldsymbol x}',{\boldsymbol c})}$, we have
\begin{equation}
    \begin{aligned}
        &\mathbb{P}({\boldsymbol Y}_{{\boldsymbol x}_0}\prec {\boldsymbol y} \preceq {\boldsymbol Y}_{{\boldsymbol x}_1}|{\boldsymbol Y}={\boldsymbol y}',{\boldsymbol X}={\boldsymbol x}',{\boldsymbol C}={\boldsymbol c})\\
        &={\mathbb{P}({\boldsymbol u}_{\rho({\boldsymbol y};{\boldsymbol x}_1,{\boldsymbol c})} \preceq{\boldsymbol u}\prec {\boldsymbol u}_{\rho({\boldsymbol y};{\boldsymbol x}_0,{\boldsymbol c})}| {\boldsymbol u}={\boldsymbol u}_{\rho({\boldsymbol y}';{\boldsymbol x}',{\boldsymbol c})})}\\
        &=\mathbb{I}({\boldsymbol u}_{\rho({\boldsymbol y};{\boldsymbol x}_1,{\boldsymbol c})} \preceq{\boldsymbol u}_{\rho({\boldsymbol y}';{\boldsymbol x}',{\boldsymbol c})}\prec {\boldsymbol u}_{\rho({\boldsymbol y};{\boldsymbol x}_0,{\boldsymbol c})})\\
        &=\mathbb{I}(\rho({\boldsymbol y};{\boldsymbol x}_1,{\boldsymbol c}) \leq 
%\rho^{o}({\boldsymbol y}';{\boldsymbol x}',{\boldsymbol c})=
\rho({\boldsymbol y}';{\boldsymbol x}',{\boldsymbol c})< \rho({\boldsymbol y};{\boldsymbol x}_0,{\boldsymbol c}))
    \end{aligned}
\end{equation}
for any ${\boldsymbol x}_0,{\boldsymbol x}_1, {\boldsymbol x}' \in \Omega_{\boldsymbol X}$, ${\boldsymbol c} \in \Omega_{\boldsymbol C}$, ${\boldsymbol y}' \in \Omega_{\boldsymbol Y}$ and ${\boldsymbol y} \in \Omega_{\boldsymbol Y}$.
This represents the statement (B).
\end{proof}

{\bf Theorem \ref{THE52}.} (Identification of conditional PNS with multi-hypothetical terms)
{\it 
Under SCM ${\cal M}_{T}$ and Assumptions \ref{ASEXO2}, \ref{MONO2} (or \ref{AS2}, \ref{SAS2}), and \ref{SUP2}, 
%PoC with multi-hypothetical terms
$\text{PNS}(\overline{{\boldsymbol y}};\overline{{\boldsymbol x}},{\boldsymbol c})$ is identifiable by
\begin{equation}
\begin{aligned}
\text{PNS}(\overline{{\boldsymbol y}};\overline{{\boldsymbol x}},{\boldsymbol c})=\max\Big\{\min_{p=1,\ldots,P}\{\rho({\boldsymbol y}_{p};{\boldsymbol x}_{p-1},{\boldsymbol c})\}-\max_{p=1,\ldots,P}\{\rho({\boldsymbol y}_{p};{\boldsymbol x}_p,{\boldsymbol c})\},0\Big\}
\end{aligned}
\end{equation}
for any $\overline{{\boldsymbol x}}=({\boldsymbol x}_0,{\boldsymbol x}_1,\ldots,{\boldsymbol x}_P) \in \Omega_{\boldsymbol X}^{P+1}$, $\overline{{\boldsymbol y}}=({\boldsymbol y}_1,\ldots,{\boldsymbol y}_P)\in \Omega_{\boldsymbol Y}^P$, and ${\boldsymbol c} \in \Omega_{\boldsymbol C}$.}

\begin{proof}
    Under Assumptions \ref{ASEXO2} and \ref{AS2}, we have 
    \begin{equation}
    \begin{aligned}
        &\text{PNS}(\overline{{\boldsymbol y}};\overline{{\boldsymbol x}},{\boldsymbol c})\\
        &=\mathbb{P}({\boldsymbol Y}_{{\boldsymbol x}_0} \prec {\boldsymbol y}_1 \preceq{\boldsymbol Y}_{{\boldsymbol x}_1}, {\boldsymbol Y}_{{\boldsymbol x}_1}\prec{\boldsymbol y}_2 \preceq {\boldsymbol Y}_{{\boldsymbol x}_2},\ldots, {\boldsymbol Y}_{{\boldsymbol x}_{P-1}} \prec {\boldsymbol y}_P\preceq{\boldsymbol Y}_{{\boldsymbol x}_P}|{\boldsymbol C}={\boldsymbol c})\\
        &=\mathbb{P}({\boldsymbol u}_{\rho({\boldsymbol y}_{1};{\boldsymbol x}_0,{\boldsymbol c})} \preceq {\boldsymbol u}\prec{\boldsymbol u}_{\rho({\boldsymbol y}_{1};{\boldsymbol x}_1,{\boldsymbol c})},{\boldsymbol u}_{\rho({\boldsymbol y}_{2};{\boldsymbol x}_1,{\boldsymbol c})} \preceq {\boldsymbol u}\prec{\boldsymbol u}_{\rho({\boldsymbol y}_{2};{\boldsymbol x}_2,{\boldsymbol c})},\ldots,{\boldsymbol u}_{\rho({\boldsymbol y}_{P};{\boldsymbol x}_{P},{\boldsymbol c})} \preceq{\boldsymbol u}\prec{\boldsymbol u}_{\rho({\boldsymbol y}_{P};{\boldsymbol x}_{P-1},{\boldsymbol c})})\\
        &=\mathbb{P}({\boldsymbol u}_{\max_p\{\rho({\boldsymbol y}_{P};{\boldsymbol x}_{P},{\boldsymbol c})\}} \prec{\boldsymbol u}\preceq{\boldsymbol u}_{\min_p\{\rho({\boldsymbol y}_{P};{\boldsymbol x}_{P-1},{\boldsymbol c})\}})\\
        &=\max\left\{\min_p\{\rho({\boldsymbol y}_{P};{\boldsymbol x}_{P-1},{\boldsymbol c})\}-\max_p\{\rho({\boldsymbol y}_{P};{\boldsymbol x}_{P},{\boldsymbol c})\},0\right\}
        \end{aligned}
        \end{equation}
for any $\overline{{\boldsymbol x}}=({\boldsymbol x}_0,{\boldsymbol x}_1,\ldots,{\boldsymbol x}_P) \in \Omega_{\boldsymbol X}^{P+1}$, $\overline{{\boldsymbol y}}=({\boldsymbol y}_1,\ldots,{\boldsymbol y}_P)\in \Omega_{\boldsymbol Y}^P$ and ${\boldsymbol c} \in \Omega_{\boldsymbol C}$.
\end{proof}

{\bf Theorem \ref{THE53}.} (Identification of conditional PNS with multi-hypothetical terms and evidence $({\boldsymbol y}',{\boldsymbol x}',{\boldsymbol c})$)
{\it
Under SCM ${\cal M}_{T}$ and Assumptions \ref{ASEXO2}, \ref{MONO2} (or \ref{AS2}, \ref{SAS2}), and \ref{SUP2}, we have %two following statements:

{\bf (A).} If $\rho({\boldsymbol y}';{\boldsymbol x}',{\boldsymbol c})\ne\rho^{o}({\boldsymbol y}';{\boldsymbol x}',{\boldsymbol c})$, then we have
 %($\rho^{o}({\boldsymbol y}';{\boldsymbol x}',{\boldsymbol c}) > \rho({\boldsymbol y}';{\boldsymbol x}',{\boldsymbol c})$),
    \begin{equation}
%\begin{aligned} 
%\label{CPNS3}
        \text{PNS}(\overline{{\boldsymbol y}};\overline{{\boldsymbol x}},{\boldsymbol y}',{\boldsymbol x}',{\boldsymbol c})=\max\left\{{\gamma}/{\delta},0\right\},
 %   \end{aligned}
\end{equation}
where 
\begin{equation}
\begin{aligned}
&\gamma=\min\Big\{\min_{p=1,\ldots,P}\{\rho({\boldsymbol y}_{p};{\boldsymbol x}_{p-1},{\boldsymbol c})\},\rho^{o}({\boldsymbol y}';{\boldsymbol x}',{\boldsymbol c}) \Big\}- \max\Big\{\max_{p=1,\ldots,P}\{\rho({\boldsymbol y}_{p};{\boldsymbol x}_p,{\boldsymbol c})\},\rho({\boldsymbol y}';{\boldsymbol x}',{\boldsymbol c})\Big\},\\
%\end{aligned}
%\end{equation}
%\begin{equation}
    &\delta=\rho^{o}({\boldsymbol y}';{\boldsymbol x}',{\boldsymbol c})-\rho({\boldsymbol y}';{\boldsymbol x}',{\boldsymbol c})
    \end{aligned}
\end{equation} for any ${\boldsymbol x}' \in \Omega_{\boldsymbol X}$, ${\boldsymbol y}' \in \Omega_{\boldsymbol Y}$, $\overline{{\boldsymbol x}}=({\boldsymbol x}_0,{\boldsymbol x}_1,\ldots,{\boldsymbol x}_P) \in \Omega_{\boldsymbol X}^{P+1}$, $\overline{{\boldsymbol y}}=({\boldsymbol y}_1,\ldots,{\boldsymbol y}_P)\in \Omega_{\boldsymbol Y}^P$, and ${\boldsymbol c} \in \Omega_{\boldsymbol C}$.

{\bf (B).} If $\rho({\boldsymbol y}';{\boldsymbol x}',{\boldsymbol c})=\rho^{o}({\boldsymbol y}';{\boldsymbol x}',{\boldsymbol c})$, then we have
%we have
\begin{equation}
\begin{aligned}
\text{PNS}(\overline{{\boldsymbol y}};\overline{{\boldsymbol x}},{\boldsymbol y}',{\boldsymbol x}',{\boldsymbol c})=\mathbb{I}\Big(\max_{p=1,\ldots,P}\{\rho({\boldsymbol y}_{p};{\boldsymbol x}_p,{\boldsymbol c})\} \leq
%\rho^{o}({\boldsymbol y}';{\boldsymbol x}',{\boldsymbol c})=
\rho({\boldsymbol y}';{\boldsymbol x}',{\boldsymbol c})< \min_{p=1,\ldots,P}\{\rho({\boldsymbol y}_{p};{\boldsymbol x}_{p-1},{\boldsymbol c})\}\Big)
\end{aligned}
\end{equation}
 for any ${\boldsymbol x}' \in \Omega_{\boldsymbol X}$, ${\boldsymbol y}' \in \Omega_{\boldsymbol Y}$, $\overline{{\boldsymbol x}}=({\boldsymbol x}_0,{\boldsymbol x}_1,\ldots,{\boldsymbol x}_P) \in \Omega_{\boldsymbol X}^{P+1}$, $\overline{{\boldsymbol y}}=({\boldsymbol y}_1,\ldots,{\boldsymbol y}_P)\in \Omega_{\boldsymbol Y}^P$, and ${\boldsymbol c} \in \Omega_{\boldsymbol C}$.
 }

\begin{proof}
Under Assumptions \ref{ASEXO2} and \ref{AS2}, 
    if 
    %$\mathbb{P}({\boldsymbol Y}={\boldsymbol y}'|{\boldsymbol X}={\boldsymbol x}', {\boldsymbol C}={\boldsymbol c})\ne 0 \Leftrightarrow \mathbb{P}({\boldsymbol Y}_{{\boldsymbol x}'}={\boldsymbol y}'| {\boldsymbol C}={\boldsymbol c})\ne 0$
    $\rho({\boldsymbol y}';{\boldsymbol x}',{\boldsymbol c})\ne\rho^{o}({\boldsymbol y}';{\boldsymbol x}',{\boldsymbol c})$, Eq. (\ref{CPNS3}) holds since we have
    \begin{equation}
    \begin{aligned}  
    &\text{PNS}(\overline{{\boldsymbol y}};\overline{{\boldsymbol x}},{\boldsymbol y}',{\boldsymbol x}',{\boldsymbol c})\\
    &=\mathbb{P}({\boldsymbol Y}_{{\boldsymbol x}_0} \prec {\boldsymbol y}_1 \preceq{\boldsymbol Y}_{{\boldsymbol x}_1}, {\boldsymbol Y}_{{\boldsymbol x}_1}\prec{\boldsymbol y}_2 \preceq {\boldsymbol Y}_{{\boldsymbol x}_2},\ldots, {\boldsymbol Y}_{{\boldsymbol x}_{P-1}} \prec {\boldsymbol y}_P\preceq{\boldsymbol Y}_{{\boldsymbol x}_P}|{\boldsymbol Y}={\boldsymbol y}',{\boldsymbol X}={\boldsymbol x}',{\boldsymbol C}={\boldsymbol c})\\
    &=\frac{\mathbb{P}({\boldsymbol Y}_{{\boldsymbol x}_0} \prec {\boldsymbol y}_1 \preceq{\boldsymbol Y}_{{\boldsymbol x}_1}, {\boldsymbol Y}_{{\boldsymbol x}_1}\prec{\boldsymbol y}_2 \preceq {\boldsymbol Y}_{{\boldsymbol x}_2},\ldots, {\boldsymbol Y}_{{\boldsymbol x}_{P-1}} \prec {\boldsymbol y}_P\preceq{\boldsymbol Y}_{{\boldsymbol x}_P},{\boldsymbol Y}={\boldsymbol y}',{\boldsymbol X}={\boldsymbol x}'|{\boldsymbol C}={\boldsymbol c})}{\mathbb{P}({\boldsymbol Y}_{{\boldsymbol x}'}={\boldsymbol y}'|{\boldsymbol C}={\boldsymbol c})}\\
    &=\frac{\mathbb{P}({\boldsymbol u}_{\rho({\boldsymbol y}_{1};{\boldsymbol x}_0,{\boldsymbol c})} \preceq{\boldsymbol u}\prec{\boldsymbol u}_{\rho({\boldsymbol y}_{1};{\boldsymbol x}_1,{\boldsymbol c})},\ldots,{\boldsymbol u}_{\rho({\boldsymbol y}_{P};{\boldsymbol x}_{P-1},{\boldsymbol c})} \preceq{\boldsymbol u}\prec{\boldsymbol u}_{\rho({\boldsymbol y}_{P};{\boldsymbol x}_{P},{\boldsymbol c})},{\boldsymbol u}_{\rho({\boldsymbol y}';{\boldsymbol x}',{\boldsymbol c})} \preceq {\boldsymbol u}\prec{\boldsymbol u}_{\rho^o({\boldsymbol y}';{\boldsymbol x}',{\boldsymbol c})})}{\mathbb{P}({\boldsymbol u}_{\rho^{o}({\boldsymbol y}';{\boldsymbol x}',{\boldsymbol c})} \preceq {\boldsymbol u}\prec{\boldsymbol u}_{\rho({\boldsymbol y}';{\boldsymbol x}',{\boldsymbol c})})}\\
    &=\frac{\max\{\min\{\min_{p=1,\ldots,P}\{\rho({\boldsymbol y}_{p};{\boldsymbol x}_{p-1},{\boldsymbol c})\},\rho^{o}({\boldsymbol y}';{\boldsymbol x}',{\boldsymbol c})\}-\max\{\max_{p=1,\ldots,P}\{\rho({\boldsymbol y}_{p};{\boldsymbol x}_p,{\boldsymbol c})\},\rho({\boldsymbol y}';{\boldsymbol x}',{\boldsymbol c})\},0\}}{\rho^{o}({\boldsymbol y}';{\boldsymbol x}',{\boldsymbol c})-\rho({\boldsymbol y}';{\boldsymbol x}',{\boldsymbol c})}\\
    &=\max\left\{\frac{\min\{\min_{p=1,\ldots,P}\{\rho({\boldsymbol y}_{p};{\boldsymbol x}_{p-1},{\boldsymbol c})\},\rho^{o}({\boldsymbol y}';{\boldsymbol x}',{\boldsymbol c})\}-\max\{\max_{p=1,\ldots,P}\{\rho({\boldsymbol y}_{p};{\boldsymbol x}_p,{\boldsymbol c})\},\rho({\boldsymbol y}';{\boldsymbol x}',{\boldsymbol c})\}}{\rho^{o}({\boldsymbol y}';{\boldsymbol x}',{\boldsymbol c})-\rho({\boldsymbol y}';{\boldsymbol x}',{\boldsymbol c})},0\right\}
    \end{aligned}
\end{equation}
for any ${\boldsymbol x}' \in \Omega_{\boldsymbol X}$, ${\boldsymbol y}' \in \Omega_{\boldsymbol Y}$, $\overline{{\boldsymbol x}}=({\boldsymbol x}_0,{\boldsymbol x}_1,\ldots,{\boldsymbol x}_P) \in \Omega_{\boldsymbol X}^{P+1}$, $\overline{{\boldsymbol y}}=({\boldsymbol y}_1,\ldots,{\boldsymbol y}_P)\in \Omega_{\boldsymbol Y}^P$ and ${\boldsymbol c} \in \Omega_{\boldsymbol C}$.
This represents the statement (A).
Otherwise, since $Y_x({\boldsymbol u}_{\rho({\boldsymbol y}';{\boldsymbol x}',{\boldsymbol c})})=Y_x({\boldsymbol u}_{\rho^o({\boldsymbol y}';{\boldsymbol x}',{\boldsymbol c})})$, we have 
\begin{equation}
    \begin{aligned}
        &\text{PNS}(\overline{{\boldsymbol y}};\overline{{\boldsymbol x}},{\boldsymbol y}',{\boldsymbol x}',{\boldsymbol c})\\
    &=\mathbb{P}({\boldsymbol Y}_{{\boldsymbol x}_0} \prec {\boldsymbol y}_1 \preceq{\boldsymbol Y}_{{\boldsymbol x}_1}, {\boldsymbol Y}_{{\boldsymbol x}_1}\prec{\boldsymbol y}_2 \preceq {\boldsymbol Y}_{{\boldsymbol x}_2},\ldots, {\boldsymbol Y}_{{\boldsymbol x}_{P-1}} \prec {\boldsymbol y}_P\preceq{\boldsymbol Y}_{{\boldsymbol x}_P}|{\boldsymbol Y}={\boldsymbol y}',{\boldsymbol X}={\boldsymbol x}',{\boldsymbol C}={\boldsymbol c})\\
    &=\mathbb{P}({\boldsymbol u}_{\rho({\boldsymbol y}_{1};{\boldsymbol x}_0,{\boldsymbol c})} \preceq{\boldsymbol u}\prec{\boldsymbol u}_{\rho({\boldsymbol y}_{1};{\boldsymbol x}_1,{\boldsymbol c})},\ldots,{\boldsymbol u}_{\rho({\boldsymbol y}_{P};{\boldsymbol x}_{P-1},{\boldsymbol c})} \preceq{\boldsymbol u}\prec{\boldsymbol u}_{\rho({\boldsymbol y}_{P};{\boldsymbol x}_{P},{\boldsymbol c})}|{\boldsymbol u}={\boldsymbol u}_{\rho({\boldsymbol y}';{\boldsymbol x}',{\boldsymbol c})})\\
    &=\mathbb{I}({\boldsymbol u}_{\rho({\boldsymbol y}_{1};{\boldsymbol x}_0,{\boldsymbol c})} \preceq{\boldsymbol u}_{\rho({\boldsymbol y}';{\boldsymbol x}',{\boldsymbol c})}\prec{\boldsymbol u}_{\rho({\boldsymbol y}_{1};{\boldsymbol x}_1,{\boldsymbol c})},\ldots,{\boldsymbol u}_{\rho({\boldsymbol y}_{P};{\boldsymbol x}_{P-1},{\boldsymbol c})} \preceq{\boldsymbol u}_{\rho({\boldsymbol y}';{\boldsymbol x}',{\boldsymbol c})}\prec{\boldsymbol u}_{\rho({\boldsymbol y}_{P};{\boldsymbol x}_{P},{\boldsymbol c})})\\
    &=\mathbb{I}({\boldsymbol u}_{\max_{p=1,\ldots,P}\{\rho({\boldsymbol y}_{p};{\boldsymbol x}_p,{\boldsymbol c})\}} \preceq{\boldsymbol u}_{\rho({\boldsymbol y}';{\boldsymbol x}',{\boldsymbol c})}\prec{\boldsymbol u}_{\min_{p=1,\ldots,P}\{\rho({\boldsymbol y}_{p};{\boldsymbol x}_{p-1},{\boldsymbol c})\}})\\
    &=\mathbb{I}\Big(\max_{p=1,\ldots,P}\{\rho({\boldsymbol y}_{p};{\boldsymbol x}_p,{\boldsymbol c})\} \leq
%\rho^{o}({\boldsymbol y}';{\boldsymbol x}',{\boldsymbol c})=
\rho({\boldsymbol y}';{\boldsymbol x}',{\boldsymbol c})< \min_{p=1,\ldots,P}\{\rho({\boldsymbol y}_{p};{\boldsymbol x}_{p-1},{\boldsymbol c})\}\Big)
    \end{aligned}
\end{equation}
for any $\overline{{\boldsymbol x}}=({\boldsymbol x}_0,{\boldsymbol x}_1,\ldots,{\boldsymbol x}_P) \in \Omega_{\boldsymbol X}^{P+1}$, $\overline{{\boldsymbol y}}=({\boldsymbol y}_1,\ldots,{\boldsymbol y}_P)\in \Omega_{\boldsymbol Y}^P$, ${\boldsymbol x}' \in \Omega_{\boldsymbol X}$, ${\boldsymbol c} \in \Omega_{\boldsymbol C}$, ${\boldsymbol y}' \in \Omega_{\boldsymbol Y}$ and ${\boldsymbol y} \in \Omega_{\boldsymbol Y}$.
This represents the statement (B).
\end{proof}

\section{Additional Information on Application}
\label{appB}

%\subsection{Additional Information on Application}

In this section, we provide additional information on the application.

\subsection{Details of Dataset}
First, we explain all variables in the application. 
We pick up the following variables as {\bf outcomes}.
\begin{enumerate}
%these grades are related with the course subject, Math or Portuguese:
\item G1 - first period grade (numeric: from 0 to 20)
\item G2 - second period grade (numeric: from 0 to 20)
\item G3 - final grade (numeric: from 0 to 20, output target)
\end{enumerate}
We pick up the following variables as {\bf treatments}.
\begin{enumerate}
    \item studytime - weekly study time (numeric: 1 - < 2 hours, 2 - 2 to 5 hours, 3 - 5 to 10 hours, or 4 - >10 hours)
    \item paid - extra paid classes within the course subject (Math or Portuguese) (binary: yes or no)
\end{enumerate}
We show the other variables as potential {\bf covariates}.
\begin{enumerate}
\item school - student's school (binary: 'GP' - Gabriel Pereira or 'MS' - Mousinho da Silveira)
\item sex - student's sex (binary: 'F' - female or 'M' - male)
\item age - student's age (numeric: from 15 to 22)
\item address - student's home address type (binary: 'U' - urban or 'R' - rural)
\item famsize - family size (binary: 'LE3' - less or equal to 3 or 'GT3' - greater than 3)
\item Pstatus - parent's cohabitation status (binary: 'T' - living together or 'A' - apart)
\item Medu - mother's education (numeric: 0 - none,  1 - primary education (4th grade), 2 “ 5th to 9th grade, 3 “ secondary education or 4 “ higher education)
\item Fedu - father's education (numeric: 0 - none,  1 - primary education (4th grade), 2 “ 5th to 9th grade, 3 “ secondary education or 4 “ higher education)
\item Mjob - mother's job (nominal: 'teacher', 'health' care related, civil 'services' (e.g. administrative or police), 'at home' or 'other')
\item Fjob - father's job (nominal: 'teacher', 'health' care related, civil 'services' (e.g. administrative or police), 'at home' or 'other')
\item reason - reason to choose this school (nominal: close to 'home', school 'reputation', 'course' preference or 'other')
\item guardian - student's guardian (nominal: 'mother', 'father' or 'other')
\item traveltime - home to school travel time (numeric: 1 - <15 min., 2 - 15 to 30 min., 3 - 30 min. to 1 hour, or 4 - >1 hour)
\item failures - number of past class failures (numeric: n if 1<=n<3, else 4)
\item schoolsup - extra educational support (binary: yes or no)
\item famsup - family educational support (binary: yes or no)
\item activities - extra-curricular activities (binary: yes or no)
\item nursery - attended nursery school (binary: yes or no)
\item higher - wants to take higher education (binary: yes or no)
\item internet - Internet access at home (binary: yes or no)
\item romantic - with a romantic relationship (binary: yes or no)
\item famrel - quality of family relationships (numeric: from 1 - very bad to 5 - excellent)
\item freetime - free time after school (numeric: from 1 - very low to 5 - very high)
\item goout - going out with friends (numeric: from 1 - very low to 5 - very high)
\item Dalc - workday alcohol consumption (numeric: from 1 - very low to 5 - very high)
\item Walc - weekend alcohol consumption (numeric: from 1 - very low to 5 - very high)
\item health - current health status (numeric: from 1 - very bad to 5 - very good)
\item absences - number of school absences (numeric: from 0 to 93)
\end{enumerate}
We show the attributes of ID number $1$ in Table \ref{Atab1}.

\begin{table}[tb]
\renewcommand{\arraystretch}{1.1}
\centering
\caption{Attributes of the ID number $1$ subject.}
\label{Atab1}
\begin{tabular}{lllllllllll}
\hline \hline
school & sex & age & address & famsize & Pstatus & Medu & Fedu & Mjob     & Fjob    & reason \\
\hline
GP     & F   & 18  & U       & GT3     & A       & 4    & 4    & at\_home & teacher & course\\
\hline
\hline
guardian & traveltime & studytime & failures & schoolsup & famsup & paid & activities & nursery & higher & internet \\
\hline
mother   & 2          & 2         & 0        & yes       & no     & no   & no         & yes     & yes    & no      \\
\hline
\hline
romantic & famrel & freetime & goout & Dalc & Walc & health & absences & G1 & G2 & G3 \\
\hline
no       & 4      & 3        & 4     & 1    & 1    & 3      & 6        & 5  & 6  & 6 \\
\hline
\end{tabular}
\end{table}

\subsection{Additional Analyses of Application}
We give three additional analyses of the four applications in the body of the paper.

{\bf Effect of study time only.}
First, we evaluate conditional PNS, PN, and PS, letting ${\boldsymbol y}=(6,6,6)$, ${\boldsymbol x}_0=(2,1)$, ${\boldsymbol x}_1=(4,1)$, and ${\boldsymbol c}_1$ in Def. \ref{def41}.
The estimated values of conditional PNS, PN, and PS are 
\begin{equation}
    \begin{aligned}
       &\text{PNS:} &2.491 \% &(\text{CI}: [0.000\%,7.395\%]),\\
       &\text{PN:} &2.709 \% &(\text{CI}: [0.000\%,8.060\%]),\\
      &\text{PS:}  &25.864 \% &(\text{CI}: [0.000\%,73.544\%]),
    \end{aligned}
\end{equation}
respectively.
Second, we evaluate conditional PNS with evidence $({\boldsymbol y}',{\boldsymbol x}',{\boldsymbol c})$, letting ${\boldsymbol y}=(6,6,6)$, ${\boldsymbol y}'=(6,6,5)$, ${\boldsymbol x}_0=(2,1)$,
${\boldsymbol x}_1=(4,1)$, 
${\boldsymbol x}'=(2,1)$, and ${\boldsymbol c}_1$ in Def. \ref{EV1}.
Then, the estimated value of it is 
\begin{equation}
     \text{PNS:}\ \  0.000 \%\ \ \ \  (\text{CI}: [0.000\%,0.000\%]).
\end{equation}
Third, we evaluate conditional PNS with multi-hypothetical terms, letting ${\boldsymbol y}_1=(5,5,5)$, ${\boldsymbol y}_2=(6,6,6)$, ${\boldsymbol x}_0=(1,1)$, ${\boldsymbol x}_1=(2,1)$, ${\boldsymbol x}_2=(4,1)$, and ${\boldsymbol c}_1$ in Def. \ref{EV2}.
The estimated value of it is 
\begin{equation}
     \text{PNS:}\ \  0.000 \%\ \ \ \  (\text{CI}: [0.000\%,0.000\%]).
\end{equation}
Finally, we evaluate conditional PNS with multi-hypothetical terms and evidence $({\boldsymbol y}',{\boldsymbol x}',{\boldsymbol c})$, letting ${\boldsymbol y}_1=(5,5,5)$, ${\boldsymbol y}_2=(6,6,6)$, ${\boldsymbol y}'=(6,6,5)$, ${\boldsymbol x}_0=(1,1)$, ${\boldsymbol x}_1=(2,1)$,
${\boldsymbol x}_2=(4,1)$, 
${\boldsymbol x}'=(2,1)$, and ${\boldsymbol c}_1$ in Def. \ref{EV3} .
We eliminate the results of NA, then the estimated value of it is 
\begin{equation}
     \text{PNS:}\ \  42.489 \%\ \ \ \  (\text{CI}: [0.000\%,100.000\%]).
\end{equation}

{\bf Effect of extra paid classes only.}
First, we evaluate conditional PNS, PN, and PS, letting ${\boldsymbol y}=(6,6,6)$, ${\boldsymbol x}_0=(1,1)$, ${\boldsymbol x}_1=(2,2)$, and ${\boldsymbol c}_1$ in Def. \ref{def41}.
The estimated values of conditional PNS, PN, and PS are 
\begin{equation}
    \begin{aligned}
       &\text{PNS:} &7.700 \% &(\text{CI}: [1.072\%,16.614\%]),\\
       &\text{PN:} &8.132\% &(\text{CI}: [1.090\%,18.139\%]),\\
      &\text{PS:}  &65.398 \% &(\text{CI}: [37.015\%,89.795\%]),
    \end{aligned}
\end{equation}
respectively.
Second, we evaluate conditional PNS with evidence $({\boldsymbol y}',{\boldsymbol x}',{\boldsymbol c})$, letting ${\boldsymbol y}=(6,6,6)$, ${\boldsymbol y}'=(6,6,5)$, ${\boldsymbol x}_0=(1,1)$,
${\boldsymbol x}_1=(2,2)$, 
${\boldsymbol x}'=(2,1)$, and ${\boldsymbol c}_1$ in Def. \ref{EV1}.
Then, the estimated value of it is 
\begin{equation}
     \text{PNS:}\ \  0.009 \%\ \ \ \  (\text{CI}: [0.000\%,0.139\%]).
\end{equation}

\end{document}